\title{Single-Timescale Stochastic Nonconvex-Concave Optimization for Smooth Nonlinear TD Learning}
\date{}
\begin{document}

\author{Shuang Qiu\thanks{University of Michigan. 
Email: \texttt{qiush@umich.edu}.} 
       \qquad
       Zhuoran Yang\thanks{
       Princeton University. 
    Email: \texttt{zy6@princeton.edu}.}
	\qquad
		Xiaohan Wei\thanks{Facebook, Inc.
Email: \texttt{ubimeteor@fb.com}.} 
	\qquad
       Jieping Ye\thanks{University of Michigan. 
    Email: \texttt{jpye@umich.edu}.}
	\qquad    
       Zhaoran Wang\thanks{Northwestern University.
    Email: \texttt{zhaoranwang@gmail.com}.}
       }       

\maketitle

\begin{abstract}
Temporal-Difference (TD) learning with nonlinear smooth function approximation for policy evaluation has achieved great success in modern reinforcement learning. It is shown that such a problem can be reformulated as a \emph{stochastic nonconvex-strongly-concave} optimization problem, which is challenging as naive stochastic gradient descent-ascent algorithm suffers from slow convergence. Existing approaches for this problem are based on two-timescale or double-loop stochastic gradient algorithms, which may also require sampling large-batch data. However, in practice, a \emph{single-timescale} \emph{single-loop} stochastic algorithm is preferred due to its simplicity and also because its step-size is easier to tune. In this paper, we propose two \emph{single-timescale} \emph{single-loop} algorithms which require only \emph{one} data point each step. Our first algorithm implements momentum updates on both primal and dual variables achieving an $O(\varepsilon^{-4})$ sample complexity, which shows the important role of momentum in obtaining a single-timescale algorithm. Our second algorithm improves upon the first one by applying variance reduction on top of momentum, which matches the best known $O(\varepsilon^{-3})$ sample complexity in existing works. Furthermore, our variance-reduction algorithm does not require a large-batch checkpoint. Moreover, our theoretical results for both algorithms are expressed in a \emph{tighter} form of simultaneous primal and dual side convergence.
\end{abstract}

\section{Introduction}

Reinforcement Learning (RL) powered by neural networks has recently achieved state-of-the-art performance on many high-dimensional control and planning tasks. Policy evaluation (PE), which aims at estimating the value function corresponding to a certain policy, is a stepping stone of policy improvements and serves as an essential component of various RL algorithms.  It is therefore crucial to design sample efficient PE algorithms estimating value functions with approximation guarantees.

One of the most prevailing classes of PE methods is the TD learning with function approximation \citep{dann2014policy}, 
whose goal is to minimize the Bellman error by approximating the value function of a policy via smooth functions, e.g. neural networks, of certain learnable parameters. Much like other fields of modern machine learning, despite its recent extensive empirical successes, e.g. \citet{schulman2015high, silver2017mastering}, theoretical understanding of general nonlinear function approximation remains limited.

Most existing TD learning algorithms with theoretical guarantees are restricted to linear function approximations owning to its mathematical conciseness \citep{tsitsiklis1997analysis,sutton2009fast,sutton2009convergent,liu2015finite,du2017stochastic,wang2017finite,dalal2017finite,yu2017convergence,touati2017convergent,bhandari2018finite,srikant2019finite}. However, the linearity assumption is more often than not oversimplified and insufficient to explain the effectiveness of general function approximations in practice.  

This motivates researchers to study a more practical setting where value functions are parametrized by nonlinear and smooth functions. The work \citep{bhatnagar2009convergent} is among the first to propose a general framework for such a setup via minimizing a generalized mean squared projected Bellman error (MSPBE) objective. It can be shown via Fenchel's duality theory that minimizing the generalized MSPBE is equivalent to solving a \emph{nonconvex-strongly-concave} (NCSC) minimax optimization problem, where the primal side of the objective is nonconvex and the dual side is strongly concave \citep{wai2019variance}. Generalizing the SAGA method \citep{defazio2014saga} for nonconvex minimization problem, the recent work \citep{wai2019variance} proposed a variance-reduced stochastic gradient algorithm to solve this NCSC minimax problem. However, since their primary objective is a \emph{finite-sum} of errors from a large batch of data points, the corresponding algorithm can only run \emph{offline}. 

In the current paper, we aim at improving upon the previous algorithm by proposing and analyzing \emph{single-timescale} stochastic algorithms for the \emph{online} setting with streaming data points. Formally, the NCSC minimax optimization under the online setting can be formulated as
\begin{align}
\min_{\theta\in \Theta} \max_{\omega\in \Omega}\{F(\theta, \omega) := \EE_{\xi\sim \cD}[ f(\theta, \omega; \xi)] \}, \label{eq:ncsc_intro}
\end{align}
where $F(\theta, \omega)$ is nonconvex w.r.t. $\theta$ when fixing $\omega$ and strongly concave w.r.t. $\omega$ when fixing $\theta$. Here $\xi$ is a random variable following the data distribution $\cD$. Each individual function $f(\theta, \omega; \xi)$ is a continuously differentiable function. For this problem, two-timescale algorithms refer to the subclass of algorithms updating $\theta$ and $\omega$ with significantly different frequencies or step sizes (see Section \ref{sec:timescale} in the supplementary material for detailed discussion). The work \citep{bhatnagar2009convergent} suggested a two-timescale stochastic algorithm for solving MSPBE with only asymptotic analysis via theory of ordinary differential equation (ODE). To generally solve the problem \eqref{eq:ncsc_intro}, \citet{lin2019gradient} proposed two-timescale algorithms with the faster timescale updated by sampling large batches of data or iterating until converging to small error before updating the slower timescale. \citet{yansharp} proposed a double-loop algorithm, though single-timescale, requiring to reset the parameters for the inner loop after a certain number of iterations. Both \citet{lin2019gradient} and \citet{yansharp} provide a non-asymptotic convergence analysis with $\tilde{O}(\varepsilon^{-4})$ sample complexity attaining $\varepsilon$ convergence error. To further obtain a faster convergence, the recent work \citep{luo2020stochastic} proposed a double-loop algorithm achieving an $O(\varepsilon^{-3})$ sample complexity\footnote{The sample complexity in the aforementioned works are associated with finding the $\varepsilon$-stationary point of the primal side of the problem \eqref{eq:ncsc_intro}, e.g. the $\varepsilon$-stationary point of the function $J(\theta):= \max_{\omega \in \Omega} F(\theta, \omega)$ in \citet{lin2019gradient, luo2020stochastic}. As shown in Section \ref{sec:single_time_theory}, the sample complexity in our results is associated with a tighter convergence metric proposed in this paper to measure the convergence of both primal and dual sides of $F(\theta, \omega)$.} based on the variance reduction technique, which runs certain number of iterates for the dual side before updating the primal side. In addition, the algorithm requires a large batch of data for construct variance reduction checkpoint each round. However, in practice, a \emph{single-timescale} and  \emph{single-loop} stochastic algorithm is preferred due to its simplicity and also because its step-size is easier to tune. Therefore, our paper aims at answering the following two questions: 

\vspace{5pt}

\emph{Is it possible to design a single-timescale and  single-loop stochastic algorithm for the problem \eqref{eq:ncsc_intro}? Moreover, can we further accelerate this algorithm to obtain a lower sample complexity?}

\vspace{5pt}

The main challenge lies in the asymmetry in the primal and dual sides of the objective, which potentially affects the updating rule design on both sides. Our contributions are three folds:

\begin{itemize}[leftmargin=*]
\setlength\itemsep{0em}
    \item First, we develop a \emph{single-timescale} and \emph{single-loop} stochastic gradient algorithm with only \emph{one} data point each step, which implements momentum updates on both primal and dual variables. We prove that this algorithm can achieve $O(\varepsilon^{-4})$ sample complexity. This result provides an insight that the momentum updates can lead to single-timescale algorithms for minimax optimization, shedding light on further developing algorithms in this area.
    
    \item Moreover, our second algorithm improves upon the first one by applying variance reduction on top of momentum. This algorithm achieves an $O(\varepsilon^{-3})$ sample complexity matching the best known result. Meanwhile, this accelerated algorithm still remains single-timescale and single-loop, and does \emph{not} require a large-batch checkpoint each round. It also extends the recent proposed momentum variance-reduction algorithm in \citet{cutkosky2019momentum} from nonconvex minimization to minimax optimization.

    \item Third, the convergence of our algorithms is expressed in a tighter form of simultaneous primal and dual-side convergence, which extends existing convergence metrics under the online setting with only showing the convergence of the primal iterates. Moreover, our paper studies a general setting where both primal and dual-side feasible sets, i.e., $\Theta$ and $\Omega$, can be any convex closed set without requiring $\Theta$ being compact and $\Omega = \RR^n$ as in the nonlinear TD learning problem .
\end{itemize}

The detailed comparison of our results in this paper to existing works is listed in Table \ref{tab:comparison}.

\vspace{5pt}
\noindent\textbf{Related Work.} There have been a large number of existing works focusing on the linear function approximation for the PE task \citep{tsitsiklis1997analysis,sutton2009fast,sutton2009convergent,liu2015finite,du2017stochastic,wang2017finite,dalal2017finite,yu2017convergence,touati2017convergent,bhandari2018finite,doan2019finite,wang2019multistep,srikant2019finite,xu2020reanalysis}. This line of works can enjoy a benign property of the linear function approximation that the objectives of the associated minimax optimization are convex-strongly-concave or even strongly-convex-strongly-concave \citep{liu2015finite,du2017stochastic}. This property can potentially result in fast convergence or even linear convergence rate for the offline setting, e.g. \citet{du2017stochastic}. On the other hand, TD learning with nonlinear smooth function approximation is studied in several works. \citet{bhatnagar2009convergent} proposed several two-timescale stochastic algorithms with asymptotic convergence analysis by ODE. \citet{chung2018two} studied the nonlinear approximation with neural network proposing two-timescale algorithm also with asymptotic convergence analysis. Recently, \citet{wai2019variance} proposed a variance-reduced algorithm to solve the finite-sum objective in the offline setting with non-asymptotic convergence analysis.

From the perspective of optimization theory, there have been several works studying the NCSC minimax problem. The works \citep{nouiehed2019solving,thekumparampil2019efficient,lu2019hybrid,lin2019gradient} studied the deterministic algorithms for the NCSC minimax problem. \citet{lin2019gradient} further studied the sample complexity of two-timescale stochastic algorithms where the faster timescale (dual side) is performed by sampling a large batch of data points or iterating to a tiny convergence error before updating the primal variable. \citet{rafique2018non} proposed double-loop proximally guided stochastic subgradient methods for solving a class of nonconvex-concave minimax optimization problem with special structures. \citet{yansharp} presented a single-timescale and double-loop algorithm for solving NCSC minimax optimization, where the inner loop is restarted to reset parameters after a certain number of iterations. There is another line of works \citep{xu2019finite,cai2019neural} investigating PE with non-smooth approximation by ReLU networks, which is not the same setting as in this paper.

\vspace{5pt}
\noindent\textbf{Notation.}  For column vectors $x$ and $y$, we denote their concatenation $[x^\top, y^\top]^\top$ by $(x,y)$. For a function $g(x,y)$, we let $g(x, \cdot)$ denote the function w.r.t. the second argument and let $g(\cdot, y)$ denote the function w.r.t. the first argument. We use $\nabla_x g(x,y)$ and $\nabla_y g(x,y)$ to denote its gradients w.r.t. $x$ and $y$ respectively and further let $\nabla g(x,y)$ denote their concatenation $\big(\nabla_x g(x,y), \nabla_y g(x,y)\big)$.  We let $\|\cdot\|$ denote the $\ell_2$ norm for vectors. The projection to a set $\mathcal{C}$ is defined as $\mathcal{P}_{\mathcal{C}}(x) = \argmin_{\tilde{x}\in \mathcal{C}} \|\tilde{x}-x\|^2$. 

\begin{table}[t] \renewcommand{\arraystretch}{1.3}
\caption{Comparison with results from existing works for solving NCSC minimax problem. The column `Variance Reduction' exhibits whether an algorithm employs variance reduction techniques. In the `Convergence Metric' column, `Primal' indicates the sample complexity is evaluated based on the convergence of the primal variable while `Primal \& Dual' means the sample complexity is measured in terms of the convergence of both primal and dual variables. (See Section \ref{sec:single_time_theory} for detailed discussions.) The column `Large Batch' shows whether an algorithm requires sampling a large batch of data points each round. }
\vspace{0.2cm}
\centering
\begin{tabular}{ | >{\centering\arraybackslash}m{3cm} | >{\centering\arraybackslash}m{2.0cm} | >{\centering\arraybackslash}m{1.8cm} | >{\centering\arraybackslash}m{2.5cm} | >{\centering\arraybackslash}m{3.0cm} | >{\centering\arraybackslash}m{1.5cm} | } 
 \hline
 Paper & Sample Complexity & Variance Reduction &  Convergence Metric & Single-Timescale and Single-Loop & Large Batch \\ \hline
\citet{lin2019gradient}   & $O(\varepsilon^{-4})$ & No & Primal & No  &  Yes \\ \hline
 \citet{yansharp} & $\tilde{O}(\varepsilon^{-4})$ & No & Primal & No  &  No\\ \hline
 \citet{luo2020stochastic} & $O(\varepsilon^{-3})$ & Yes & Primal & No  &  Yes \\  \hline
\multirow{2}{*}{  This work } & $O(\varepsilon^{-4})$ & No & \multirow{2}{*}{Primal \& Dual }  & \multirow{2}{*}{  Yes } &  \multirow{2}{*}{No} \\
& $O(\varepsilon^{-3})$ & Yes &  &   & \\ \hline 
\end{tabular}
\label{tab:comparison}
\end{table}
\setlength{\textfloatsep}{0.5cm}

\section{Policy Evaluation with Smooth Function Approximation}
\vspace{-0.12cm}

Consider a Markov Decision Process (MDP) described by $(\mathcal{S}, \mathcal{A}, \mathcal{P}_{s,s'}^a, \mathcal{R}, \rho)$. We denote by $\cS$ the set of states, and denote by $\cA$ the set of actions. Let $\mathcal{P}_{s,s'}^a$ denote the transition probability from state $s\in\cS$ to state $s'\in\cS$ with action $a\in\cA$. Note that $\cS$ and $\cA$ can be infinite such that $\mathcal{P}_{s,s'}^a$ becomes a Markov kernel. Let $r(s,a,s')$ be an immediate reward once an agent takes action $a$ at state $s$ and transits to state $s'$. The reward function $\cR(s, a)$ is then defined as $\cR(s, a) := \EE_{s'\sim \mathcal{P}_{s,\cdot}^a} [r(s,a,s')]$. The discount factor is denoted by $\rho \in [0, 1)$. Let $\pi(a\given s)$ be the policy which is the probability of taking action $a$ given current state $s$. Then, we have the state value function defined as $V^\pi(s):=\EE[\sum_{t=0}^{\infty} \rho^t \cR(s_t,a_t)\given s_0 = s, \pi]$. Further letting $R^\pi(s) := \EE_{a\sim \pi(\cdot|s)}[\cR(s,a)]$ and $P^\pi (s,s') := \EE_{a\sim\pi(\cdot\given s)} [\cP^a_{s,s'}]$, we define the Bellman operator as $T^\pi V(s)  := R^\pi(s) + \rho \cdot \EE_{s'\sim P^\pi(s,\cdot)}[ V(s')]$. Then, $V^\pi$ satisfies the Bellman equation in the form of $
V^\pi(s) = T^\pi V^\pi(s), \forall s \in \cS$.
 
We consider a TD learning problem which focuses on solving the Bellman equation for $V^\pi$. One typical approach to solve Bellman equation is to approximate $V^\pi$ by a parameterized function $V_\theta$ with parameters $\theta\in\Theta \subset \RR^d$. The feasible set $\Theta$ can be a compact set to guarantee the learning parameter $\theta$ not drifting far from its initialization. In the next section, we show that this assumption is only a special case of our general theory. Letting $V_\theta$ be a smooth nonlinear function \citep{bhatnagar2009convergent}, according to \citet{liu2015finite,wai2019variance}, we can solve Bellman equation by minimizing a generalized MSPBE 
\vspace{-0.1cm}
\begin{align}
\textsc{Mspbe}(\theta) = \frac{1}{2} \big\|\EE_{s\sim d^\pi(\cdot)}\{[T^\pi V_\theta(s)-V_\theta(s)] \nabla_\theta V_\theta(s) ]^\top\}\big\|^2_{K_\theta^{-1}},\label{eq:mspbe_ori} 
\end{align}
where $K_\theta = \EE_s[  \nabla_\theta V_\theta(s) \nabla_\theta V_\theta(s)^\top ]\in \RR^{d\times d}$. Here $d^\pi(\cdot)$ denotes stationary distribution of states. In this paper, we assume that $K_\theta$ is non-singular for all $\theta \in \Theta$ such that its smallest eigenvalue $\lambda_{\min}(K_\theta)>0$. Via the Fenchel’s duality that $1/2\cdot \|x\|^2_{A^{-1}} = \max_{y \in \RR^d} \langle x, y \rangle -1/2 \cdot y^\top A y$, we thus have a primal-dual minimax formulation of MSPBE minimization problem as
\begin{align} \label{eq:pd_mspbe}
\hspace*{-0.1cm}\min_{\theta \in \Theta}~\textsc{Mspbe}(\theta) = \min_{\theta\in\Theta} \max_{\omega\in \RR^d} \Big\{ \cL(\theta, \omega) := \EE_{s,a,s'}[\ell(\theta, \omega; s,a,s')] \Big\},    
\end{align}
where $\EE_{s,a,s'}$ is taking expectation for $s\sim d^\pi(\cdot),a\sim \pi(\cdot\given s),s'\sim \cP_{s,\cdot}^a$, and we define 
\vspace{-0.1cm}
\begin{align*}
\ell(\theta, \omega; s,a,s') := \langle \delta \cdot \nabla_\theta V_\theta(s), ~\omega \rangle -\frac{1}{2} \omega^\top [\nabla_\theta V_\theta(s) \nabla_\theta V_\theta(s)^\top] \omega,
\end{align*}
with $\delta := r(s,a,s') + \rho V_\theta(s') - V_\theta(s)$. Once fixing $\theta$, due to non-singularity of $K_\theta$, then $\cL(\theta, \cdot)$ is a strongly concave (quadratic) function, while if fixing $\omega$, then $\cL(\cdot, \omega)$ is a nonconvex function. Thus, we have an NCSC minimax formulation for policy evaluation with smooth function approximation. \citet{bhatnagar2009convergent} suggested a stochastic algorithm that once given an online data point $(s_t, a_t, s_{t+1})$, this algorithm updates $\theta$ and $\omega$ by
\vspace{-0.1cm}
\begin{align*}
    &\theta_{t+1} = \cP_\Theta (\theta_t - \nu_t \nabla_\theta \ell(\theta_t, \omega_t; s_t, a_t, s_{t+1})), ~~\text{and}~~ 
   \omega_{t+1} = \omega_t + \mu_t \nabla_\omega \ell(\theta_t, \omega_t; s_t, a_t, s_{t+1}),
\end{align*}
where $\nu_t/\mu_t \rightarrow 0$ as $t \rightarrow \infty$. In essence, this is a two-timescale stochastic gradient descent-ascent algorithm to update primal variable $\theta$ and dual variable $\omega$ alternately.

\section{Problem Formulation}

In theory, we consider to solve a more general NCSC minimax optimization problem under the online setting, formulated as \eqref{eq:ncsc_intro}, i.e., $\min_{\theta\in \Theta} \max_{\omega\in \Omega} \{F(\theta, \omega) := \EE_{\xi\sim \mathcal{D}} [f(\theta, \omega; \xi)] \}$.

For each time $t$, one data point is observed $\xi_t \sim \mathcal{D}$ with a function $f(\cdot, \cdot;~\xi_t)$ as well as its first-order derivative. In particular, $(s,a,s')$ in \eqref{eq:pd_mspbe} is equivalent to $\xi$ here. Each time $t$, the data point $\xi_{t+1}$ is $(s_t, a_t, s_{t+1})$.
Specifically, the generality of the problem \eqref{eq:ncsc_intro} we studied here is reflected in the following two aspects: First, the feasible sets $\Theta\subseteq \RR^d$ and $\Omega\subseteq \RR^n$ are only convex and closed without requiring boundedness of $\Theta$. Thus, we can set $\Theta = \RR^d$ or $\Omega = \RR^n$ such that \eqref{eq:ncsc_intro} is reduced to one-sided or two-sided unconstrained minimax optimization problems. Therefore, TD learning with smooth function approximation in \eqref{eq:pd_mspbe}, where $\Theta \subseteq \RR^d$ is compact and $\Omega \subset \RR^n$ with $ n = d$, becomes a special case of the problem \eqref{eq:ncsc_intro} with $f(\theta, \omega; \xi)$ being $\ell(\theta, \omega;s,a,s')$. Second, the function $F(\theta, \omega)$ in \eqref{eq:ncsc_intro} is a general nonconvex strongly concave function, and 
not limited to the form of $\cL(\theta, \omega)$ in \eqref{eq:pd_mspbe} whose dual side is quadratic.

Furthermore, we define a function $J(\theta)$ by $J(\theta) := \max_{\omega\in \Omega} F(\theta, \omega)$, which implies that the problem \eqref{eq:ncsc_intro} can be equivalently written as
\begin{align*}
\min_{\theta\in \Theta} \max_{\omega\in \Omega} F(\theta, \omega) = \min_{\theta\in \Theta} J(\theta).
\end{align*}
We can observe that $\textsc{Mspbe}(\theta)$ in \eqref{eq:mspbe_ori} is equivalent to $J(\theta)$ with setting $\Omega = \RR^n$. In addition, $F(\theta, \omega)$ is strongly concave w.r.t. $\omega \in \Omega$ which guarantees the existence and uniqueness of the solution to the problem $\max_{\omega\in \Omega} F(\theta, \omega), \forall \theta\in \Theta$. Then, given $\theta\in \Theta$, we define the solution as
\begin{align*}
\omega^*(\theta) :=\argmax_{\omega \in \Omega} F(\theta, \omega),
\end{align*}
which is a mapping from $\Theta$ to $\Omega$. Thus, $J(\theta)$ can be further written as $J(\theta) = F(\theta, \omega^*(\theta))$.

Due to the nonconvexity of the primal side and the strong concavity of the dual side, our goal is to design efficient algorithms so that the primal iterate $\theta_t$ converges to a stationary point or local minimizer of the function $F(\cdot, \omega_t)$ while the dual side $\omega_t$ converges to $\omega^*(\theta_t)$.

\begin{remark}
Note that since we consider a general constrained problem, the mapping  $\omega^*(\theta)$ may not be the solution to the unconstrained maximization problem $\argmax_{\omega \in \RR^n} F(\theta, \omega)$ if $\Omega \neq \RR^n$. Thus, we may have $\nabla_\omega F(\theta, \omega^*(\theta)) \neq 0$. On the other hand, the iterate $\theta_t$ may also only converge to a point $\widehat{\theta}$ on the boundary of $\Theta$ with $\nabla_\theta F(\widehat{\theta}, \omega) \neq 0$ if this point is a local minimum but not stationary point. This motivates us to find a proper metric to measure the convergence of algorithms to solve \eqref{eq:ncsc_intro}, which is considered as one of our main contribution. The detailed discussion of the convergence metric is presented in Section \ref{sec:single_time_theory}.
\end{remark}

\vspace{-0.2cm}
\section{Single-Timescale Stochastic Algorithm for NCSC Optimization}
\vspace{-0.1cm}

The single-timescale and  single-loop stochastic algorithm for solving \eqref{eq:ncsc_intro} is introduced in Algorithm \ref{alg:single_time}. Specially, the step size associated with time $t$ is $\nu_t$, which is applied to update the parameter $\theta$ and $\omega$ in the same timescale of $O(t^{-1/2})$. At time $t$, viewing $p_t$ and $d_t$ as stochastic gradient approximation, Lines 3 and 4 perform stochastic gradient descent for $\theta$ and ascent for $\omega$ with parameters $\gamma$ and $\eta$, and then project the iterates. Then, averaging steps are taken between the projected iterates $\tilde{\theta}_{t+1}$, $\tilde{\omega}_{t+1}$ and the previous iterates $\theta_t$, $\omega_t$ to get $\theta_{t+1}$, $\omega_{t+1}$, which is 
\begin{align*}
&\theta_{t+1} = \theta_t + \nu_t(\tilde{\theta}_{t+1} - \theta_t) = (1-\nu_t)\theta_t + \nu_t \tilde{\theta}_{t+1}, \\    
&\omega_{t+1} =\omega_t + \nu_t(\tilde{\omega}_{t+1} - \omega_t) = (1-\nu_t)\omega_t + \nu_t \tilde{\omega}_{t+1}.   
\end{align*}
Here $\theta_{t+1}$, $\omega_{t+1}$ are guaranteed to stay in $\Theta$ and $\Omega$ by a simple induction proof if $\theta_t \in \Theta$ and $\omega_t \in \Omega$ when $\nu_t\leq 1$. In particular, we initialize $\theta_0 \in \Theta$ and $\omega_0 \in \Omega$. Note that the parameters $\gamma$ and $\eta$ are two \emph{constants} (not related to time $t$) to balance the updates of $\theta$ and $\omega$. For a more clear understanding of the updates in Lines 3 and 4, setting $\Theta = \RR^d$ and $\Omega = \RR^n$ yields
\begin{align*}
\theta_{t+1} = \theta_t - \gamma \nu_t p_t,\quad \text{ and }~~  \omega_{t+1} = \omega_t + \eta \nu_t d_t,
\end{align*}
which are stochastic gradient descent-ascent steps for unconstrained problems with gradient approximation by $p_t$ and $d_t$. Similar updating rules as Lines 3 and 4 can also be found in existing papers for constrained minimization problems, e.g. \citet{ruszczynski1987linearization, ghadimi2018single}.

Lines 5 and 6 are the \emph{momentum} updates for primal and dual sides. Specifically, letting $q_t = (p_t, d_t)$, $g_{t+1} = \nabla f(\theta_{t+1}, \omega_{t+1}; \xi_{t+1})$, and $c_t = \alpha \nu_t = \beta \nu_t$ with assuming $\alpha = \beta$, Lines 5 and 6 can be interpreted as
\begin{align} \label{eq:momentum_grad}
q_{t+1} = (1-c_t) q_t + c_t g_{t+1}.   
\end{align}
In particular, this update only requires \emph{one} date point $\xi_{t+1}$ each time. This momentum step shows that $q_{t+1}$ is a recursive average of historical stochastic gradients. The intuition behind the application of momentum updates is: 
the momentum updates can use history averaging to counteract
the effect of noise of each stochastic gradient, which potentially results in a single-timescale algorithm.

\begin{algorithm}[t]\caption{Single-Timescale Stochastic Gradient Algorithm for NCSC Minimax Optimization} 
    \setstretch{0.8}
	\begin{algorithmic}[1]
		\State {\bfseries Initialize:} $\theta_0 \in \Theta$, $\omega_0\in \Omega$, $p_0 = \nabla_\theta f(\theta_0, \omega_0, \xi_0) $, $d_0 = \nabla_\omega f(\theta_0, \omega_0, \xi_0) $.
		\For{$t=0,\ldots,T-1$}
		        \State Update primal variable $\theta_{t+1}$:
		        \begin{align*}
		        &\tilde{\theta}_{t+1} =  \mathcal{P}_\Theta (\theta_t - \gamma p_t ) , \\
		        &\theta_{t+1} = \theta_t + \nu_t(\tilde{\theta}_{t+1} - \theta_t).
	            \end{align*}
                \State Update dual variable $\omega_{t+1}$:
		        \begin{align*}
		        &\tilde{\omega}_{t+1} = \mathcal{P}_\Omega (\omega_t + \eta d_t) , \\
		        &\omega_{t+1} = \omega_t +  \nu_t (\tilde{\omega}_{t+1} - \omega_t).
	            \end{align*}
	            \State Update primal stochastic gradient $p_{t+1}$ :
		        \begin{align*}
		        & p_{t+1} = (1-\alpha \nu_t) p_t + \alpha \nu_t  \nabla_\theta f(\theta_{t+1},\omega_{t+1}; \xi_{t+1}).
		        \end{align*}
                \State Update dual stochastic gradient $d_{t+1}$ :
		        \begin{align*}
		        & d_{t+1} = (1-\beta\nu_t) d_t + \beta\nu_t  \nabla_\omega f(\theta_{t+1},\omega_{t+1}; \xi_{t+1}).
		        \end{align*}
        \EndFor
	\end{algorithmic}\label{alg:single_time}
\end{algorithm}

\subsection{Theoretical Results} \label{sec:single_time_theory}

\textbf{Assumptions.} We first make several standard assumptions which are the same as or \emph{even weaker} than the ones in recent papers, e.g. \citet{bhatnagar2009convergent, wai2019variance,lin2019gradient}.  

\begin{assumption}[Existence of Solution]\label{assump:exist_sol} There exists at least one global minimizer $\theta^* \in \Theta$ such that $ J(\theta) \geq J^* > -\infty, \forall \theta \in \Theta$, where we denote $J^* = J(\theta^*)$. 
\end{assumption}

\begin{assumption}[Convex Sets] \label{assump:fea_set}
The feasible sets $\Theta$ and $\Omega$ are closed convex sets.  
\end{assumption}

\begin{assumption}[Lipschitz Smoothness] \label{assump:lip_grad}
For any $\theta, \theta' \in \Theta$ and any $\omega, \omega' \in \Omega$, the gradient $\nabla F(\theta, \omega) = \big( \nabla_\theta F(\theta, \omega), \nabla_\omega F(\theta, \omega) \big)$ satisfies $\|\nabla F(\theta, \omega) - \nabla F(\theta', \omega')\| \leq L_F \|(\theta, \omega) - (\theta', \omega')\|$.
\end{assumption}
Assumption \ref{assump:lip_grad} further implies that both $F(\theta, \cdot)$ and $F(\cdot, \omega)$ are $L_F$-Lipschitz smooth. As shown in \citet{wai2019variance}, this assumption is satisfied for nonlinear TD learning in \eqref{eq:pd_mspbe}.

\begin{assumption}[Strong Concavity] \label{assump:non_singular} For any given $\theta \in \Theta$, the function $F(\theta, \cdot)$ is $\mu$-strongly concave, i.e., $\forall \theta\in \Theta$ and $\forall \omega, \omega' \in \Omega$, $F(\theta, \cdot)$ is concave and $\|\nabla_\omega F(\theta, \omega) - \nabla_\omega F(\theta, \omega')\| \geq \mu \|\omega-\omega'\|$.
\end{assumption}

For the TD learning problem \eqref{eq:pd_mspbe}, this assumption is equivalent to that for $ \forall \theta \in \Theta$, $  \mu = \lambda_{\min}(K_\theta) >0$.

As shown in \citet{lin2019gradient}, $J(\theta) = \max_{\omega\in\Omega} F(\theta, \omega)$ enjoys a benign property of being gradient Lipschitz when both Assumptions \ref{assump:lip_grad} and \ref{assump:non_singular} hold, which is applied in our theoretical analysis.

\begin{assumption}[Bounded Variance] \label{assump:bounded_var}
The variance of the stochastic gradient $\nabla f(\theta, \omega, \xi) = \big(\nabla_\theta f(\theta, \omega, \xi),\nabla_\omega f(\theta, \omega, \xi) \big)$ is bounded as $\EE_{\xi\sim \mathcal{D}} \|\nabla f(\theta, \omega, \xi) - \nabla F(\theta, \omega)\|^2 \leq \sigma^2$.
\end{assumption}
Assumption \ref{assump:bounded_var} further implies $\nabla_\theta f(\theta, \omega, \xi)$ and $\nabla_\omega f(\theta, \omega, \xi)$ have bounded variance respectively, i.e., $\EE_{\xi} \|\nabla_\theta f(\theta, \omega, \xi) - \nabla_\theta F(\theta, \omega)\|^2 \leq \sigma^2$ as well as $\EE_{\xi} \|\nabla_\omega f(\theta, \omega, \xi) - \nabla_\omega F(\theta, \omega)\|^2 \leq \sigma^2$.

\begin{remark}
In our paper, we only need a weaker assumption as Assumption \ref{assump:fea_set} without assuming boundedness of $\Theta$ and $\Omega$. Some papers, e.g. \citet{wai2019variance}, explicitly assume that all the iterates $\{\omega_t\}_{t\geq 0}$ are bounded even when $\Omega = \RR^n$ to theoretically derive the convergence guarantee while our analysis does not require this assumption.
\end{remark}

\noindent\textbf{Convergence Metric.} We propose the following metric to measure the convergence of algorithms 
\begin{align}
\begin{aligned}\label{eq:converg_mea}
    \mathfrak{M}_t &:= \gamma^{-1}\|\tilde{\theta}_{t+1} - \theta_t\| + \|\nabla_\theta F(\theta_t,\omega_t)- p_t\|  + L_F\|\omega_t-\omega^*(\theta_t)\|. 
\end{aligned}
\end{align}
The first two terms of RHS in \eqref{eq:converg_mea} measures the convergence of the primal side $\{\theta_t\}_{t\geq 0}$. If a point $\widehat{\theta}\in \Theta$ is a local minimum or a stationary point for the function $F(\cdot, \omega)$, then there must be $\widehat{\theta} = \mathcal{P}_\Theta(\widehat{\theta}-\gamma \nabla_\theta F(\widehat{\theta}, \omega))$. This indicates either $\nabla_\theta F(\widehat{\theta}, \omega) = 0$ with $\widehat{\theta}$ being a stationary point or $\nabla_\theta F(\widehat{\theta}, \omega) \neq 0$ but $\widehat{\theta}$ a local minimizer on the boundary of $\Theta$ so that the projected gradient descent step at $\widehat{\theta}$ returns to itself. Thus, it inspires us to use $\gamma^{-1} \|\theta - \mathcal{P}_\Theta(\theta-\gamma p)\| + \|p-\nabla_\theta F(\theta, \omega)\|$ to measure the convergence $\theta$ such that these two terms being $0$ implies $p=\nabla_\theta F(\theta, \omega)$ and $\theta = \mathcal{P}_\Theta(\theta-\gamma p) = \mathcal{P}_\Theta(\theta-\gamma \nabla_\theta F(\theta, \omega))$. The first two terms of RHS in \eqref{eq:converg_mea} is also used in existing works, e.g. \citet{ghadimi2018single}, to measure the convergence of algorithms for constrained minimization problem. The last term in \eqref{eq:converg_mea} measures the convergence of $\omega_t$ to the unique maximizer $\omega^*(\theta_t)$ for $F(\theta_t, \cdot)$. When the primal side is unconstrained, i.e. $\Theta = \RR^d$, we have 
\begin{align}
\mathfrak{M}_t = \|p_t\|  + \|\nabla_\theta F(\theta_t,\omega_t)- p_t\|   + L_F\|\omega_t-\omega^*(\theta_t)\|  \geq\|\nabla_\theta F(\theta_t,\omega_t)\| +  L_F\|\omega_t-\omega^*(\theta_t)\|, \label{eq:metric_unconstr_primal}
\end{align} 
which implies that if $\mathfrak{M}_t \rightarrow 0$, then $\|\nabla_\theta F(\theta_t, \omega_t)\|\rightarrow 0$ and $\omega_t \rightarrow  \omega^*(\theta_t)$. If the problem \eqref{eq:ncsc_intro} is further  unconstrained on both primal and dual sides, i.e. $\Theta = \RR^d$ and $\Omega = \RR^n$, we have 
\begin{align*}
\mathfrak{M}_t = \|p_t\|  + \|\nabla_\theta F(\theta_t,\omega_t)- p_t\|   + L_F\|\omega_t-\omega^*(\theta_t)\|  \geq\|\nabla_\theta F(\theta_t,\omega_t)\| +\|\nabla_\omega F(\theta_t,\omega_t)\|,
\end{align*} 
where the last inequality uses the Lipschitz continuity of gradients and fact that $\nabla_\omega F(\theta_t, \omega^*(\theta_t)) = 0$ in unconstrained case. This means when $\mathfrak{M}_t \rightarrow 0$, then both $\|\nabla_\theta F(\theta_t, \omega_t)\|\rightarrow 0$ and $\|\nabla_\omega F(\theta_t, \omega_t)\|\rightarrow 0$. We also see that the Lipschitz constant $L_F$ exists to balance the scale of the norm of gradients and the norm of variables.

\begin{remark}[Comparisons of Metrics] When $\Omega = \RR^n$, the convergence metric \eqref{eq:converg_mea} is similar to the one used in the paper \citet{wai2019variance} for the finite-sum setting. But the metric in \citet{wai2019variance} employs $\|\nabla_\omega F(\theta_t, \omega_t)\|$ to evaluate the dual-side convergence, which is not as general as \eqref{eq:converg_mea} for the case $\Omega \neq \RR^n$ . On the other hand, in most existing papers studying online settings, their metrics aim to measure the convergence of the primal variable, e.g. $\|\nabla J(\theta_t)\| \rightarrow 0$ in \citet{lin2019gradient, luo2020stochastic}. This is equivalent to showing $\|\nabla_\theta F(\theta_t, \omega^*(\theta_t))\| \rightarrow 0$ when $\Theta = \RR^d$, which ignores showing the convergence of dual side $\omega_t \rightarrow \omega^*(\theta_t)$. Our convergence metric is tighter in that it measures both primal and dual-side convergence for any convex closed feasible sets $\Theta$ and $\Omega$. For the special case where  $\Theta = \RR^d$, the convergence measured by $\mathfrak{M}_t$ can imply the convergence measured by $\|\nabla J(\theta_t)\|$ according to the following inequality
\begin{align*}
\|\nabla J(\theta_t)\| = \|\nabla_\theta F(\theta_t, \omega^*(\theta_t))\| \leq \|\nabla_\theta F(\theta_t, \omega^*(\theta_t)) - \nabla_\theta F(\theta_t, \omega_t)\| + \|\nabla_\theta F(\theta_t, \omega_t)\| \leq \mathfrak{M}_t
\end{align*}
where the last inequality is by \eqref{eq:metric_unconstr_primal} and the Lipschitz continuity of the gradient $\nabla_\theta F(\theta, \omega)$.
\end{remark}

For stochastic algorithms in this paper, we adopt the average of expectation, i.e., $T^{-1} \sum_{t=0}^{T-1} \EE [\mathfrak{M}_t]$, to measure their ergodic convergence, where the expectation is taken over all sampling randomness $\{ \xi_{t}\}_{t= 0}^{T-1}$. This is the standard way to show the convergence of stochastic algorithms.

\vspace{5pt}
\noindent\textbf{Convergence Analysis.} Based on this convergence metric, we show our theoretical results below.  Specifically, we present two theorems to show the sample complexity under Algorithm \ref{alg:single_time} with either decaying step size or fixed step size.

\begin{theorem}[Decaying Step Size] \label{thm:single_time} Under Assumptions \ref{assump:exist_sol}, \ref{assump:fea_set}, \ref{assump:lip_grad} and \ref{assump:non_singular}, setting the parameters $\alpha = \beta = 3$, $0< \eta \leq \mu /(4 L_F^2)$, $0 < \gamma \leq \eta\mu^2 /(9 L_F^2)$, and $\nu_t = 1/[16(t+b)]^{1/2}$ with $b \geq  \max \{ (2\gamma L_F^2/\mu)^2, 3 \}$, with the updating rules in Algorithm \ref{alg:single_time}, the convergence rate is\footnote{We use $\tilde{O}$ to hide logarithmic factors. More specifically, $\tilde{O}$ hides $\log T$ factors for the convergence results and $\log (\varepsilon^{-1})$ factors for the results of sample complexity.}
\begin{align*}
\frac{1}{T} \sum_{t=0}^{T-1} \EE[\mathfrak{M}_t] \leq   \tilde{O} \left ( \frac{1}{T^{1/4}} \right),
\end{align*}
Then, the sample complexity $T_\varepsilon$ to achieve $\varepsilon$ error of $T^{-1} \sum_{t=0}^{T-1} \EE[\mathfrak{M}_t]$ is $T_\varepsilon \geq \tilde{O}(\varepsilon^{-4})$.
\end{theorem}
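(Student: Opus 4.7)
The plan is to construct a Lyapunov function that simultaneously tracks the primal objective decrease, the dual tracking error, and the gradient estimation errors of the momentum buffers, and show that its drift dominates $\nu_t\,\mathbb{E}[\mathfrak{M}_t^2]$ up to an $O(\nu_t^2)$ noise floor. Specifically, I would work with
\[
\Phi_t := \bigl(J(\theta_t) - J^*\bigr) + A\,\|\omega_t - \omega^*(\theta_t)\|^2 + B\,\|p_t - \nabla_\theta F(\theta_t,\omega_t)\|^2 + C\,\|d_t - \nabla_\omega F(\theta_t,\omega_t)\|^2,
\]
where the constants $A,B,C>0$ will be chosen in terms of $L_F,\mu,\eta,\gamma$ so that the cross terms produced by the three drift estimates are absorbed. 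Note that $J$ is $L_J$-smooth with $L_J = L_F + L_F^2/\mu$ (a standard consequence of Assumptions \ref{assump:lip_grad}, \ref{assump:non_singular}), and $\omega^*(\cdot)$ is $(L_F/\mu)$-Lipschitz; these are the two analytic facts driving the whole argument.

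First I would prove a primal descent inequality. Using $L_J$-smoothness of $J$ and the update $\theta_{t+1} = (1-\nu_t)\theta_t + \nu_t\,\mathcal{P}_\Theta(\theta_t - \gamma p_t)$, I expand $J(\theta_{t+1}) - J(\theta_t)$ and split $\nabla J(\theta_t) = \nabla_\theta F(\theta_t,\omega_t) + [\nabla J(\theta_t) - \nabla_\theta F(\theta_t,\omega_t)]$. The standard projected-gradient identity (as in \citet{ghadimi2018single}) gives a negative term of order $-\nu_t\gamma^{-1}\|\tilde\theta_{t+1}-\theta_t\|^2$ plus cross terms controlled by $\|p_t - \nabla_\theta F(\theta_t,\omega_t)\|^2$, $L_F^2\|\omega_t-\omega^*(\theta_t)\|^2$, and an $O(\nu_t^2)$ curvature term. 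Second, for the dual I would combine nonexpansiveness of $\mathcal{P}_\Omega$ with the $\mu$-strong-concavity contraction of the ascent step in $\tilde\omega_{t+1}$, together with the bound $\|\omega^*(\theta_{t+1})-\omega^*(\theta_t)\| \leq (L_F/\mu)\|\theta_{t+1}-\theta_t\|$, to obtain (after Young's inequality)
\[
\mathbb{E}\|\omega_{t+1}-\omega^*(\theta_{t+1})\|^2 \leq (1 - c_1\eta\mu\,\nu_t)\,\mathbb{E}\|\omega_t-\omega^*(\theta_t)\|^2 + c_2\nu_t\,\mathbb{E}\|d_t-\nabla_\omega F(\theta_t,\omega_t)\|^2 + c_3\nu_t^2,
\]
where the assumptions $\eta\leq \mu/(4L_F^2)$ and $\gamma\leq \eta\mu^2/(9L_F^2)$ guarantee that the primal-induced perturbation is dominated by the ascent contraction. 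Third, for each momentum buffer I would use the STORM-style decomposition $p_{t+1}-\nabla_\theta F(\theta_{t+1},\omega_{t+1}) = (1-\alpha\nu_t)\bigl[p_t - \nabla_\theta F(\theta_t,\omega_t)\bigr] + (1-\alpha\nu_t)\bigl[\nabla_\theta F(\theta_t,\omega_t) - \nabla_\theta F(\theta_{t+1},\omega_{t+1})\bigr] + \alpha\nu_t\bigl[\nabla_\theta f(\theta_{t+1},\omega_{t+1};\xi_{t+1})-\nabla_\theta F(\theta_{t+1},\omega_{t+1})\bigr]$; squaring, taking conditional expectation using Assumption \ref{assump:bounded_var}, and using $\|(\theta_{t+1},\omega_{t+1})-(\theta_t,\omega_t)\|^2 = O(\nu_t^2)$ with $\alpha=\beta=3$ yields a contraction of the form $(1 - 2\alpha\nu_t + O(\nu_t^2))\mathbb{E}\|p_t - \nabla_\theta F\|^2 + O(\nu_t^3) + \alpha^2\nu_t^2\sigma^2$, and analogously for $d_t$.

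Combining the three drift estimates with constants $A \propto L_F^2/(\eta\mu)$ and $B,C \propto 1/\alpha$ chosen so that the $\|\omega_t-\omega^*(\theta_t)\|^2$, $\|p_t-\nabla_\theta F\|^2$, $\|d_t-\nabla_\omega F\|^2$ cross terms from the primal descent and dual contraction steps are each absorbed by at most half of the negative drift of their respective Lyapunov term, I obtain
\[
\mathbb{E}[\Phi_{t+1}] - \mathbb{E}[\Phi_t] \leq -\,c\,\nu_t\,\mathbb{E}[\mathfrak{M}_t^2] + c'\,\nu_t^2,
\]
for some constants $c,c'>0$ depending on $L_F,\mu,\eta,\gamma,\sigma$. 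Telescoping from $t=0$ to $T-1$ and using $\Phi_T\geq 0$, we get $\sum_{t=0}^{T-1}\nu_t\,\mathbb{E}[\mathfrak{M}_t^2] \leq (\Phi_0 + c'\sum_t \nu_t^2)/c$. With $\nu_t = 1/[16(t+b)]^{1/2}$ we have $\sum_{t=0}^{T-1}\nu_t^2 = O(\log T)$ and $\nu_t\geq \Omega(T^{-1/2})$, so $\sum_{t=0}^{T-1}\mathbb{E}[\mathfrak{M}_t^2] \leq \tilde{O}(\sqrt{T})$. A final Cauchy--Schwarz gives
\[
\frac{1}{T}\sum_{t=0}^{T-1}\mathbb{E}[\mathfrak{M}_t] \leq \sqrt{\frac{1}{T}\sum_{t=0}^{T-1}\mathbb{E}[\mathfrak{M}_t^2]} \leq \tilde{O}(T^{-1/4}),
\]
from which the sample complexity $T_\varepsilon \geq \tilde{O}(\varepsilon^{-4})$ follows by inversion.

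The main obstacle is the constant balancing in the Lyapunov coefficients: the momentum buffer drift produces an $O(\nu_t^2)\|(\theta_{t+1},\omega_{t+1})-(\theta_t,\omega_t)\|^2$ term that couples back to the primal projected-gradient norm, while the dual contraction produces a term proportional to $\nu_t\|d_t-\nabla_\omega F\|^2$ that must be swallowed by the dual-buffer drift. Closing this circular dependence is what forces the specific parameter restrictions $\eta\leq\mu/(4L_F^2)$ and $\gamma\leq\eta\mu^2/(9L_F^2)$, and requires the momentum coefficients $\alpha=\beta=3$ to make the $(1-2\alpha\nu_t)$ contraction large enough; once these are chosen correctly the rest of the estimates are routine.
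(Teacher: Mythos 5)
Your proposal is correct and follows essentially the same route as the paper: the same Lyapunov function $J(\theta_t)-J^* + A\|\omega_t-\omega^*(\theta_t)\|^2 + B\|\Delta_t^p\|^2 + C\|\Delta_t^d\|^2$ (the paper takes $A=10L_F^2\gamma/(\mu\eta)$, $B=C=2\gamma/(\mu\eta)$), the same three drift estimates (primal descent via $L_J$-smoothness and the projection variational inequality, dual contraction via strong concavity plus Lipschitzness of $\omega^*$, and the momentum-buffer contraction of Lemma \ref{lem:grad_var_bound}), and the same telescoping-plus-Jensen finish. The only caution is that in the final write-up the perturbation terms you abbreviate as $c_3\nu_t^2$ and $O(\nu_t^3)$ must be kept in the form $O(\nu_t)\|\tilde{\theta}_{t+1}-\theta_t\|^2 + O(\nu_t)\|\tilde{\omega}_{t+1}-\omega_t\|^2$ (since $\Theta,\Omega$ are unbounded these increments are not a priori $O(1)$) and absorbed by the retained negative terms $-\tfrac{3\nu_t}{4\gamma}\|\tilde{\theta}_{t+1}-\theta_t\|^2$ and $-\tfrac{3\nu_t}{4}\|\tilde{\omega}_{t+1}-\omega_t\|^2$, which is exactly the coupling you already identify as the main obstacle.
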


In Theorem \ref{thm:single_time}, we set $\nu_t$ decaying so as to clearly show its dependence on time $t$. This theorem shows that if we set $\nu_t = O(t^{-1/2})$, it requires $\tilde{O}(\varepsilon^{-4})$ rounds (or number of data points) to attain an $\varepsilon$ error of $T^{-1} \sum_{t=0}^{T-1} \EE[\mathfrak{M}_t]$. Note that in this theorem, it is not necessary to fix the total number of rounds $T$ in advance. However, it introduces an extra logarithmic factor, namely $\log (\varepsilon^{-1})$, in the sample complexity. In the next theorem, we show that this logarithmic factor can be removed via setting a fixed step size if assuming the total number of rounds $T$ is pre-set.

\begin{theorem}[Fixed Step Size]\label{thm:single_time_fix}  Under Assumptions \ref{assump:exist_sol}, \ref{assump:fea_set}, \ref{assump:lip_grad} and \ref{assump:non_singular}, setting the parameters $\alpha = \beta = 3$, $0< \eta \leq \mu /(4 L_F^2)$, $0 < \gamma \leq \eta\mu^2 /(9 L_F^2)$, and fixing the step size $\nu_t = \nu = 1/[16(T+b)^{1/2}]$ with $b \geq  \max \{ (\gamma L_F^2/\mu)^2, 3 \}$ for all $t\in [0, T]$, with the updating rules in Algorithm \ref{alg:single_time}, the convergence rate of this algorithm is
\begin{align*}
\frac{1}{T} \sum_{t=0}^{T-1} \EE[\mathfrak{M}_t] \leq   O \left ( \frac{1}{T^{1/4}} \right),
\end{align*}
Then, the sample complexity $T_\varepsilon$ to achieve $\varepsilon$ error of $T^{-1} \sum_{t=0}^{T-1} \EE[\mathfrak{M}_t]$ is $T_\varepsilon \geq O(\varepsilon^{-4})$.
\end{theorem}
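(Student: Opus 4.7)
The plan is to follow the same Lyapunov-function framework as in Theorem \ref{thm:single_time}, but with the constant step size $\nu_t = \nu$ the telescoping becomes cleaner and no logarithmic factor arises. First I would establish three one-step inequalities. (i) A descent inequality for $J(\theta_t)$: using Lipschitz smoothness of $\nabla J$ (which follows from Assumptions \ref{assump:lip_grad} and \ref{assump:non_singular}) together with the three-point inequality for the projection defining $\tilde{\theta}_{t+1}$, and writing $\theta_{t+1}-\theta_t = \nu(\tilde{\theta}_{t+1}-\theta_t)$, one obtains
\begin{align*}
\mathbb{E}[J(\theta_{t+1})] \le \mathbb{E}[J(\theta_t)] - c_1 \nu \gamma^{-1}\mathbb{E}\|\tilde{\theta}_{t+1}-\theta_t\|^2 + c_2\nu\gamma\mathbb{E}\|p_t-\nabla_\theta F(\theta_t,\omega_t)\|^2 + c_3\nu\gamma L_F^2\mathbb{E}\|\omega_t-\omega^*(\theta_t)\|^2 + O(\nu^2).
\end{align*}
(ii) A recursion for the momentum gradient-error $e_t^p:=\|p_t-\nabla_\theta F(\theta_t,\omega_t)\|^2$ (and the analogous $e_t^d$): expanding the update \eqref{eq:momentum_grad}, taking conditional expectation on $\xi_{t+1}$, and bounding $\|(\theta_{t+1},\omega_{t+1})-(\theta_t,\omega_t)\|^2 = \nu^2\|(\tilde{\theta}_{t+1}-\theta_t,\tilde{\omega}_{t+1}-\omega_t)\|^2$ by Lipschitzness of $\nabla F$, yields
\begin{align*}
\mathbb{E}[e_{t+1}^p] \le (1-\alpha\nu)\mathbb{E}[e_t^p] + 2\alpha^2\nu^2\sigma^2 + O(\nu)\, L_F^2 \mathbb{E}\|(\tilde{\theta}_{t+1}-\theta_t,\tilde{\omega}_{t+1}-\omega_t)\|^2.
\end{align*}
(iii) A recursion for $e_t^\omega:=\|\omega_t-\omega^*(\theta_t)\|^2$: strong concavity gives a contraction of $\tilde{\omega}_{t+1}$ toward $\omega^*(\theta_t)$, averaging with $\omega_t$ produces a factor $(1-\eta\mu\nu+O(\nu^2))$, and the drift $\|\omega^*(\theta_{t+1})-\omega^*(\theta_t)\|\le (L_F/\mu)\|\theta_{t+1}-\theta_t\|$ contributes an $O(\nu^2)\|\tilde{\theta}_{t+1}-\theta_t\|^2$ term. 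The condition $b\ge(\gamma L_F^2/\mu)^2$ ensures $\nu$ is small enough for these contractions to hold with positive coefficient.

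Next I would combine these three bounds into the Lyapunov function
\begin{align*}
\Phi_t \;=\; J(\theta_t)-J^{*} + \lambda_1 e_t^p + \lambda_2 e_t^d + \lambda_3 e_t^\omega,
\end{align*}
with $\lambda_1,\lambda_2,\lambda_3>0$ depending only on $\gamma,\eta,\mu,L_F$. With the prescribed parameters $\alpha=\beta=3$, $\eta\le\mu/(4L_F^2)$, $\gamma\le\eta\mu^2/(9L_F^2)$, the coefficients can be balanced so that all cross-terms between $\|\tilde{\theta}_{t+1}-\theta_t\|^2$, $e_t^p$, $e_t^d$, $e_t^\omega$ aggregate to nonpositive multipliers, giving
\begin{align*}
\mathbb{E}[\Phi_{t+1}] - \mathbb{E}[\Phi_t] \;\le\; -C_1\,\nu\,\mathbb{E}\big[\gamma^{-2}\|\tilde{\theta}_{t+1}-\theta_t\|^2 + e_t^p + L_F^2 e_t^\omega\big] + C_2\,\nu^2\sigma^2
\end{align*}
for absolute constants $C_1,C_2>0$. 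The bracket is, up to a factor, $\mathfrak{M}_t^2/3$ via $(a+b+c)^2\le 3(a^2+b^2+c^2)$ applied to \eqref{eq:converg_mea}.

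Telescoping $t=0,\ldots,T-1$, dividing by $\nu T$, using $\Phi_0-\inf_t\Phi_t\le J(\theta_0)-J^{*}+\lambda_1 e_0^p + \lambda_2 e_0^d + \lambda_3 e_0^\omega=O(1)$ (the initial errors are bounded under Assumption \ref{assump:bounded_var}), and finally applying Cauchy--Schwarz to pass from $\mathbb{E}[\mathfrak{M}_t^2]$ to $\mathbb{E}[\mathfrak{M}_t]$, gives
\begin{align*}
\frac{1}{T}\sum_{t=0}^{T-1}\mathbb{E}[\mathfrak{M}_t] \;\le\; \left(\frac{1}{T}\sum_{t=0}^{T-1}\mathbb{E}[\mathfrak{M}_t^2]\right)^{1/2} \le\; O\!\left(\frac{1}{\sqrt{\nu T}} + \sqrt{\nu}\,\sigma\right).
\end{align*}
Substituting $\nu = 1/[16(T+b)^{1/2}]$ balances the two terms at $O(T^{-1/4})$, and inverting gives the $O(\varepsilon^{-4})$ sample complexity, with no $\log$ factor because the fixed $\nu$ eliminates the need to sum $\nu_t^{-1}$ weights (the step the decaying-step-size proof pays a $\log T$ for).

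The main obstacle is step (ii)--(iii) combined with the Lyapunov balancing: one must verify, for the fixed $\nu$, that the coupling coefficients in the descent inequality, in the momentum recursion (which feeds $\|\tilde{\theta}_{t+1}-\theta_t\|^2$ back into $e_{t+1}^p$), and in the dual tracking recursion (which is driven by $e_t^d$) can be simultaneously dominated. This is precisely where the explicit upper bounds $\eta\le\mu/(4L_F^2)$ and $\gamma\le\eta\mu^2/(9L_F^2)$ come from; once they hold, the algebra is identical to the decaying-step-size case and the proof reduces to routine substitution.
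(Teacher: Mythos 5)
Your proposal is correct and follows essentially the same route as the paper's proof: the same three one-step bounds (primal descent via smoothness of $J$, momentum gradient-error contraction, dual-tracking contraction), the same Lyapunov combination $J(\theta_t)-J^*$ plus weighted $e_t^p$, $e_t^d$, $e_t^\omega$ terms, and the same telescope-then-Jensen step, with the fixed $\nu = O(T^{-1/2})$ balancing the $1/(\nu T)$ and $\nu\sigma^2$ contributions to give $O(T^{-1/4})$ without the $\log$ factor. One minor imprecision worth noting: after the Young's inequality needed to preserve the $(1-\mu\eta\nu/4)$ contraction, the drift $\|\omega^*(\theta_{t+1})-\omega^*(\theta_t)\|^2$ contributes a term of order $(\nu/(\mu\eta))L_\omega^2\|\tilde{\theta}_{t+1}-\theta_t\|^2$ rather than $O(\nu^2)\|\tilde{\theta}_{t+1}-\theta_t\|^2$ — which is precisely why the condition $\gamma \le \eta\mu^2/(9L_F^2)$ is required, as you correctly anticipate in your final balancing step.
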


\begin{remark}[Comparisons] The sample complexity of Algorithm \ref{alg:single_time}, i.e. $\tilde{O}(\varepsilon^{-4})$ for the decaying step size and $O(\varepsilon^{-4})$ for the fixed step size, matches the results of the two-timescale or double-loop algorithms \citep{lin2019gradient, yansharp} without using variance reduction techniques. Moreover, our sample complexity is based on the convergence of both primal and dual variables while the existing algorithms  \citep{lin2019gradient, yansharp} only measure the complexity in terms of the convergence of the primal variable.
\end{remark}

\section{Accelerated Single-Timescale Stochastic Method for NCSC Optimization }



In this section, we propose an accelerated variant of the single-timescale stochastic algorithm in a single-loop form for NCSC minimax optimization as  summarized in Algorithm \ref{alg:variance_reduce}. The updates of the parameters $\theta$ and $\omega$ in Line 3 and Line 4 follow a similar rule as the one in Algorithm \ref{alg:single_time}. The main modification lies in the updates of the gradient approximation terms $p$ and $d$ in Line 5 and Line 6. If we adopt similar notations to \eqref{eq:momentum_grad} and further let $\overline{g}_t = \nabla f(\theta_t, \omega_t; \xi_{t+1})$ and $c_t = \alpha \nu_t^2  = \beta \nu_t^2$ with assuming $\alpha = \beta$, then Lines 5 and 6 can be interpreted as
\begin{align}
q_{t+1} = (1-c_t) q_t + c_t g_{t+1} - (1-c_t) (\overline{g}_t - g_{t+1}).   \label{eq:vr_momentum_grad}
\end{align}
Inspired by the recently proposed momentum variance reduction technique \citep{cutkosky2019momentum} for minimization problems, we adapt it to the NCSC minimax optimization problem by applying variance reduction upon the primal-dual momentum updates. The first two terms of RHS in \eqref{eq:vr_momentum_grad} recover the gradient updates in \eqref{eq:momentum_grad} (if ignoring the difference of $c_t$) while the third term of RHS operates as a bias correctness to $q_t$. In contrast to \eqref{eq:momentum_grad}, the update \eqref{eq:vr_momentum_grad} results in that $q_{t+1}$ is an \emph{unbiased} stochastic approximation with $\EE [q_{t+1}] = \nabla F(\theta_{t+1}, \omega_{t+1})$ by induction if $\EE [q_t] = \nabla F(\theta_t, \omega_t)$, which can be guaranteed by our initialization step $q_0 = (p_0, d_0) = \nabla f(\theta_0, \omega_0; \xi_0)$ in Algorithm \ref{alg:variance_reduce}. With the bias correctness to averaging of stochastic gradients, the gradient approximation error can  be further reduced by proper setting of $c_t$.

Algorithm \ref{alg:variance_reduce} does not require computing an averaged variance reduction checkpoint gradient with sampling a large batch of data after a certain number of updates. In contrast, it only needs \emph{one} data point each step, e.g., $\xi_{t+1}$ at the $t$-th round. Then, its associated stochastic gradients are evaluated with two sets of parameters, e.g., the previous one $(\theta_t, \omega_t)$ and the current one $(\theta_{t+1}, \omega_{t+1})$. Algorithm \ref{alg:variance_reduce} remains a single-timescale  and single-loop algorithm as both 
$\theta$ and $\omega$ are updated in the same timescale.

\begin{algorithm}[t]\caption{Variance-Reduced Single-Timescale Stochastic Algorithm for NCSC Optimization} 
    \setstretch{0.7}
	\begin{algorithmic}[1]
		\State {\bfseries Initialize:} $\theta_0\in \Theta$, $\omega_0\in \Omega$, $p_0 = \nabla_\theta f(\theta_0, \omega_0, \xi_0) $, and $d_0 = \nabla_\omega f(\theta_0, \omega_0, \xi_0) $, $\alpha = \beta = 6$.
		\For{$t=1,\ldots,T$}
		        \State Update primal variable $\theta_{t+1}$:
		        \begin{align*}
		        &\tilde{\theta}_{t+1} =  \mathcal{P}_\Theta (\theta_t - \gamma p_t ), \\
		        &\theta_{t+1} = \theta_t + \nu_t(\tilde{\theta}_{t+1} - \theta_t).
	            \end{align*}
                \State Update dual variable $\omega_{t+1}$:
		        \begin{align*}
		        &\tilde{\omega}_{t+1} = \mathcal{P}_\Omega (\omega_t + \eta d_t), \\
		        &\omega_{t+1} = \omega_t +  \nu_t (\tilde{\omega}_{t+1} - \omega_t).
	            \end{align*}
	            \State Update primal stochastic gradient $p_{t+1}$ :
		        \begin{align*}
		        & p_{t+1} = (1- \alpha \nu^2_t) (p_t-\nabla_\theta f(\theta_t,\omega_t; \xi_{t+1}))  +  \nabla_\theta f(\theta_{t+1},\omega_{t+1}; \xi_{t+1}).
		        \end{align*}
                \State Update dual stochastic gradient $d_{t+1}$ :
		        \begin{align*}
		        & d_{t+1} = (1-\beta \nu^2_t) (d_t-\nabla_\omega f(\theta_t,\omega_t; \xi_{t+1}))  +  \nabla_\omega f(\theta_{t+1},\omega_{t+1}; \xi_{t+1}).
		        \end{align*}
        \EndFor
	\end{algorithmic}\label{alg:variance_reduce}
\end{algorithm}

\subsection{Theoretical Results}

\textbf{Assumptions.} In this section, we make the same assumptions as Assumptions \ref{assump:fea_set}, \ref{assump:non_singular}, and \ref{assump:bounded_var}. In addition, we make an assumption of Lipschitz stochastic gradients instead of Assumption \ref{assump:lip_grad}.  

\begin{assumption}[Lipschitz Stochastic Gradient]\label{assump:lip_sto_grad}
We assume that $\forall \theta, \theta' \in \Theta$ and $\forall \omega, \omega' \in \Omega$, the stochastic gradient $\nabla f(\theta, \omega) = \big( \nabla_\theta f(\theta, \omega), \nabla_\omega f(\theta, \omega) \big)$ satisfies $\|\nabla f(\theta, \omega;\xi) - \nabla f(\theta', \omega';\xi)\| \leq L_f \| (\theta,\omega)-(\theta', \omega') \|$, which further leads to $\|\nabla F(\theta, \omega) - \nabla F(\theta', \omega')\| \leq L_f \| (\theta,\omega)-(\theta', \omega') \|$.
\end{assumption}
The second inequality in Assumption \ref{assump:lip_sto_grad} can be obtained by simply applying Jensen's inequality $\|\EE X\|^2 \leq \EE \|X\|^2$. Assumption \ref{assump:lip_sto_grad} also shows that the Lipschitz continuity of stochastic gradients can imply Lipschitz continuity of population gradients. It is a common assumption in variance-reduced stochastic algorithms, e.g. \citet{johnson2013accelerating, cutkosky2019momentum}. Besides, as shown in \citet{wai2019variance}, this assumption can also be verified for $\ell(\theta, \omega; s,a,s')$ in \eqref{eq:pd_mspbe}. 

\vspace{5pt}
\noindent\textbf{Convergence Analysis.} We adopt the same convergence measure \eqref{eq:converg_mea} here by replacing $L_F$ with $L_f$. Analogous to the last section, we present two theorems to show the sample complexity under Algorithm \ref{alg:variance_reduce} with either decaying or fixed step size.

\begin{theorem}\label{thm:variance_reduce}  Under Assumptions \ref{assump:exist_sol}, \ref{assump:fea_set}, \ref{assump:non_singular}, and \ref{assump:lip_sto_grad}, setting the parameters $\alpha = \beta = 6$, $0< \eta \leq \mu /(6 L_f^2)$, $0 < \gamma \leq  \mu^2 \eta /(9  L_f^2)$, and $\nu_t = 1/[3(t+b)^{1/3}]$ with $b \geq  \max \{ (6\gamma L_f^2/\mu)^3, 256 \}$, with the updating rules in Algorithm \ref{alg:variance_reduce}, the convergence rate is 
\begin{align*}
\frac{1}{T} \sum_{t=0}^{T-1} \EE [\mathfrak{M}_t] \leq \tilde{O}\left( \frac{1}{T^{1/3}} \right).
\end{align*}
Then, the sample complexity $T_\varepsilon$ to achieve $\varepsilon$ error of $T^{-1} \sum_{t=0}^{T-1} \EE [\mathfrak{M}_t]$ is $T_\varepsilon \geq \tilde{O}(\varepsilon^{-3})$.
\end{theorem}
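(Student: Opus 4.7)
The overall strategy is to build a Lyapunov potential that simultaneously tracks the primal suboptimality $J(\theta_t)-J^*$, the dual tracking error $\mathcal{W}_t := \mathbb{E}\|\omega_t - \omega^*(\theta_t)\|^2$, and the variance-reduced gradient estimation errors $\mathcal{E}_t^p := \mathbb{E}\|p_t - \nabla_\theta F(\theta_t,\omega_t)\|^2$ and $\mathcal{E}_t^d := \mathbb{E}\|d_t - \nabla_\omega F(\theta_t,\omega_t)\|^2$, then telescope it against the step sizes $\nu_t = \Theta(t^{-1/3})$ to extract a bound on the averaged metric $\mathfrak{M}_t$. The first step is to derive the STORM-type recursion for $\mathcal{E}_t^p$: rewriting Line 5 with $c_t = \alpha\nu_t^2$ gives $p_{t+1} - \nabla_\theta F(\theta_{t+1},\omega_{t+1}) = (1-c_t)[p_t - \nabla_\theta F(\theta_t,\omega_t)] + (1-c_t)A_t + c_t B_t$, where $A_t$ collects the variance-reduction bias term and $B_t$ the fresh-sample noise. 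Conditional on the filtration both are mean-zero, $\mathbb{E}_t\|A_t\|^2 \leq L_f^2 \nu_t^2 \|(\tilde{\theta}_{t+1}-\theta_t,\tilde{\omega}_{t+1}-\omega_t)\|^2$ by Assumption \ref{assump:lip_sto_grad} combined with the convex-combination update, and $\mathbb{E}_t\|B_t\|^2 \leq \sigma^2$ by Assumption \ref{assump:bounded_var}. This yields $\mathcal{E}_{t+1}^p \leq (1-c_t)\mathcal{E}_t^p + 2L_f^2\nu_t^2\mathbb{E}\|(\tilde{\theta}_{t+1}-\theta_t,\tilde{\omega}_{t+1}-\omega_t)\|^2 + 2c_t^2 \sigma^2$, and analogously for $\mathcal{E}_t^d$. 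The crucial gain is the $c_t^2 = O(\nu_t^4)$ noise scaling, compared with $O(\nu_t^2)$ for plain momentum in Theorem \ref{thm:single_time}.

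Next, for the dual tracking, the strong concavity of $F(\theta_t,\cdot)$ applied to $\tilde{\omega}_{t+1} = \mathcal{P}_\Omega(\omega_t + \eta d_t)$ produces a contraction of the form $\|\tilde{\omega}_{t+1}-\omega^*(\theta_t)\|^2 \leq (1-c\eta\mu)\|\omega_t - \omega^*(\theta_t)\|^2 + O(\eta)\|d_t-\nabla_\omega F(\theta_t,\omega_t)\|^2 + O(\eta^2)$; the convex-combination $\omega_{t+1} = (1-\nu_t)\omega_t + \nu_t\tilde{\omega}_{t+1}$ softens this by $\nu_t$, and the transition from $\omega^*(\theta_t)$ to $\omega^*(\theta_{t+1})$ costs an extra $(L_f/\mu)\|\theta_{t+1}-\theta_t\|$ by the standard Lipschitzness of $\omega^*(\cdot)$ under Assumption \ref{assump:non_singular}, giving a recursion $\mathcal{W}_{t+1} \leq (1-\Theta(\eta\mu\nu_t))\mathcal{W}_t + O(\eta\nu_t)\mathcal{E}_t^d + O(\nu_t^2/\mu^2)\,\mathbb{E}\|\tilde{\theta}_{t+1}-\theta_t\|^2 + O(\eta^2\nu_t^2)$. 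For the primal, the $L_J$-smoothness of $J$ (which holds under Assumptions \ref{assump:lip_sto_grad} and \ref{assump:non_singular}) together with the projection inequality for $\tilde{\theta}_{t+1}=\mathcal{P}_\Theta(\theta_t-\gamma p_t)$, which yields $\langle p_t,\tilde{\theta}_{t+1}-\theta_t\rangle \leq -\gamma^{-1}\|\tilde{\theta}_{t+1}-\theta_t\|^2$, and the splitting $\nabla J(\theta_t) = [\nabla_\theta F(\theta_t,\omega^*(\theta_t)) - \nabla_\theta F(\theta_t,\omega_t)] + [\nabla_\theta F(\theta_t,\omega_t)-p_t] + p_t$ combined with Young's inequality gives a descent
$$\mathbb{E}[J(\theta_{t+1})] \leq \mathbb{E}[J(\theta_t)] - \tfrac{\nu_t}{2\gamma}\mathbb{E}\|\tilde{\theta}_{t+1}-\theta_t\|^2 + O(\gamma\nu_t)\bigl(L_f^2\mathcal{W}_t + \mathcal{E}_t^p\bigr) + O(\nu_t^2).$$

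Then I combine these three inequalities into a potential $\Phi_t := \mathbb{E}[J(\theta_t)] - J^* + \lambda_1 \mathcal{W}_t + \lambda_2 \mathcal{E}_t^p + \lambda_3 \mathcal{E}_t^d$ with constants $\lambda_1,\lambda_2,\lambda_3>0$ chosen so that the drift $\Phi_t - \Phi_{t+1}$ dominates $c\,\nu_t\,\mathbb{E}[\gamma^{-2}\|\tilde{\theta}_{t+1}-\theta_t\|^2 + L_f^2\mathcal{W}_t + \mathcal{E}_t^p + \mathcal{E}_t^d]$ minus an $O(\nu_t^3 \sigma^2)$ residue. The parameter constraints $\eta \leq \mu/(6L_f^2)$, $\gamma \leq \mu^2\eta/(9L_f^2)$, and $b \geq \max\{(6\gamma L_f^2/\mu)^3,256\}$ are exactly what is required so that (i) the $\|\tilde{\theta}_{t+1}-\theta_t\|^2$ term in the variance-reduction recursion and in the $\omega^*(\theta_{t+1})$-translation is absorbed by the negative $-\nu_t/(2\gamma)\|\tilde{\theta}_{t+1}-\theta_t\|^2$ from the primal descent, and (ii) the cross-term $\mathcal{E}_t^d$ produced by the dual recursion is absorbed by the negative drift of $\lambda_3\mathcal{E}_t^d$. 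Telescoping from $0$ to $T-1$, using $\sum_{t=0}^{T-1}\nu_t = \Theta(T^{2/3})$ and $\sum_{t=0}^{T-1}\nu_t^3 = \tilde{O}(1)$, gives $T^{-1}\sum_t \mathbb{E}[\gamma^{-2}\|\tilde{\theta}_{t+1}-\theta_t\|^2 + \mathcal{E}_t^p + L_f^2\mathcal{W}_t] = \tilde{O}(T^{-2/3})$, and Cauchy--Schwarz upgrades this to $T^{-1}\sum_t \mathbb{E}[\mathfrak{M}_t] = \tilde{O}(T^{-1/3})$, yielding $T_\varepsilon = \tilde{O}(\varepsilon^{-3})$.

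The hard part will be Step 4, the simultaneous balancing of $\lambda_1,\lambda_2,\lambda_3$: the primal descent produces cross-terms in $\mathcal{W}_t$ and $\mathcal{E}_t^p$; the dual recursion produces a cross-term in $\mathcal{E}_t^d$ and reintroduces $\|\tilde{\theta}_{t+1}-\theta_t\|^2$ through the Lipschitzness of $\omega^*$; and the variance-reduction recursion feeds back $\|\tilde{\theta}_{t+1}-\theta_t\|^2$ and $\|\tilde{\omega}_{t+1}-\omega_t\|^2$. Closing this loop so that a single negative drift of each of the four quantities survives requires carefully chaining Young's inequalities, and it is ultimately the condition $\gamma \leq \mu^2\eta/(9L_f^2)$ that makes the resulting linear system of coefficients feasible. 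A subsidiary technical point is that, unlike standard STORM, the convex-combination update makes $\|\theta_{t+1}-\theta_t\| = \nu_t\|\tilde{\theta}_{t+1}-\theta_t\|$, which introduces an extra $\nu_t$ factor in the Lipschitz smoothness estimate that must be tracked consistently when verifying that the $\nu_t^2$ noise enters the potential with the advertised $\nu_t^3$ scaling after division by $\sum_t \nu_t$.
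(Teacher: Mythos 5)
Your overall architecture matches the paper's: the same three recursions (descent of $J$, contraction of $\|\omega_t-\omega^*(\theta_t)\|^2$, STORM-type recursion for the gradient errors with $O(\nu_t^4)$ noise), combined into a Lyapunov potential and telescoped. However, there is a genuine gap in Step 4 as you have written it. You take $\Phi_t = \EE[J(\theta_t)]-J^* + \lambda_1\mathcal{W}_t+\lambda_2\mathcal{E}^p_t+\lambda_3\mathcal{E}^d_t$ with \emph{constant} weights $\lambda_2,\lambda_3$ and claim the drift dominates $c\,\nu_t(\cdots+\mathcal{E}^p_t+\mathcal{E}^d_t)$. This cannot work: the variance-reduction recursion $\mathcal{E}^p_{t+1}\le(1-\alpha\nu_t^2)\mathcal{E}^p_t+\cdots$ contracts only at rate $\alpha\nu_t^2$ per step, so a constant multiple of $\mathcal{E}^p_t$ in the potential contributes a negative drift of order $\nu_t^2\,\mathcal{E}^p_t$, which cannot absorb the $O(\gamma\nu_t)\,\mathcal{E}^p_t$ cross-term from your primal descent inequality or the $O(\eta\nu_t)\,\mathcal{E}^d_t$ cross-term from the dual recursion (telescoping with constant weights only controls $\sum_t\nu_t^2\mathcal{E}^p_t$, not $\sum_t\nu_t\mathcal{E}^p_t$). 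The missing idea is the paper's time-varying weighting: the gradient errors enter the Lyapunov function $S_t$ with coefficient $2\gamma/(\mu\eta\nu_{t-1})$, and one uses $\nu_t^{-1}-\nu_{t-1}^{-1}\le \tfrac{2^{2/3}}{3a^3}\nu_t^2$ (valid precisely because $\nu_t=a(t+b)^{-1/3}$ with the stated $a,b$) so that $\nu_t^{-1}\mathcal{E}^p_{t+1}-\nu_{t-1}^{-1}\mathcal{E}^p_t\le -\tfrac{21}{4}\nu_t\mathcal{E}^p_t+O(\nu_t)\EE\|\cdots\|^2+O(\nu_t^3\sigma^2)$, which delivers the $\Theta(\nu_t)$ drift you need while keeping the residue at $O(\nu_t^3\sigma^2)$. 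Your claimed $O(\nu_t^3\sigma^2)$ residue is in fact the signature of this reweighting; with constant weights the noise would be $O(\nu_t^4)$ but the drift would fail.

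A secondary issue: the extra additive terms $O(\nu_t^2)$ in your primal descent and $O(\eta^2\nu_t^2)$ in your dual recursion should not be there (the $L_J\nu_t^2\|\tilde\theta_{t+1}-\theta_t\|^2/2$ term is absorbed into the negative $\|\tilde\theta_{t+1}-\theta_t\|^2$ coefficient, and the dual analysis produces no free additive noise — all noise enters through $\mathcal{E}^d_t$). If those terms were genuinely present, then $\sum_{t<T}\nu_t^2=\Theta(T^{1/3})$ divided by $T\nu_T=\Theta(T^{2/3})$ would give only $O(T^{-1/3})$ for the squared metric, hence $O(T^{-1/6})$ for $\mathfrak{M}_t$ after Jensen, destroying the advertised rate. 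So you must verify that no $\Theta(\nu_t^2)$ additive residue survives; only $O(\nu_t^3)$ is tolerable.
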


Theorem \ref{thm:variance_reduce} shows that if we set $\nu_t = O(t^{-1/3})$, it requires $\tilde{O}(\varepsilon^{-3})$ rounds (or data points) to attain an $\varepsilon$ error of $T^{-1} \sum_{t=0}^{T-1} \EE[\mathfrak{M}_t]$. Note that in this theorem, it is not necessary to fix the total number of rounds $T$ in advance. The next theorem can remove the logarithmic factor $\log (\varepsilon^{-1})$ in the sample complexity via setting a fixed step size when the total number of rounds $T$ is known.

\begin{theorem} \label{thm:variance_reduce_fix} Under Assumptions \ref{assump:exist_sol}, \ref{assump:fea_set}, \ref{assump:non_singular}, and \ref{assump:lip_sto_grad}, setting the parameters $\alpha = \beta = 6$, $0< \eta \leq \mu /(6 L_f^2)$, $0 < \gamma \leq  \mu^2 \eta /(9  L_f^2)$, and fixing the step size $\nu_t = \nu = 1/[3(T+b)^{1/3}]$ with $b \geq  \max \{ [6\gamma (L_f + L_f^2/\mu)]^3, 256 \}$ for all $t\in [0, T]$, with the updating rules in Algorithm \ref{alg:variance_reduce}, the convergence rate is
\begin{align*}
\frac{1}{T} \sum_{t=0}^{T-1} \EE[\mathfrak{M}_t] \leq   O \left ( \frac{1}{T^{1/3}} \right),
\end{align*}
Then, the sample complexity $T_\varepsilon$ to achieve $\varepsilon$ error of $T^{-1} \sum_{t=0}^{T-1} \EE[\mathfrak{M}_t]$ is $T_\varepsilon \geq O(\varepsilon^{-3})$.
\end{theorem}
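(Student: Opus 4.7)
The plan is to mirror the proof of Theorem \ref{thm:variance_reduce} but exploit the fact that a constant step size $\nu$ decouples the time-dependent weights from the telescoping and removes the spurious logarithmic factor. I would introduce a Lyapunov function
$$\Phi_t := J(\theta_t) - J^* + \lambda_1 \|\omega_t - \omega^*(\theta_t)\|^2 + \lambda_2 \|p_t - \nabla_\theta F(\theta_t,\omega_t)\|^2 + \lambda_3 \|d_t - \nabla_\omega F(\theta_t,\omega_t)\|^2,$$
with positive constants $\lambda_1,\lambda_2,\lambda_3$ to be tuned, and aim for a one-step inequality $\EE[\Phi_{t+1}] \leq \EE[\Phi_t] - c_0 \nu\, \EE[\mathfrak{M}_t^2] + c_1 \sigma^2 \nu^3$ where $c_0,c_1$ depend only on $L_f,\mu,\gamma,\eta$. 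Summing from $t=0$ to $T-1$, dividing by $T\nu$, and using Cauchy--Schwarz will give the $O(T^{-1/3})$ bound; no $\log$ factor arises because with $\nu_t\equiv\nu$ the sums $\sum_t \nu_t$ and $\sum_t \nu_t^3$ collapse to $T\nu$ and $T\nu^3$.

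I would assemble this one-step inequality from four components. First, the smoothness bound on $J$: since $J$ is $L_J$-smooth with $L_J = L_f + L_f^2/\mu$ by combining Assumptions \ref{assump:non_singular} and \ref{assump:lip_sto_grad}, I expand $J(\theta_{t+1})$, use the projection first-order optimality for $\tilde\theta_{t+1}$, and split $\nabla J(\theta_t) = \nabla_\theta F(\theta_t,\omega_t) + (\nabla J(\theta_t) - \nabla_\theta F(\theta_t,\omega_t))$ so that the error cross terms produce the three ingredients of $\mathfrak{M}_t$. This yields $\EE[J(\theta_{t+1})] \leq \EE[J(\theta_t)] - \Theta(\nu/\gamma)\|\tilde\theta_{t+1}-\theta_t\|^2 + \Theta(\nu\gamma)\bigl(\|p_t - \nabla_\theta F(\theta_t,\omega_t)\|^2 + L_f^2 \|\omega_t - \omega^*(\theta_t)\|^2\bigr)$; the stricter lower bound on $b$ involving $L_f+L_f^2/\mu$ is exactly what keeps the implicit $L_J\nu$ factor absorbable. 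Second, dual tracking: the $\mu$-strong concavity of $F(\theta_t,\cdot)$ gives a contraction $(1-\mu\eta\nu/2)$ on $\|\omega_t-\omega^*(\theta_t)\|^2$ after the ascent step, and the Lipschitzness $\|\omega^*(\theta_{t+1})-\omega^*(\theta_t)\|\leq (L_f/\mu)\|\theta_{t+1}-\theta_t\|$ adds an $O(\nu^2\gamma^{-2}\|\tilde\theta_{t+1}-\theta_t\|^2)$ error. Third and fourth, for the STORM recursion I rewrite
$$p_{t+1} - \nabla_\theta F(\theta_{t+1},\omega_{t+1}) = (1-\alpha\nu^2)\bigl[p_t - \nabla_\theta F(\theta_t,\omega_t)\bigr] + (1-\alpha\nu^2) M^\theta_{t+1} + \alpha\nu^2 Z^\theta_{t+1},$$
where $M^\theta_{t+1}$ and $Z^\theta_{t+1}$ are zero-mean conditional on $\mathcal{F}_t$, so that squaring and taking conditional expectations, together with Assumptions \ref{assump:lip_sto_grad} and \ref{assump:bounded_var}, yields $\EE\|p_{t+1}-\nabla_\theta F(\theta_{t+1},\omega_{t+1})\|^2 \leq (1-\alpha\nu^2)\EE\|p_t-\nabla_\theta F(\theta_t,\omega_t)\|^2 + 2L_f^2\,\EE\|(\theta_{t+1},\omega_{t+1})-(\theta_t,\omega_t)\|^2 + \alpha^2\nu^4\sigma^2$, and analogously for $d_t$. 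Because the increments are $\nu(\tilde\theta_{t+1}-\theta_t)$ and $\nu(\tilde\omega_{t+1}-\omega_t)$, the displacement term contributes $O(\nu^2)$ per step, matching the $\alpha\nu^2$ contraction strength.

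Combining these four bounds with $\lambda_1,\lambda_2,\lambda_3$ chosen so that the positive cross terms $\Theta(\nu\gamma)\|p_t - \nabla_\theta F(\theta_t,\omega_t)\|^2$ and $\Theta(\nu\eta)\|d_t-\nabla_\omega F(\theta_t,\omega_t)\|^2$ produced by the smoothness step are dominated by the contractive $\alpha\nu^2\lambda_2$ and $\beta\nu^2\lambda_3$ of the STORM step, the conditions $\eta \leq \mu/(6L_f^2)$, $\gamma \leq \mu^2\eta/(9L_f^2)$, and $\nu \leq 1/[3(T+b)^{1/3}]$ with the stated $b$ become precisely the quantitative thresholds that make the negative $\EE[\mathfrak{M}_t^2]$ contribution dominate. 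Telescoping then yields $c_0\nu\sum_{t=0}^{T-1}\EE[\mathfrak{M}_t^2] \leq \Phi_0 + c_1\sigma^2 T\nu^3$, i.e., $T^{-1}\sum_t\EE[\mathfrak{M}_t^2] = O\bigl(1/(T\nu) + \nu^2\bigr) = O(T^{-2/3})$, and Cauchy--Schwarz gives $T^{-1}\sum_t\EE[\mathfrak{M}_t] = O(T^{-1/3})$, hence $T_\varepsilon = O(\varepsilon^{-3})$.

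The main obstacle is calibrating the Lyapunov weights so that every positive cross term is absorbed simultaneously. In particular, to bound $\|(\theta_{t+1},\omega_{t+1})-(\theta_t,\omega_t)\|^2$ inside the STORM recursion one has to further control $\|\tilde\omega_{t+1}-\omega_t\|^2$ by expanding the ascent step, splitting $d_t = \nabla_\omega F(\theta_t,\omega_t) + (d_t-\nabla_\omega F(\theta_t,\omega_t))$, and using strong concavity to relate $\|\nabla_\omega F(\theta_t,\omega_t)\|$ back to $\|\omega_t-\omega^*(\theta_t)\|$; this creates a circular dependence that is closed only by the joint conditions on $\eta,\gamma,\nu,b$ in the theorem. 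The stricter bound on $b$ compared to Theorem \ref{thm:variance_reduce} (with $L_f + L_f^2/\mu$ in place of just $L_f^2$) reflects exactly this sharper accounting of the $L_J$-smoothness of $J$ that is forced by holding $\nu$ fixed rather than letting $\nu_t$ shrink over time.
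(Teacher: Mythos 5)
Your proposal matches the paper's proof in all essentials: the same four one-step bounds (smoothness of $J$ with $L_J = L_f + L_f^2/\mu$, strong-concavity contraction for the dual tracking error, and the two STORM-type recursions for $p_t$ and $d_t$), the same Lyapunov function (the paper's weight $2\gamma/(\mu\eta\nu_{t-1})$ on the gradient-error terms reduces to your constant $\lambda_2,\lambda_3$ when $\nu_t\equiv\nu$), and the same telescoping-plus-Jensen (your Cauchy--Schwarz) step to pass from the squared metric to $\mathfrak{M}_t$ without a logarithmic factor. Your observation that the fixed step size makes $\nu_t^{-1}-\nu_{t-1}^{-1}$ vanish is exactly the remark the paper makes to adapt the decaying-step analysis.
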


\begin{remark}[Comparisions] Algorithm \ref{alg:variance_reduce} achieves the sample complexity of the order $\tilde{O}(\varepsilon^{-3})$ for the decaying step size and $O(\varepsilon^{-3})$ for the fixed step size, matching the best known sample complexity for the NCSC minimax problem in the recent work \citet{luo2020stochastic}. However, the algorithm in \citet{luo2020stochastic} is double-loop and requires a large batch of data to construct a variance reduction checkpoint. In contrast, Algorithm \ref{alg:variance_reduce} is single-timescale and  single-loop with only sampling one data point each round.  
\end{remark}



\section{Proof Outlines}

In this section, we provide proof outlines for Theorem \ref{thm:single_time}, Theorem \ref{thm:single_time_fix}, Theorem \ref{thm:variance_reduce}, and Theorem \ref{thm:variance_reduce_fix}. In our proofs, we set the parameters $\alpha = \beta$ for simplicity, which is also the setting in the main theorems. The proofs can be extended to the case where $\alpha \neq \beta$ without much effort. For compact notations, we define the error between $(p_t, d_t)$ and the gradient $\nabla F(\theta, \omega)$ as follows
\begin{align*}
\Delta_t^p :=  p_t-\nabla_\theta F(\theta_t, \omega_t), ~\text{ and }~ \Delta_t^d :=  d_t-\nabla_\omega F(\theta_t, \omega_t). 
\end{align*}
We also denote $\overline{\mathfrak{M}}_t :=\gamma^{-2}\|\tilde{\theta}_{t+1} - \theta_t\|^2 + \|\nabla_\theta F(\theta_t,\omega_t)- p_t\|^2 + L_F^2\|\omega_t-\omega^*(\theta_t)\|^2$. Thus, by Jensen's inequality, there is $T^{-1} \sum_{t=0}^{T-1}\EE[\mathfrak{M}_t] \leq \big( 3 T^{-1} \sum_{t=0}^{T-1}  \EE [\overline{\mathfrak{M}}_t]\big)^{1/2}$.

\subsection{Proof Outline for Theorem \ref{thm:single_time} and Theorem \ref{thm:single_time_fix}}

\begin{proof}[Proof Sketch of Theorem \ref{thm:single_time} and Theorem \ref{thm:single_time_fix}]

With taking expectation over the randomness $\{\xi_t\}_{t=0}^{T-1}$, we start the proof by Lemma \ref{lem:primal_J_bound}, which gives the bound for $\EE\| \tilde{\theta}_{t+1} - \theta_t\|^2$ as follows
\begin{align}
\hspace*{-0.18cm} 3\nu_t/(4\gamma)\cdot \EE  \|\tilde{\theta}_{t+1} - \theta_t \|^2 \leq 2L_F^2 \gamma \nu_t \EE \|\omega_t - \omega(\theta_t)^*\|^2  + 4\gamma\nu_t \EE \|\Delta_t^p\|^2 +\EE [J(\theta_t)-J(\theta_{t+1})].\label{eq:ref_1}
\end{align}
To guarantee the convergence, we still need to understand the upper bounds of the remaining terms, namely, $\EE \|\omega_t - \omega^*(\theta_t)\|^2$ and $\EE\|\Delta^p_t\|^2$. Then, by Lemma \ref{lem:decomp_omega_opt}, we obtain
\begin{align}
\begin{aligned}\label{eq:ref_2}
\EE\|\omega_{t+1} - \omega^*(\theta_{t+1})\|^2 &\leq (1-\mu\eta\nu_t/4) \EE \|\omega_t - \omega^*(\theta_t)\|^2 - 3\nu_t/4\cdot\EE\|\tilde{\omega}_{t+1} - \omega_t \|^2  \\
&\quad + 75L_\omega^2 \nu_t /(16\mu\eta)\cdot \EE \|\tilde{\theta}_{t+1}-\theta_t\|^2  + 75\eta \nu_t/(16\mu)\cdot \EE \|\Delta_t^d\|^2,
\end{aligned}
\end{align}
which shows a contraction of the term $\EE \|\omega_{t+1} - \omega^*(\theta_{t+1})\|^2$ plus some noise terms as well as a deduction of the term $\EE\|\tilde{\omega}_{t+1} - \omega_t\|^2$ which will be eliminated in the end. However, this inequality introduces an extra $\EE \|\Delta^d_t\|^2$ term. Thus, it remains to explore the bounds for $\EE\|\Delta^p_t\|^2$ and $\EE \|\Delta^d_t\|^2$. By Lemma \ref{lem:grad_var_bound}, we obtain the contraction of the terms $\EE\|\Delta^p_{t+1}\|^2$ and $\EE \|\Delta^d_{t+1}\|^2$ plus gradient variance terms $\sigma^2$ and some other noise terms, which are
\begin{align*}
&\EE \|\Delta_{t+1}^p\|^2 \leq  (1- \alpha\nu_t)  \EE \|\Delta_t^p\|^2 +  \alpha^2\nu_t^2 \sigma^2 +  9\nu_t L_F^2/(8\alpha) \cdot \EE (\|\tilde{\theta}_{t+1} - \theta_t \|^2 +\|\tilde{\omega}_{t+1}-\omega_t\|^2), \\ 
&\EE \|\Delta_{t+1}^d\|^2 \leq (1-\alpha\nu_t)  \EE \|\Delta_t^d\|^2 + \alpha^2 \nu_t^2 \sigma^2 + 9\nu_t L_F^2/(8\alpha)\cdot \EE (\|\tilde{\theta}_{t+1} - \theta_t \|^2 +\|\tilde{\omega}_{t+1}-\omega_t\|^2).
\end{align*}    
The above two inequalities also control the difference between the stochastic gradient approximation and the true gradients. Next, we define the Lyapunov function as $Q_t := J(\theta_t) - J^*  +  \tfrac{10L_F^2 \gamma}{\mu\eta} \|\omega_t - \omega^*(\theta_t)\|^2 + \tfrac{2\gamma}{\mu\eta} (\|\Delta^p_t\|^2 + \|\Delta^d_t\|^2)$, where $Q_t \geq 0$ since $J(\theta_t) \geq J^*$. Properly scaling the above inequalities, summing them together, and rearranging the terms, we have 
\begin{align}
&\nu_T  \EE[\overline{\mathfrak{M}}_t] \leq \nu_t  \EE[\overline{\mathfrak{M}}_t] \leq 576  \sigma^2 \nu_t^2/(\mu\eta) + 16/\gamma \cdot \EE [Q_t - Q_{t+1}], \label{eq:single_time_al}
\end{align}
where we set the parameters as in Theorem \ref{thm:single_time} to control $\gamma$, $\eta$, and $\nu_t$ in a proper small scale to guarantee all the terms in $\overline{\mathfrak{M}}_t$ to be positive, and also use $\nu_T \leq \nu_t, \forall t \leq T$ since $\{\nu_t\}_{t\geq 0}$ is non-increasing. 

Note that the above analysis holds for non-increasing step size, i.e., $\nu_t$ can be either decaying or constant, under certain conditions. Thus, one can make use of \eqref{eq:single_time_al} to derive the results for both Theorem \ref{thm:single_time} and Theorem \ref{thm:single_time_fix}. Multiplying both sides by $1/\nu_T$ and taking average on both sides from $t=0$ to $T-1$, we have $T^{-1} \sum_{t=0}^{T-1}\EE [\mathfrak{M}_t] \leq ( 3T^{-1} \sum_{t=0}^{T-1}  \EE [\overline{\mathfrak{M}}_t] )^{1/2}  \leq \tilde{O}(T^{-1/4})$ with setting the decaying step size as $\nu_t = O(t^{-1/2})$, which proves Theorem \ref{thm:single_time}. On the other hand, if we adopt the fixed step size as $\nu_t = \nu = O(T^{-1/2})$, we have $T^{-1} \sum_{t=0}^{T-1}\EE [\mathfrak{M}_t] \leq ( 3T^{-1} \sum_{t=0}^{T-1}  \EE [\overline{\mathfrak{M}}_t] )^{1/2}  \leq O(T^{-1/4})$, which proves Theorem \ref{thm:single_time_fix}. Please refer to Section \ref{sec:proof_single_time} in the supplemental material for detailed proofs.
\end{proof}

\vspace{-0.35cm}
\subsection{Proof Outline for Theorem \ref{thm:variance_reduce} and Theorem \ref{thm:variance_reduce_fix}}

\vspace{-0.05cm}

\begin{proof}[Proof Sketch of Theorem \ref{thm:variance_reduce} and Theorem \ref{thm:variance_reduce_fix}]
The proof of Theorem \ref{thm:variance_reduce} also applies the same bounds as \eqref{eq:ref_1} and \eqref{eq:ref_2} by replacing the Lipschitz constant $L_F$ with $L_f$ due to Assumption \ref{assump:lip_sto_grad}. The main difference lies in the bounds for the terms $\EE\|\Delta^p_t\|^2$ and $\EE\|\Delta^d_t\|^2$. By applying Lemma \ref{lem:vr_grad_var_bound}, we have 
\begin{align}
\begin{aligned}
&\nu_t^{-1}\EE \|\Delta^p_{t+1}\|^2 -\nu_{t-1}^{-1}\EE \|\Delta^p_t\|^2 \\
&\qquad \leq  -\tfrac{21}{4}  \nu_t  \EE \|\Delta^p_t\|^2 + O(\nu_t) \cdot \EE(\|\tilde{\theta}_{t+1} - \theta_t  \|^2 + \|\tilde{\omega}_{t+1}-\omega_t  \|^2) + O(\nu_t^3 \sigma^2), \label{eq:del_p}
\end{aligned}
\end{align}
\vspace*{-0.2cm}
\begin{align}
\begin{aligned}
&\nu_t^{-1}\EE \|\Delta^d_{t+1}\|^2 -\nu_{t-1}^{-1}\EE \|\Delta^d_t\|^2 \\
&\qquad \leq  -\tfrac{21}{4}  \nu_t  \EE \|\Delta^d_t\|^2 + O(\nu_t) \cdot \EE(\|\tilde{\theta}_{t+1} - \theta_t  \|^2 + \|\tilde{\omega}_{t+1}-\omega_t  \|^2) + O(\nu_t^3 \sigma^2). \label{eq:del_d}
\end{aligned}
\end{align}
We define the Lyapunov function as $S_t := J(\theta_t) - J^* +  \tfrac{10L_f^2 \gamma}{\mu\eta} \|\omega_t - \omega^*(\theta_t)\|^2 + \tfrac{2\gamma}{\mu\eta\nu_{t-1}} (\|\Delta^p_t\|^2+\|\Delta^d_t\|^2)$, where $S_t \geq 0$. Properly scaling the above results and \eqref{eq:ref_1}, \eqref{eq:ref_2},  summing them together, and rearranging the terms, due to $\nu_T \leq \nu_t$, we have 
\begin{align}
\nu_T\EE[\overline{\mathfrak{M}}_t] \leq    \nu_t\EE[\overline{\mathfrak{M}}_t] \leq  16\times 288\sigma^2\nu_t^3/(\mu\eta) + 16/\gamma \cdot \EE[S_t-S_{t+1}], \label{eq:variance_al}
\end{align}
by setting the parameters $\gamma$, $\eta$ and $\nu_t$ in a proper small scale. 

The above analysis holds for non-decreasing $\nu_t$, where $\nu_t$ can be decaying or constant. Therefore, we employ \eqref{eq:variance_al} to complete the proofs of both Theorem \ref{thm:variance_reduce} and Theorem \ref{thm:variance_reduce_fix}. Further multiplying both sides of \eqref{eq:variance_al} by $1/\nu_T$ and taking average on both sides from $t=0$ to $T-1$, we have $T^{-1} \sum_{t=0}^{T-1}\EE [\mathfrak{M}_t] \leq ( 3T^{-1} \sum_{t=0}^{T-1}  \EE [\overline{\mathfrak{M}}_t] )^{1/2} \leq \tilde{O}(T^{-1/3})$ with setting the decaying step size as $\nu_t = O(t^{-1/3})$, which gives the proof of Theorem \ref{thm:variance_reduce}. If we set the step size fixed as $\nu_t = \nu = O(T^{-1/3})$, we have $T^{-1} \sum_{t=0}^{T-1}\EE [\mathfrak{M}_t] \leq ( 3T^{-1} \sum_{t=0}^{T-1}  \EE [\overline{\mathfrak{M}}_t] )^{1/2} \leq O(T^{-1/3})$, which finishes the proof of Theorem \ref{thm:variance_reduce_fix}. Please refer to Section \ref{sec:proof_variance_reduce} in the supplemental material for detailed proofs.
\end{proof}

\section{Conclusions and Future Works}
In this paper, we develop two single-timescale stochastic gradient algorithms with provable approximation guarantees for stochastic NCSC minimax optimization inspired by nonlinear TD learning problem. The first algorithm implements a momentum update which can achieve an $O(\varepsilon^{-4})$ sample complexity reaching an $\varepsilon$ approximation, matching existing results without using variance reduction techniques. Our second algorithm improves upon the first one by further applying variance reduction on top of momentum, matching the best known $O(\varepsilon^{-3})$ sample complexity. As an initial attempt, our work sheds lights on designing efficient single-timescale algorithms for NCSC minimax problem in an online setting. As our future work, we will further investigate the convergence of our proposed algorithms with non-i.i.d. sample (e.g. Markovian sample).

\bibliographystyle{ims}
\bibliography{rl_ref}

\onecolumn
\vspace{1em}
\centerline{ {\LARGE Supplementary Material} }
\renewcommand{\thesection}{\Alph{section}}
\setcounter{section}{0}

\vspace{1em}

The supplementary material is organized as follows: \textbf{(1)} Section \ref{sec:supp_pre} provides preliminaries for our proofs. \textbf{(2)} Section \ref{sec:proof_single_time} provides proofs for Theorem \ref{thm:single_time} and Theorem \ref{thm:single_time_fix} as well as the associated lemmas. \textbf{(3)} Section \ref{sec:proof_variance_reduce} provides proofs for Theorem \ref{thm:variance_reduce}, Theorem \ref{thm:variance_reduce_fix}, and the associated lemmas. 

\section{Preliminaries} \label{sec:supp_pre}
In this section, we first give a detailed explanation on the two-timescale and single-timescale algorithms. Then, we provide several supporting lemmas already given in previous papers that is helpful for the proofs of our main theorems.

\subsection{Two-Timescale and Single-Timescale Algorithms} \label{sec:timescale}

\textbf{Two-timescale algorithm.} In this paper, two-timescale algorithms refer to the subclass of algorithms updating $\theta$ and $\omega$ with significantly different frequencies or learning rates. More specifically, consider updating rules of a primal-dual algorithm in the following form
\begin{align*}
&\theta_{t+1} = \theta_t - \varrho_t A_t, \\
&\omega_{t+1} = \omega_t + \varsigma_t B_t,
\end{align*}
where $\varrho_t  \geq 0$ and $\varsigma_t \geq 0$ are learning rates, and $A_t, B_t$ could be either stochastic gradients or full gradients together. Then, this algorithm is a two-timescale algorithm if $\varrho_t / \varsigma_t \rightarrow 0$ or $\varrho_t /  \varsigma_t   \rightarrow +\infty$ as $ t \rightarrow \infty$, which means $\varrho_t, \varsigma_t$ have distinct dependency on the time $t$. In addition, there are also stochastic primal-dual algorithms that update the variable on the faster timescale by sampling a large batch of data points or by iterating to a tiny convergence error before a one-step update for the other variable on the slower timescale. These algorithms implicitly perform two-timescale updates by imbalanced sampling.

\vspace{5pt}
\noindent \textbf{Single-timescale algorithm.} In this paper, we consider an algorithm as a single-timescale algorithm if the algorithm update $\theta$ and $\omega$ with one stochastic gradient or full gradient each time and also with learning rates for primal sides and dual sides having the same orders on the time $t$, i.e.,
\begin{align*}
&\theta_{t+1} = \theta_t - \varrho_t A_t, \\
&\omega_{t+1} = \omega_t + \varsigma_t B_t,
\end{align*}
where $\varrho_t  \geq 0$ and $\varsigma_t \geq 0$ are learning rates and $0 < C \leq  \varrho_t / \varsigma_t \leq C'$ as $t\rightarrow +\infty$ \citep{dalal2017finite}. Here $0< C, C' < +\infty$ are two positive constants.  $A_t$ and $B_t$ could be both stochastic gradients or both full gradients. In our proposed algorithms, under the unconstrained setting, $\varrho_t / \varsigma_t =  (\gamma \nu_t) / (\eta \nu_t) = \gamma / \eta$ with $0 < \gamma / \eta < +\infty$ where $\gamma, \eta$ are two positive constants.

\subsection{Supporting Lemmas}

Recall that in Algorithm \ref{alg:single_time} and Algorithm \ref{alg:variance_reduce}, the projection operator $\mathcal{P}_\Theta(\theta - \gamma p)$ and $\Pi_{\Omega}(\theta - \eta d)$ are defined as 
\begin{align*}
\mathcal{P}_\Theta(\theta - \gamma p) = \argmin_{\tilde{\theta} \in \Theta} \|\tilde{\theta} -(\theta - \gamma p)\|^2,    
\end{align*}
and
\begin{align*}
\Pi_{\Omega}(\omega - \eta d) = \argmin_{\tilde{\omega} \in \Omega} \|\tilde{\omega} - (\omega - \eta d)\|^2.
\end{align*}
Moreover, recall the definition of the function $J(\theta)$, which is
\begin{align*}
    J(\theta) = \max_{\omega \in \Omega} \{F(\theta, \omega) = \EE_{\xi} [f(\theta, \omega; \xi)] \},
\end{align*}
with defining 
\begin{align*}
\omega^*(\theta):=\argmax_{\omega \in \Omega} F(\theta, \omega).    
\end{align*}
The following lemma show that the gradient of $J(\theta)$ is also Lipschitz continuous under our assumptions.
\begin{lemma}[\citet{lin2019gradient}] \label{lem:grad_lip_J} Under Assumptions \ref{assump:lip_grad}, \ref{assump:non_singular} and \ref{assump:lip_sto_grad}, the gradient of the function $J(\theta) = F(\theta, \omega^*(\theta))$ w.r.t. $\theta$ is Lipschitz continuous, which is
\begin{align*}
\|\nabla J(\theta) - \nabla J(\theta')\| \leq L_J \|\theta-\theta'\|, \quad \forall \theta, \theta' \in \Theta, 
\end{align*}
where the Lipschitz constant is 
\begin{align*}
L_J = L_F + \frac{L^2_F}{\mu} \quad \text{ for Algorithm \ref{alg:single_time}}, 
\end{align*}
or 
\begin{align*}
L_J = L_f + \frac{L^2_f}{\mu} \quad \text{ for Algorithm \ref{alg:variance_reduce}}. 
\end{align*}
\end{lemma}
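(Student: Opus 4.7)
The plan is to reduce the statement to three standard ingredients: (i) a Danskin-type formula for $\nabla J$, (ii) Lipschitz continuity of the maximizer map $\omega^*(\cdot)$, and (iii) the joint Lipschitz smoothness of $\nabla F$ (or $\nabla f$) from Assumption~\ref{assump:lip_grad} (respectively Assumption~\ref{assump:lip_sto_grad}). Under Assumption~\ref{assump:non_singular}, $F(\theta,\cdot)$ is $\mu$-strongly concave, so $\omega^*(\theta)$ is uniquely defined, and by Danskin's theorem adapted to constrained strongly concave inner problems,
\begin{equation*}
\nabla J(\theta)=\nabla_\theta F(\theta,\omega^*(\theta)).
\end{equation*}
With this identity, for any $\theta,\theta'\in\Theta$, Assumption~\ref{assump:lip_grad} immediately yields
\begin{equation*}
\|\nabla J(\theta)-\nabla J(\theta')\|\le L_F\bigl(\|\theta-\theta'\|+\|\omega^*(\theta)-\omega^*(\theta')\|\bigr),
\end{equation*}
so the entire task reduces to proving $\|\omega^*(\theta)-\omega^*(\theta')\|\le (L_F/\mu)\|\theta-\theta'\|$.

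To prove Lipschitz continuity of $\omega^*(\cdot)$, I would use the first-order optimality (variational inequality) characterization of the constrained maximizer. Writing $u=\omega^*(\theta)$, $u'=\omega^*(\theta')$, the conditions
\begin{equation*}
\langle\nabla_\omega F(\theta,u),u'-u\rangle\le 0,\qquad \langle\nabla_\omega F(\theta',u'),u-u'\rangle\le 0
\end{equation*}
add to give $\langle\nabla_\omega F(\theta,u)-\nabla_\omega F(\theta',u'),u'-u\rangle\le 0$. I then split this difference through the intermediate point $\nabla_\omega F(\theta,u')$: the piece $\langle\nabla_\omega F(\theta,u)-\nabla_\omega F(\theta,u'),u'-u\rangle$ is bounded below by $\mu\|u-u'\|^2$ by strong concavity (Assumption~\ref{assump:non_singular}), while the piece $\langle\nabla_\omega F(\theta,u')-\nabla_\omega F(\theta',u'),u'-u\rangle$ is controlled by Cauchy--Schwarz together with the joint Lipschitz smoothness $\|\nabla_\omega F(\theta,u')-\nabla_\omega F(\theta',u')\|\le L_F\|\theta-\theta'\|$. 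Rearranging and canceling one factor of $\|u-u'\|$ gives exactly $\|\omega^*(\theta)-\omega^*(\theta')\|\le (L_F/\mu)\|\theta-\theta'\|$.

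Combining the two displays yields $\|\nabla J(\theta)-\nabla J(\theta')\|\le L_F(1+L_F/\mu)\|\theta-\theta'\|=(L_F+L_F^2/\mu)\|\theta-\theta'\|$, which is the claimed constant for Algorithm~\ref{alg:single_time}. For Algorithm~\ref{alg:variance_reduce}, the argument is \emph{identical} after replacing $L_F$ with $L_f$: Assumption~\ref{assump:lip_sto_grad} delivers the same joint $L_f$-Lipschitz property for $\nabla F$ via Jensen's inequality (as noted in the discussion following that assumption), so the same chain of inequalities gives $L_J=L_f+L_f^2/\mu$.

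The main subtlety, rather than any calculation, is the rigorous justification of the Danskin step when $\Omega$ is a general closed convex set rather than $\mathbb{R}^n$; the standard proof goes through because $\omega^*(\theta)$ is single-valued by strong concavity and locally Lipschitz in $\theta$ (which is a byproduct of the Lipschitz-stability argument above), so the envelope identity $\nabla J(\theta)=\nabla_\theta F(\theta,\omega^*(\theta))$ holds even without assuming $\nabla_\omega F(\theta,\omega^*(\theta))=0$. Since this is precisely the cited result of \citet{lin2019gradient}, I would simply invoke it and focus the self-contained portion of the write-up on the two-line variational-inequality argument for Lipschitzness of $\omega^*$.
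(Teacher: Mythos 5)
Your proof is correct. The paper does not prove this lemma itself---it is imported from \citet{lin2019gradient}---but your argument is the standard reconstruction of that cited proof: the Danskin identity $\nabla J(\theta)=\nabla_\theta F(\theta,\omega^*(\theta))$ (which the paper also uses without further comment in the proof of Lemma~\ref{lem:primal_J_bound}), combined with the $L_F/\mu$-Lipschitz continuity of $\omega^*(\cdot)$, which is exactly the paper's Lemma~\ref{lem:omega_lip} and which your variational-inequality-plus-strong-monotonicity derivation establishes correctly for general closed convex $\Omega$. The substitution of $L_f$ for $L_F$ under Assumption~\ref{assump:lip_sto_grad} is likewise justified by the Jensen-inequality remark following that assumption, so the constants $L_J=L_F+L_F^2/\mu$ and $L_J=L_f+L_f^2/\mu$ come out as claimed.
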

Furthermore, viewing $\omega^*(\theta)$ as a mapping from the set $\Theta$ to the set $\Omega$, it also has the property of Lipschitz continuity. 
\begin{lemma}[\citet{lin2019gradient}] \label{lem:omega_lip} Under Assumptions \ref{assump:non_singular} and \ref{assump:lip_grad}, the mapping $\omega^*(\theta)=\argmax_{\omega \in \Omega} F(\theta, \omega)$ is Lipschitz continuous, which is
\begin{align*}
\|\omega^*(\theta) - \omega^*(\theta')\| \leq L_\omega \|\theta-\theta'\|, \quad \forall \theta, \theta' \in \Theta,     
\end{align*}
where the Lipschitz constant is 
\begin{align*}
L_\omega = \frac{L_F}{\mu}.
\end{align*}
\end{lemma}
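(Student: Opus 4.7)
The plan is to prove the Lipschitz continuity of $\omega^*(\theta)$ by combining the first-order optimality condition for constrained concave maximization with the strong concavity of $F(\theta,\cdot)$ and the Lipschitz smoothness of $\nabla F$. This is a classical sensitivity-analysis argument for parametric concave programs.

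First, fix $\theta, \theta' \in \Theta$ and write $\omega_1 = \omega^*(\theta)$, $\omega_2 = \omega^*(\theta')$. Since $\omega_1$ maximizes the concave function $F(\theta,\cdot)$ over the convex set $\Omega$, the variational inequality characterization of optimality gives $\langle \nabla_\omega F(\theta,\omega_1),\, \omega-\omega_1 \rangle \le 0$ for every $\omega \in \Omega$; an analogous inequality holds at $\omega_2$. Substituting $\omega = \omega_2$ into the first inequality and $\omega = \omega_1$ into the second, and adding, I get
\begin{equation*}
\langle \nabla_\omega F(\theta',\omega_2) - \nabla_\omega F(\theta,\omega_1),\; \omega_2 - \omega_1 \rangle \;\ge\; 0.
\end{equation*}

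Next, I split the gradient difference by inserting $\pm \nabla_\omega F(\theta',\omega_1)$, yielding
\begin{equation*}
\langle \nabla_\omega F(\theta',\omega_2) - \nabla_\omega F(\theta',\omega_1),\; \omega_2 - \omega_1 \rangle \;+\; \langle \nabla_\omega F(\theta',\omega_1) - \nabla_\omega F(\theta,\omega_1),\; \omega_2 - \omega_1 \rangle \;\ge\; 0.
\end{equation*}
The first inner product is bounded above by $-\mu \|\omega_2-\omega_1\|^2$ by the $\mu$-strong concavity in Assumption \ref{assump:non_singular} (equivalent monotonicity form $\langle \nabla_\omega F(\theta',u)-\nabla_\omega F(\theta',v),\, u-v\rangle \le -\mu\|u-v\|^2$). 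Rearranging and applying Cauchy--Schwarz to the second inner product gives
\begin{equation*}
\mu \|\omega_2-\omega_1\|^2 \;\le\; \bigl\|\nabla_\omega F(\theta',\omega_1) - \nabla_\omega F(\theta,\omega_1)\bigr\|\cdot \|\omega_2-\omega_1\|.
\end{equation*}

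Finally, Assumption \ref{assump:lip_grad} (Lipschitz continuity of the full gradient $\nabla F$) implies in particular $\|\nabla_\omega F(\theta',\omega_1)-\nabla_\omega F(\theta,\omega_1)\| \le L_F\|\theta'-\theta\|$. Dividing both sides by $\|\omega_2-\omega_1\|$ (the conclusion is trivial if this is zero) produces the desired bound $\|\omega^*(\theta)-\omega^*(\theta')\| \le (L_F/\mu)\|\theta-\theta'\|$. The only delicate point is ensuring that the optimality condition is valid at $\omega^*(\theta)$ and $\omega^*(\theta')$, which follows from the existence and uniqueness of the maximizer guaranteed by strong concavity together with convexity and closedness of $\Omega$ in Assumption \ref{assump:fea_set}; beyond that, the argument is a short chain of standard inequalities and should be essentially routine.
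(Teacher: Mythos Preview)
Your argument is correct and is the standard sensitivity-analysis proof of this result: combine the variational inequality characterizations of $\omega^*(\theta)$ and $\omega^*(\theta')$, use the $\mu$-strong monotonicity of $-\nabla_\omega F(\theta',\cdot)$, and finish with Cauchy--Schwarz and the $L_F$-Lipschitz bound on $\nabla_\omega F$ in the $\theta$-argument. There is nothing to compare against in the paper itself: the lemma is stated as a result from \citet{lin2019gradient} and no proof is given here, so your self-contained derivation is a strict addition rather than an alternative to anything the authors wrote.
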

Next, we present the optimality condition via variational inequality for the general constrained convex optimization problem.
\begin{lemma}[\citet{nesterov2018lectures}] \label{lem:var_ineq} Consider the constrained convex minimization problem in the following form
\begin{align*}
    \min_{x\in X} g(x),
\end{align*}
where $g(x)$ is a differentiable convex function and the set $X$ is a convex set. Then, a point $x^* \in X$ is the minimizer of this constrained minimization problem iff 
\begin{align*}
    \langle \nabla g(x^*), x-x^* \rangle \geq 0, \quad \forall x \in X.
\end{align*}
\end{lemma}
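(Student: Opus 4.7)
The plan is to prove both directions of this standard iff characterization separately, using only the convexity of $g$ and $X$ together with the differentiability of $g$.

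For the necessity direction, I would fix an arbitrary $x \in X$ and exploit convexity of $X$ to construct a feasible line segment originating at $x^*$. Specifically, for $t \in [0,1]$, the point $x_t := (1-t) x^* + t x = x^* + t(x - x^*)$ lies in $X$. Setting $h(t) := g(x_t)$, the fact that $x^*$ minimizes $g$ on $X$ forces $h(t) \geq h(0)$ for every $t \in [0,1]$. Dividing the difference $h(t) - h(0)$ by $t > 0$ and letting $t \to 0^+$, I obtain $h'(0^+) \geq 0$, and by the chain rule $h'(0^+) = \langle \nabla g(x^*), x - x^* \rangle$, which yields the stated variational inequality.

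For the sufficiency direction, I would invoke the first-order characterization of differentiable convex functions: for all $x \in X$,
\begin{align*}
g(x) \geq g(x^*) + \langle \nabla g(x^*), x - x^* \rangle.
\end{align*}
Combined with the hypothesis $\langle \nabla g(x^*), x - x^* \rangle \geq 0$, this immediately gives $g(x) \geq g(x^*)$ for every $x \in X$, so $x^*$ is a global minimizer over $X$.

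There is no substantive obstacle here, as the lemma is a classical result cited to Nesterov's textbook. The only subtlety worth flagging is that the one-sided derivative $h'(0^+)$ exists as a genuine limit because $g$ is differentiable at $x^*$, so no subdifferential machinery is needed; and the convexity of $X$ is precisely what guarantees that the segment $\{x_t\}_{t \in [0,1]}$ remains feasible so that the comparison $h(t) \geq h(0)$ is legitimate.
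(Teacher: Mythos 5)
Your proof is correct. The paper itself gives no argument for this lemma --- it is stated as a citation to \citet{nesterov2018lectures} and used as a black box --- so there is nothing to compare against; your two-direction argument (the feasible-segment/directional-derivative computation for necessity, which in fact uses only differentiability of $g$ and convexity of $X$, and the first-order convexity inequality $g(x) \geq g(x^*) + \langle \nabla g(x^*), x - x^* \rangle$ for sufficiency) is exactly the standard textbook proof and is complete.
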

Lemma \ref{lem:var_ineq} still holds when the optimization problem is unconstrained. 
In addition, for any gradient $L$-Lipschitz function (could be non-convex), we have the following lemma.
\begin{lemma} \label{lem:lip_ineq} Let $g(x)$ be a non-convex function with $L$-Lipschitz gradient, i.e., 
\begin{align*}
\| \nabla g(x) - \nabla g(y) \| \leq L \|x-y\|, \forall x, y.    
\end{align*}
Then the following inequality holds
\begin{align*}
|g(x) - g(y) - \langle \nabla g(y), x-y \rangle| \leq \frac{L}{2} \|x-y\|^2, \forall x,y.
\end{align*}
    
\end{lemma}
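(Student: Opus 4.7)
The plan is to apply the fundamental theorem of calculus along the line segment joining $y$ to $x$, and then bound the resulting error term by the Lipschitz hypothesis on $\nabla g$. Concretely, define the one-variable auxiliary function $\varphi(t) := g(y + t(x-y))$ on $[0,1]$. Since $g$ is continuously differentiable, so is $\varphi$, with $\varphi'(t) = \langle \nabla g(y + t(x-y)), x-y \rangle$. The fundamental theorem of calculus then gives
\begin{align*}
g(x) - g(y) = \varphi(1) - \varphi(0) = \int_0^1 \langle \nabla g(y + t(x-y)), x-y \rangle \, dt.
\end{align*}

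Next, I would subtract the linear term $\langle \nabla g(y), x - y\rangle = \int_0^1 \langle \nabla g(y), x-y\rangle \, dt$ from both sides, obtaining the exact remainder representation
\begin{align*}
g(x) - g(y) - \langle \nabla g(y), x-y\rangle = \int_0^1 \langle \nabla g(y + t(x-y)) - \nabla g(y), x-y\rangle \, dt.
\end{align*}
Taking absolute values, pulling the absolute value inside the integral (using $|\int h\,dt| \le \int |h|\,dt$), and then applying Cauchy--Schwarz inside the integrand yields
\begin{align*}
|g(x) - g(y) - \langle \nabla g(y), x-y\rangle| \le \int_0^1 \|\nabla g(y + t(x-y)) - \nabla g(y)\| \cdot \|x-y\| \, dt.
\end{align*}

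Finally, I would apply the $L$-Lipschitz hypothesis to the gradient difference, which gives $\|\nabla g(y + t(x-y)) - \nabla g(y)\| \le L \cdot \|t(x-y)\| = L t \|x-y\|$, so the integrand is bounded by $L t \|x-y\|^2$. Evaluating $\int_0^1 t\, dt = 1/2$ produces the claimed $\tfrac{L}{2}\|x-y\|^2$ bound. There is no real obstacle here; the only care needed is in justifying the interchange of the absolute value and the integral (immediate from the triangle inequality for integrals) and making sure differentiability of $g$ along the whole segment is part of the smoothness hypothesis (which follows from $\nabla g$ being well-defined and Lipschitz on all of $\mathbb{R}^d$).
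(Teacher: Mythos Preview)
Your proof is correct and is the standard textbook argument for this descent-lemma inequality. The paper itself does not supply a proof of this lemma; it is stated in the preliminaries as a known supporting fact and then immediately applied to $F(\theta,\omega)$ and $J(\theta)$, so there is nothing further to compare.
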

Thus, we can apply this lemma to the functions $F(\theta, \omega)$ and $J(\theta)$. 
Moreover, for any $\mu$-strongly concave function, we have the following lemma.

\begin{lemma} \label{lem:sc_ineq} Let $h(x)$ be a $\mu$-strongly concave function. Then the following inequality holds
\begin{align*}
g(x) - g(y) - \langle \nabla g(y), x-y \rangle \leq -\frac{\mu}{2} \|x-y\|^2, \forall x,y.
\end{align*}
\end{lemma}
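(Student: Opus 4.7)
The plan is to derive this as the standard first-order characterization of strong concavity, either by a one-line reduction to strong convexity or by a direct integration argument along the segment from $y$ to $x$. Both routes are short; I will outline them in turn.

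First approach (reduction). By definition, $g$ being $\mu$-strongly concave is equivalent to $-g$ being $\mu$-strongly convex. The standard first-order inequality for a differentiable $\mu$-strongly convex function $\phi$ is
\[
\phi(x) \geq \phi(y) + \langle \nabla \phi(y),\, x-y\rangle + \tfrac{\mu}{2}\|x-y\|^2.
\]
Setting $\phi = -g$ and multiplying through by $-1$ delivers the claim immediately.

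Second approach (self-contained, via the fundamental theorem of calculus). Define $\phi(t) := g(y + t(x-y))$ for $t\in[0,1]$, so that $\phi'(t) = \langle \nabla g(y+t(x-y)),\, x-y\rangle$. Integrating and subtracting $\langle \nabla g(y), x-y\rangle$ from both sides gives
\[
g(x) - g(y) - \langle \nabla g(y), x-y\rangle = \int_0^1 \langle \nabla g(y+t(x-y)) - \nabla g(y),\, x-y\rangle \, dt.
\]
Strong concavity implies the gradient monotonicity $\langle \nabla g(u)-\nabla g(v),\, u-v\rangle \leq -\mu\|u-v\|^2$. Applying this with $u = y+t(x-y)$ and $v=y$ yields $\langle \nabla g(y+t(x-y))-\nabla g(y), x-y\rangle \leq -\mu t \|x-y\|^2$, and integrating over $t\in[0,1]$ produces exactly $-\tfrac{\mu}{2}\|x-y\|^2$.

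The main (mild) obstacle is not the inequality itself but a bookkeeping point: Assumption \ref{assump:non_singular} states strong concavity in the form \textit{``concave plus $\|\nabla g(x)-\nabla g(y)\|\geq \mu\|x-y\|$''}, whereas the arguments above use the gradient-monotonicity form $\langle \nabla g(x)-\nabla g(y), x-y\rangle \leq -\mu\|x-y\|^2$. For differentiable functions these characterizations of $\mu$-strong concavity are standard equivalences (together with the quadratic-upper-bound form being proved); so as a preliminary one records this equivalence, after which either of the two routes above closes the proof in a line or two.
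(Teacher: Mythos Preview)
Your proposal is correct; both routes are standard and valid. Note, however, that the paper does not actually supply a proof of this lemma: it is listed among the ``supporting lemmas already given in previous papers'' in the preliminaries section and is simply stated as a well-known fact, so there is nothing to compare your argument against.

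Your side remark about Assumption~\ref{assump:non_singular} is a fair observation. The paper phrases strong concavity as ``concave together with $\|\nabla_\omega F(\theta,\omega)-\nabla_\omega F(\theta,\omega')\|\geq \mu\|\omega-\omega'\|$,'' which is not literally the monotonicity form $\langle \nabla g(u)-\nabla g(v),\,u-v\rangle\leq -\mu\|u-v\|^2$ you use in the second route. Be a little careful here: the norm lower bound plus concavity does \emph{not} immediately yield the monotonicity inequality with the same constant $\mu$ via Cauchy--Schwarz (Cauchy--Schwarz goes the wrong way). In practice the paper is clearly intending the usual notion of $\mu$-strong concavity, and your first route (reduction to strong convexity and its textbook first-order characterization) sidesteps this issue cleanly; I would lead with that one.
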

Due to the strong concavity of the function $F(\theta, \cdot)$ given any $\theta \in \Theta$ as Assumption \ref{assump:non_singular}, we can apply this lemma to the functions $F(\theta, \cdot)$. Now we are ready to provide the proofs of our main results.

\section{Proofs for Algorithm \ref{alg:single_time}} \label{sec:proof_single_time}

\begin{lemma} \label{lem:primal_J_bound} Under Assumptions \ref{assump:fea_set}, \ref{assump:lip_grad}, and \ref{assump:bounded_var}, letting $0 < \gamma \nu_t \leq \mu/(16L_F^2)$ and  $\nu_t \leq 1 $, with the updating rules shown in Algorithm \ref{alg:single_time}, we have 
\begin{align*}
J(\theta_{t+1})-J(\theta_t)  & \leq -  \frac{3\nu_t}{4\gamma} \|\tilde{\theta}_{t+1} - \theta_t \|^2  + 2L_F^2 \gamma \nu_t \|\omega_t - \omega(\theta_t)^*\|^2 + 4\gamma\nu_t \|\nabla_\theta F(\theta_t, \omega_t) - p_t\|^2,
\end{align*}
where $J(\theta)=\max_{\omega\in \Omega}F(\theta, \omega)$ and $\omega^*(\theta):=\argmax_{\omega \in \Omega} F(\theta, \omega)$.
\end{lemma}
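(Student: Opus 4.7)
The plan is to upper-bound $J(\theta_{t+1}) - J(\theta_t)$ via the descent lemma for $J$, convert the resulting inner product into a negative projection-type term using the variational inequality satisfied by $\tilde{\theta}_{t+1}$, and then absorb the gradient-mismatch error $\nabla J(\theta_t) - p_t$ through Young's inequality with constants tuned to produce the exact coefficients in the statement.

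First, I would apply Lemma \ref{lem:grad_lip_J} to conclude that $\nabla J$ is $L_J$-Lipschitz with $L_J \leq L_F + L_F^2/\mu$, together with Danskin's envelope identity $\nabla J(\theta_t) = \nabla_\theta F(\theta_t, \omega^*(\theta_t))$. Combining this with the quadratic upper bound of Lemma \ref{lem:lip_ineq} and the update $\theta_{t+1} - \theta_t = \nu_t(\tilde{\theta}_{t+1} - \theta_t)$ yields
\begin{align*}
J(\theta_{t+1}) - J(\theta_t) \leq \nu_t \langle \nabla J(\theta_t), \tilde{\theta}_{t+1} - \theta_t \rangle + \frac{L_J \nu_t^2}{2}\|\tilde{\theta}_{t+1} - \theta_t\|^2.
\end{align*}

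Second, since $\tilde{\theta}_{t+1}$ minimizes the strongly convex objective $\theta \mapsto \|\theta - (\theta_t - \gamma p_t)\|^2$ over $\Theta$, the variational inequality of Lemma \ref{lem:var_ineq} tested at $\theta = \theta_t \in \Theta$ gives $\langle p_t, \tilde{\theta}_{t+1} - \theta_t\rangle \leq -\gamma^{-1}\|\tilde{\theta}_{t+1} - \theta_t\|^2$. I would then decompose
\begin{align*}
\nabla J(\theta_t) = p_t + \bigl[\nabla_\theta F(\theta_t, \omega^*(\theta_t)) - \nabla_\theta F(\theta_t, \omega_t)\bigr] + \bigl[\nabla_\theta F(\theta_t, \omega_t) - p_t\bigr],
\end{align*}
bound the first bracket in norm by $L_F\|\omega_t - \omega^*(\theta_t)\|$ via Assumption \ref{assump:lip_grad}, and apply Young's inequality to each cross term with weights chosen so that the $\|\omega^*(\theta_t) - \omega_t\|^2$ coefficient comes out to exactly $2L_F^2 \gamma \nu_t$ and the $\|\nabla_\theta F(\theta_t, \omega_t) - p_t\|^2$ coefficient to exactly $4 \gamma \nu_t$. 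Tracing through these weights, the two cross terms jointly contribute $3\nu_t/(16\gamma) \cdot \|\tilde{\theta}_{t+1} - \theta_t\|^2$ to the residual.

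The main obstacle, which is really careful bookkeeping rather than a conceptual difficulty, is verifying that the aggregated coefficient on $\|\tilde{\theta}_{t+1} - \theta_t\|^2$, namely $-\nu_t/\gamma + 3\nu_t/(16\gamma) + L_J \nu_t^2/2$, is dominated by $-3\nu_t/(4\gamma)$. This reduces to $\gamma L_J \nu_t \leq 1/8$, which follows from the hypothesis $\gamma \nu_t \leq \mu/(16 L_F^2)$ together with the bound $L_J \leq 2 L_F^2/\mu$ (the inequality $\mu \leq L_F$ is automatic, since $\mu$-strong concavity and $L_F$-Lipschitz gradient of $F(\theta, \cdot)$ cannot coexist otherwise). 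The hypothesis $\nu_t \leq 1$ plays only the auxiliary role of ensuring $\theta_{t+1} \in \Theta$ via convex combination and is not needed for the descent bound itself.
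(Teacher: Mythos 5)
Your proposal is correct and follows essentially the same route as the paper's proof: the descent lemma for $J$ with $L_J \leq 2L_F^2/\mu$, the variational inequality for the projection tested at $\theta_t$ (which is where $\nu_t \leq 1$ enters, to keep $\theta_t \in \Theta$), the three-way decomposition of $\nabla J(\theta_t)$, and Young's inequality with exactly the weights you describe, yielding the residual $3\nu_t/(16\gamma)$ and the final condition $\gamma L_J \nu_t \leq 1/8$. No gaps.
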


\begin{proof} According to Lemma \ref{lem:grad_lip_J}, we know that the function $J(\theta)$ has Lipschitz continuous gradients, which implies 
\begin{align*}
| J(\theta_{t+1})-J(\theta_t)-\langle \nabla J(\theta_t), \theta_{t+1} - \theta_t\rangle | \leq  \frac{L_{J}}{2} \|\theta_{t+1}-\theta_t\|^2,
\end{align*}
by applying Lemma \ref{lem:lip_ineq} to the function $J(\theta)$. This inequality thus leads to 
\begin{align*}
J(\theta_{t+1})-J(\theta_t) - \langle \nabla J(\theta_t), \theta_{t+1} - \theta_t \rangle \leq \frac{L_{J}}{2} \|\theta_{t+1}-\theta_t\|^2 .
\end{align*}
By plugging the updating rule $\theta_{t+1} = \theta_t + \nu_t (\tilde{\theta}_{t+1} - \theta_t)$ into the above inequality and rearranging the terms, we have
\begin{align}
J(\theta_{t+1})-J(\theta_t) \leq \nu_t \langle  \nabla J(\theta_t), \tilde{\theta}_{t+1} - \theta_t \rangle + \frac{L_J\nu^2_t}{2} \|\tilde{\theta}_{t+1} - \theta_t\|^2. \label{eq:primal_J_bound}
\end{align}
We decompose the term $\nu_t\langle\nabla J(\theta_t), \tilde{\theta}_{t+1} - \theta_t \rangle$ in the following way
\begin{align}
\begin{aligned}  \label{eq:decomp_J_grad} 
\nu_t\langle\nabla J(\theta_t), \tilde{\theta}_{t+1} - \theta_t \rangle &= \nu_t\langle p_t, \tilde{\theta}_{t+1} - \theta_t \rangle  + \nu_t\langle\nabla_\theta F(\theta_t, \omega_t) - p_t, \tilde{\theta}_{t+1} - \theta_t \rangle \\
&\quad + \nu_t\langle\nabla J(\theta_t) - \nabla_\theta F(\theta_t, \omega_t) , \tilde{\theta}_{t+1} - \theta_t \rangle.  
\end{aligned}
\end{align}
Therefore, we need to bound the three terms on the right-hand side of this equation. 

Before presenting upper bounds of these terms, we first show some understandings of projection operator $\mathcal{P}_\Theta(\cdot)$ and the feasibility of the iterates $\{\theta_{t}\}_{t\geq 0}$ generated by Algorithm \ref{alg:single_time}. Since $\tilde{\theta}_{t+1}=\mathcal{P}_\Theta (\theta_t-\gamma p_t)$ which is equivalently the solution to the problem $\min_{\theta \in \Theta} \|\theta - (\theta_t - \gamma p_t)\|^2$, according to the optimality condition for constrained convex optimization in Lemma \ref{lem:var_ineq}, we have 
\begin{align}\label{eq:theta_t_vari_tmp}
\big \langle \gamma p_t +  \tilde{\theta}_{t+1}-\theta_t, \theta - \tilde{\theta}_{t+1} \big \rangle \geq 0, \quad \forall \theta\in \Theta.
\end{align}
On the other hand, we can prove by induction that $\{\theta_{t}\}_{t \geq 0}\in \Theta$. The updating rule of $\theta$ can be rewritten as
\begin{align*}
\theta_t = \theta_{t-1} + \nu_{t-1} (\tilde{\theta}_t-\theta_{t-1}) = (1-\nu_{t-1}) \theta_{t-1} + \nu_{t-1} \tilde{\theta}_t,    
\end{align*}
which is an average of $\theta_{t-1}$ and $\tilde{\theta}_t$ with the condition of this lemma that $\nu_{t} \leq 1$. Since $\tilde{\theta}_t$ is a projected iterates on the convex set $\Theta$,  if there is $\theta_{t-1} \in \Theta$, we will have $\theta_t \in \Theta$. By induction, this can be guaranteed if we initialize $\theta_0 \in \Theta$ in Algorithm \ref{alg:single_time}.

Thus, we can set $\theta = \theta_t$ in \eqref{eq:theta_t_vari_tmp} and obtain 
\begin{align}\label{eq:theta_t_vari}
\langle \gamma p_t +  \tilde{\theta}_{t+1}-\theta_t, \theta_t - \tilde{\theta}_{t+1} \rangle \geq 0.
\end{align} 
Then, we will present the bounds for the terms in \eqref{eq:decomp_J_grad}. With rearranging the terms of \eqref{eq:theta_t_vari}, we can bound the first term $\nu_t\langle p_t, \tilde{\theta}_{t+1} - \theta_t \rangle$ in \eqref{eq:decomp_J_grad} as
\begin{align}
\nu_t\langle p_t, \tilde{\theta}_{t+1} - \theta_t \rangle   \leq -\frac{\nu_t}{\gamma} \|\tilde{\theta}_{t+1} - \theta_t \|^2. \label{eq:decomp_J_1}
\end{align}
For the last term in \eqref{eq:decomp_J_grad}, i.e., $\nu_t\langle\nabla_\theta F(\theta_t, \omega_t) - p_t, \tilde{\theta}_{t+1} - \theta_t \rangle$, we can bound it as
\begin{align}
\begin{aligned}\label{eq:decomp_J_3}
&\nu_t\langle\nabla_\theta F(\theta_t, \omega_t) - p_t, \tilde{\theta}_{t+1} - \theta_t \rangle  \\
&\qquad \leq \nu_t\|\nabla_\theta F(\theta_t, \omega_t) - p_t\|\cdot \|\tilde{\theta}_{t+1} - \theta_t \|   \\
&\qquad \leq  4\nu_t \gamma \|\nabla_\theta F(\theta_t, \omega_t) - p_t\|^2 + \frac{\nu_t}{16\gamma} \|\tilde{\theta}_{t+1} - \theta_t \|^2,
\end{aligned}
\end{align}
where the first inequality is due to Cauchy-Schwarz inequality and the second inequality is by Young's inequality $-\langle x,y\rangle \leq \lambda/2 \cdot \|x\|^2 +  (2\lambda)^{-1} \|y\|^2$ with $\lambda = 8\gamma$.

In the rest proof of this lemma, we use $\omega^*_t$ to denote $\omega^*(\theta_t)$. Then, for the term $\nu_t\langle\nabla J(\theta_t) - \nabla F(\theta_t, \omega_t) , \tilde{\theta}_{t+1} - \theta_t \rangle$ in \eqref{eq:decomp_J_grad}, due to $\nabla J(\theta_t) = \nabla_\theta F(\theta_t, \omega^*_t)$, we have
\begin{align}
\begin{aligned}\label{eq:decomp_J_2}
&\nu_t\langle\nabla J(\theta_t) - \nabla F(\theta_t, \omega_t) , \tilde{\theta}_{t+1} - \theta_t \rangle \\
&\qquad  \leq \nu_t \|\nabla_\theta F(\theta_t, \omega_t^*) - \nabla_\theta F(\theta_t, \omega_t)\| \cdot \| \tilde{\theta}_{t+1} - \theta_t \| \\
&\qquad \leq \nu_t L_F\|\omega_t - \omega^*_t\| \cdot \| \tilde{\theta}_{t+1} - \theta_t \| \\
&\qquad \leq 2 L_F^2 \gamma \nu_t \|\omega_t - \omega^*_t\|^2 + \frac{\nu_t}{8\gamma}\| \tilde{\theta}_{t+1} - \theta_t \|^2,
\end{aligned}
\end{align}
where the first inequality is by Cauchy-Schwarz inequality, the second inequality is due to Lipschitz gradient of the function $F(\theta, \omega)$ (Assumption \ref{assump:lip_grad}), i.e.,
\begin{align*}
\|\nabla_\theta F(\theta_t, \omega^*_t) - \nabla_\theta F(\theta_t, \omega_t)\| \leq \|\nabla  F(\theta_t, \omega^*_t) - \nabla  F(\theta_t, \omega_t)\| \leq L_F \|\omega^*_t - \omega_t\|,    
\end{align*}
and the third inequality is by Young's inequality $-\langle x,y\rangle \leq \lambda/2 \cdot \|x\|^2 +  (2\lambda)^{-1} \|y\|^2$ with $\lambda = 4\gamma$.

Thus, substituting \eqref{eq:decomp_J_1}, \eqref{eq:decomp_J_2}, and \eqref{eq:decomp_J_3} into \eqref{eq:decomp_J_grad}, we obtain
\begin{align}
\begin{aligned} \label{eq:decomp_J_end}
\nu_t\langle\nabla J(\theta_{t+1}), \tilde{\theta}_{t+1} - \theta_t \rangle &\leq -\Big( \frac{\nu_t}{\gamma}-\frac{\nu_t}{8\gamma}-\frac{\nu_t}{16\gamma} \Big) \|\tilde{\theta}_{t+1} - \theta_t \|^2 \\
&\quad +  2 L_F^2 \gamma \nu_t  \|\omega_t - \omega_t^*\|^2 + 4\nu_t \gamma \|\nabla_\theta F(\theta_t, \omega_t) - p_t\|^2.
\end{aligned}
\end{align}
Combining \eqref{eq:decomp_J_end} together with \eqref{eq:primal_J_bound}, we obtain
\begin{align*}
J(\theta_{t+1})-J(\theta_t)  & \leq -\Big( \frac{\nu_t}{\gamma}-\frac{\nu_t}{8\gamma}-\frac{\nu_t}{16\gamma}-\frac{L_J \nu_t^2}{2} \Big) \|\tilde{\theta}_{t+1} - \theta_t \|^2  \\
&\quad +  2 L_F^2 \gamma \nu_t \|\omega_t - \omega_t^*\|^2 + 4\nu_t \gamma \|\nabla_\theta F(\theta_t, \omega_t) - p_t\|^2.
\end{align*}
We can simplify the coefficient as
\begin{align*}
-\Big( \frac{\nu_t}{\gamma}-\frac{\nu_t}{8\gamma}-\frac{\nu_t}{16\gamma}-\frac{L_J \nu_t^2}{2} \Big)  \leq -\frac{3\nu_t}{4\gamma}, 
\end{align*}
by setting the parameters as  
\begin{align*}
\nu_t \gamma \leq  \frac{1}{8 L_J}.
\end{align*} 
Since $L_F \geq \mu > 0$ and $ L_J = L_F + L_F^2/\mu \leq 2L^2_F/\mu$ as shown in Lemma \ref{lem:omega_lip}, the above inequality can be guaranteed by the condition of this lemma that 
\begin{align*}
\nu_t \gamma \leq  \frac{\mu}{16 L^2_F}.
\end{align*}
Therefore, we eventually obtain
\begin{align*}
J(\theta_{t+1})-J(\theta_t)  & \leq -  \frac{3\nu_t}{4\gamma} \|\tilde{\theta}_{t+1} - \theta_t \|^2  + 2L_F^2 \gamma \nu_t \|\omega_t - \omega_t^*\|^2  + 4\nu_t \gamma \|\nabla_\theta F(\theta_t, \omega_t) - p_t\|^2.
\end{align*}
This completes the proof.
\end{proof}

\begin{lemma}\label{lem:omega_contract}
Under Assumptions \ref{assump:fea_set}, \ref{assump:lip_grad} and \ref{assump:non_singular}, letting $0< \nu_t \leq 1/8$ and $0 < \eta \leq (4L_F)^{-1}$, with the updating rules shown in Algorithm \ref{alg:single_time}, we have 
\begin{align*}
\|\omega_{t+1} - \omega^*(\theta_t)\|^2 \leq \Big(1- \frac{\nu_t \eta \mu}{2} \Big)\|\omega_t-  \omega^*(\theta_t)\|^2  - \frac{3\nu_t}{4} \|\tilde{\omega}_{t+1}-\omega_t \|^2 + \frac{4 \eta\nu_t}{\mu}\| \nabla_\omega F(\theta_t,\omega_t)-d_t\|^2,
\end{align*}
where $\omega^*(\theta_t)=\argmax_{\omega \in \Omega} F(\theta_t, \omega)$.
\end{lemma}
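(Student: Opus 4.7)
The plan is to analyze the dual update in two layers: first establish a contraction for the single projected ascent step $\tilde{\omega}_{t+1} = \mathcal{P}_\Omega(\omega_t + \eta d_t)$ toward $\omega^*(\theta_t)$, then propagate it through the convex-combination step $\omega_{t+1} = (1-\nu_t)\omega_t + \nu_t \tilde{\omega}_{t+1}$. Since $\nu_t \leq 1/8 \leq 1$, this averaging is legitimate (so $\omega_{t+1}\in\Omega$) and the exact parallelogram identity
\[
\|\omega_{t+1} - \omega^*(\theta_t)\|^2 = (1-\nu_t)\|\omega_t - \omega^*(\theta_t)\|^2 + \nu_t\|\tilde{\omega}_{t+1} - \omega^*(\theta_t)\|^2 - \nu_t(1-\nu_t)\|\tilde{\omega}_{t+1} - \omega_t\|^2
\]
holds. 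The condition $\nu_t \leq 1/8$ already yields $\nu_t(1-\nu_t) \geq 3\nu_t/4$, which will match the $-(3\nu_t/4)\|\tilde{\omega}_{t+1}-\omega_t\|^2$ term required in the conclusion, so after this step there is nothing further to do with the $\|\tilde{\omega}_{t+1}-\omega_t\|^2$ coefficient.

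The core task is therefore the one-step bound
\[
\|\tilde{\omega}_{t+1} - \omega^*(\theta_t)\|^2 \le \bigl(1-\tfrac{\eta\mu}{2}\bigr)\|\omega_t - \omega^*(\theta_t)\|^2 + \tfrac{4\eta}{\mu}\,\|d_t - \nabla_\omega F(\theta_t,\omega_t)\|^2.
\]
I would obtain this by the standard contractive-projection argument: first observe that $\omega^*(\theta_t)$ is a fixed point of the projected ascent map, i.e.\ $\omega^*(\theta_t) = \mathcal{P}_\Omega(\omega^*(\theta_t) + \eta\nabla_\omega F(\theta_t,\omega^*(\theta_t)))$, which follows from the variational inequality in Lemma \ref{lem:var_ineq} applied to $-F(\theta_t,\cdot)$ at $\omega^*(\theta_t)$. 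Non-expansiveness of $\mathcal{P}_\Omega$ then gives
\[
\|\tilde{\omega}_{t+1} - \omega^*(\theta_t)\|^2 \le \bigl\|\omega_t - \omega^*(\theta_t) + \eta\bigl(d_t - \nabla_\omega F(\theta_t,\omega^*(\theta_t))\bigr)\bigr\|^2.
\]
Expanding the square and decomposing $d_t - \nabla_\omega F(\theta_t,\omega^*(\theta_t))$ into the stochastic-noise piece $d_t - \nabla_\omega F(\theta_t,\omega_t)$ and the Lipschitz piece $\nabla_\omega F(\theta_t,\omega_t) - \nabla_\omega F(\theta_t,\omega^*(\theta_t))$, Assumption \ref{assump:non_singular} bounds the deterministic cross term $\langle \omega_t - \omega^*(\theta_t),\,\nabla_\omega F(\theta_t,\omega_t) - \nabla_\omega F(\theta_t,\omega^*(\theta_t))\rangle$ by $-\mu\|\omega_t - \omega^*(\theta_t)\|^2$, and Young's inequality with parameter $\mu$ absorbs the noise cross term into a $(\eta/\mu)\|d_t - \nabla_\omega F(\theta_t,\omega_t)\|^2$ residual. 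An alternative is to start from the projection variational inequality in the style of \eqref{eq:theta_t_vari}, obtain $\|\tilde{\omega}_{t+1}-\omega^*(\theta_t)\|^2 \leq \|\omega_t-\omega^*(\theta_t)\|^2 -\|\tilde{\omega}_{t+1}-\omega_t\|^2 + 2\eta\langle d_t, \tilde{\omega}_{t+1}-\omega^*(\theta_t)\rangle$, and split the last inner product across $\omega^*(\theta_t) - \omega_t$ (bounded by strong concavity) and $\tilde{\omega}_{t+1} - \omega_t$ (bounded by Young's).

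The only delicate point is controlling the quadratic residual $\eta^2\|\nabla_\omega F(\theta_t,\omega_t) - \nabla_\omega F(\theta_t,\omega^*(\theta_t))\|^2 \le \eta^2 L_F^2\|\omega_t - \omega^*(\theta_t)\|^2$ that emerges when expanding the square, since it has to be folded into the $1-\eta\mu/2$ contraction. A crude Young's bookkeeping does this only under the stricter requirement $\eta \le \mu/(4L_F^2)$; to cover the larger range $\eta \le 1/(4L_F)$ stated in the lemma, I would instead invoke the co-coercivity inequality for the strongly concave, smooth function $F(\theta_t,\cdot)$, i.e.\ $\langle\nabla_\omega F(\theta_t,x) - \nabla_\omega F(\theta_t,y), x-y\rangle \le -\tfrac{\mu L_F}{\mu+L_F}\|x-y\|^2 - \tfrac{1}{\mu+L_F}\|\nabla_\omega F(\theta_t,x) - \nabla_\omega F(\theta_t,y)\|^2$, whose extra negative gradient-norm term cancels the offending $\eta^2 L_F^2$ residual whenever $\eta \le 1/(\mu+L_F)$ (which $\eta \le 1/(4L_F)$ certainly implies, since $\mu\le L_F$). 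Plugging the resulting one-step contraction into the parallelogram identity, using $\nu_t(1-\nu_t)\geq 3\nu_t/4$, and tuning the Young's constants so that the noise coefficient lands exactly at $4\eta/\mu$ yields the claim; this last constant-chasing is the step I expect to require the most care.
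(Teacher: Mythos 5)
Your proposal is correct, and the constants do land where you claim: the convex-combination identity gives exactly the $-\nu_t(1-\nu_t)\|\tilde{\omega}_{t+1}-\omega_t\|^2 \le -\tfrac{3\nu_t}{4}\|\tilde{\omega}_{t+1}-\omega_t\|^2$ term for $\nu_t\le 1/8$, the fixed-point/non-expansiveness step reduces everything to the one-step contraction, co-coercivity of the $\mu$-strongly concave, $L_F$-smooth map kills the $\eta^2\|g\|^2$ residual whenever $\eta\le 2/(\mu+L_F)$ (implied by $\eta\le 1/(4L_F)$), and Young's with parameter $\mu/2$ on the noise cross term plus $\eta^2\le 2\eta/\mu$ on the quadratic noise term yields precisely $(1-\eta\mu/2)\|\omega_t-\omega^*(\theta_t)\|^2+\tfrac{4\eta}{\mu}\|d_t-\nabla_\omega F(\theta_t,\omega_t)\|^2$. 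This is a genuinely different route from the paper's: the paper never isolates a one-step contraction of the projected-ascent map. Instead it expands $\|\omega_{t+1}-\omega^*(\theta_t)\|^2$ around the cross term $2\nu_t\langle\omega_t-\omega^*(\theta_t),\tilde{\omega}_{t+1}-\omega_t\rangle$ and bounds that via the projection variational inequality combined with function-value inequalities — strong concavity (Lemma \ref{lem:sc_ineq}) at $\omega^*(\theta_t)$ and the descent lemma (Lemma \ref{lem:lip_ineq}) at $\tilde{\omega}_{t+1}$, the latter being where $\eta\le 1/L_F$ enters — essentially your "alternative" sketch. Your version is more modular and makes the contraction factor transparent; the paper's version buys one small thing your main route does not: it only uses gradient-Lipschitzness and strong concavity through first-order inequalities along segments inside $\Omega$, whereas the co-coercivity inequality you invoke is a theorem about functions smooth and strongly convex on all of $\mathbb{R}^n$ (its standard proof passes through an auxiliary unconstrained minimization), and Assumptions \ref{assump:lip_grad} and \ref{assump:non_singular} are only stated on $\Omega$. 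This is a minor technical caveat rather than a gap — your own fallback via the variational inequality \eqref{eq:theta_t_vari}-style argument closes it — but if you keep the co-coercivity route you should either assume the smoothness/concavity hold on a neighborhood of $\Omega$ or note explicitly why the restricted-domain version suffices here.
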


\begin{proof}

Denoting $\omega^*(\theta_t)$ as $\omega^*_t$ for abbreviation, we first expand the term $\|\omega_{t+1} - \omega^*_t\|$ in the following way
\begin{align}
\begin{aligned}\label{eq:dual_expand}
\|\omega_{t+1} - \omega^*_t\|^2 &= \|\omega_t + \nu_t(\tilde{\omega}_{t+1} - \omega_t ) - \omega^*_t\|^2 \\
&= \|\omega_t- \omega^*_t\|^2 + 2  \nu_t \langle \omega_t - \omega^*_t, \tilde{\omega}_{t+1} - \omega_t \rangle +  \nu_t^2 \|\tilde{\omega}_{t+1} - \omega_t \|^2,
\end{aligned}
\end{align}
where the first equality is by the updating rule $\omega_{t+1} = \omega + \nu_t (\tilde{\omega}_{t+1} - \omega_t)$.

Next, we will bound the second term on the right-hand side of \eqref{eq:dual_expand}, i.e., $2\nu_t \langle \omega_t - \omega^*_t,  \tilde{w}_{t+1} - \omega_t\rangle$. We start by considering the projection operation that 
\begin{align*}
\tilde{\omega}_{t+1} = \mathcal{P}_\Omega(\omega_t + \eta d_t) = \argmin_{\tilde{\omega} \in \Omega} \|\tilde{\omega} - (\omega_t + \eta d_t) \|^2.    
\end{align*}
According to the optimality condition for the constrained convex optimization in Lemma \ref{lem:var_ineq}, we have
\begin{align*}
\langle \tilde{\omega}_{t+1}-(\omega_t + \eta d_t),  \omega-\tilde{\omega}_{t+1} \rangle \geq 0, \quad  \forall \omega \in \Omega,
\end{align*}
which further leads to the following inequality via rearranging the terms
\begin{align}
\langle -d_t + \eta^{-1}(\tilde{\omega}_{t+1} -\omega_t), \omega-\tilde{\omega}_{t+1} \rangle \geq 0, \quad  \forall \omega \in \Omega. \label{eq:dual_expand_vari}
\end{align}
Moreover, due to the strong concavity of $F(\theta, \cdot)$ given any $\theta$ as shown in Assumption \ref{assump:non_singular}, applying Lemma \ref{lem:sc_ineq} to the function $F(\theta_t, \cdot)$, we have
\begin{align*}
F(\theta_t,\omega)- F(\theta_t,\omega_t)-\langle \nabla_\omega F(\theta_t,\omega_t), \omega - \omega_t \rangle \leq -\frac{\mu}{2} \|\omega - \omega_t\|^2.   
\end{align*}
Rearranging the terms in the above inequality and then decomposing the term $\langle \nabla_\omega F(\theta_t,\omega_t), \omega - \omega_t \rangle$, we have the following inequality 
\begin{align}
\begin{aligned}\label{eq:dual_F_expand}
F(\theta_t,\omega)+\frac{\mu}{2} \|\omega - \omega_t\|^2 &\leq F(\theta_t,\omega_t) + \langle \nabla_\omega F(\theta_t,\omega_t), \omega - \omega_t \rangle  \\ 
&= F(\theta_t,\omega_t) +  \langle d_t, \omega - \tilde{\omega}_{t+1} \rangle +\langle \nabla_\omega F(\theta_t,\omega_t)-d_t, \omega - \tilde{\omega}_{t+1} \rangle \\
&\quad + \langle \nabla_\omega F(\theta_t,\omega_t), \tilde{\omega}_{t+1} - \omega_t \rangle.
\end{aligned}
\end{align}
Combining the inequality \eqref{eq:dual_expand_vari} and \eqref{eq:dual_F_expand}, then adding and subtracting a same term $(2\eta)^{-1} \|\tilde{\omega}_{t+1} - \omega_t\|^2$ simultaneously, we further have
\begin{align}
\begin{aligned} \label{eq:dual_F_expand_mid} 
F(\theta_t,\omega)+\frac{\mu}{2} \|\omega - \omega_t\|^2 
&\leq F(\theta_t,\omega_t) + \frac{1}{\eta}\langle \tilde{\omega}_{t+1} - \omega_t, \omega - \tilde{\omega}_{t+1} \rangle +\langle \nabla_\omega F(\theta_t,\omega_t)-d_t, \omega - \tilde{\omega}_{t+1} \rangle  \\
&  \quad + \langle \nabla_\omega F(\theta_t,\omega_t), \tilde{\omega}_{t+1} - \omega_t \rangle - \frac{1}{2\eta} \|\tilde{\omega}_{t+1} - \omega_t\|^2 +\frac{1}{2\eta} \|\tilde{\omega}_{t+1} - \omega_t\|^2. 
\end{aligned}
\end{align}
Since $F(\theta, \omega)$ is gradient Lipschitz by Assumption \ref{assump:lip_grad}, and due to the condition in this lemma 
\begin{align*}
    \eta \leq \frac{1}{2L_F} \leq \frac{1}{L_F}, 
\end{align*} 
we have
\begin{align}
\begin{aligned}
-\frac{1}{2\eta} \|\tilde{\omega}_{t+1} - \omega_t\|^2 &\leq -\frac{L_F}{2} \|\tilde{\omega}_{t+1} - \omega_t\|^2 \\
&\leq F(\theta_t, \tilde{\omega}_{t+1}) -  F(\theta_t, \omega_t) - \langle \nabla_\omega F(\theta_t, \omega_t),  \tilde{\omega}_{t+1} - \omega_t \rangle . \label{eq:dual_F_lips}
\end{aligned}
\end{align}
Summing up both sides of the two inequalities \eqref{eq:dual_F_expand_mid} and \eqref{eq:dual_F_lips} and canceling terms yields 
\begin{align*}
F(\theta_t,\omega)+\frac{\mu}{2} \|\omega - \omega_t\|^2 &\leq F(\theta_t,\tilde{\omega}_{t+1}) + \frac{1}{\eta}\langle \tilde{\omega}_{t+1} - \omega_t, \omega - \tilde{\omega}_{t+1} \rangle \\
&\quad +\langle \nabla_\omega F(\theta_t,\omega_t)-d_t, \omega - \tilde{\omega}_{t+1} \rangle \nonumber  +\frac{1}{2\eta} \|\tilde{\omega}_{t+1} - \omega_t\|^2.
\end{align*}
Note that in the last inequality, we can directly compute
\begin{align*}
&\frac{1}{\eta}\langle \tilde{\omega}_{t+1} - \omega_t, \omega - \tilde{\omega}_{t+1} \rangle + \frac{1}{2\eta} \|\tilde{\omega}_{t+1} - \omega_t\|^2 \\
&\qquad = \frac{1}{\eta}\langle \tilde{\omega}_{t+1} - \omega_t, \omega_t - \tilde{\omega}_{t+1} \rangle + \frac{1}{\eta}\langle \tilde{\omega}_{t+1} - \omega_t, \omega - \omega_t \rangle + \frac{1}{2\eta} \|\tilde{\omega}_{t+1} - \omega_t\|^2 \\
&\qquad  = \frac{1}{\eta}\langle \tilde{\omega}_{t+1} - \omega_t, \omega - \omega_t \rangle - \frac{1}{2\eta} \|\tilde{\omega}_{t+1} - \omega_t\|^2,
\end{align*}
which thus leads to
\begin{align*}
F(\theta_t,\omega)+\frac{\mu}{2} \|\omega - \omega_t\|^2 &\leq F(\theta_t,\tilde{\omega}_{t+1}) + \frac{1}{\eta}\langle \tilde{\omega}_{t+1} - \omega_t, \omega - \omega_t \rangle \\
&\quad +\langle \nabla_\omega F(\theta_t,\omega_t)-d_t, \omega - \tilde{\omega}_{t+1} \rangle - \frac{1}{2\eta} \|\tilde{\omega}_{t+1} - \omega_t\|^2.    
\end{align*}
We let $\omega = \omega^*_t$ and obtain
\begin{align*}
F(\theta_t,\omega)+\frac{\mu}{2} \|\omega^*_t - \omega_t\|^2 &\leq F(\theta_t,\tilde{\omega}_{t+1}) + \frac{1}{\eta}\langle \tilde{\omega}_{t+1} - \omega_t, \omega^*_t - \omega_t \rangle \\
&\quad +\langle \nabla_\omega F(\theta_t,\omega_t)-d_t, \omega^*_t - \tilde{\omega}_{t+1} \rangle - \frac{1}{2\eta} \|\tilde{\omega}_{t+1} - \omega_t\|^2,
\end{align*}
which further yields
\begin{align}
\begin{aligned}\label{eq:dual_al_1}
&\frac{\mu}{2} \|\omega^*_t - \omega_t\|^2 + \frac{1}{2\eta} \|\tilde{\omega}_{t+1} - \omega_t\|^2 \\
&\qquad \leq \frac{1}{\eta}\langle \tilde{\omega}_{t+1} - \omega_t,\omega^*_t  - \omega_t \rangle +\langle \nabla_\omega F(\theta_t,\omega_t)-d_t, \omega^*_t - \tilde{\omega}_{t+1} \rangle, 
\end{aligned}
\end{align}
since $F(\theta_t,\omega^*_t) \geq F(\theta_t,\tilde{\omega}_{t+1})$ due to strong concavity and $\omega^*_t = \argmax_{\omega \in \Omega} F(\theta_t, \omega)$. In addition, for the last term of the above inequality, we further bound it as follows
\begin{align}
\begin{aligned}\label{eq:dual_al_2}
&\langle \nabla_\omega F(\theta_t,\omega_t)-d_t, \omega^*_t - \tilde{\omega}_{t+1} \rangle  \\
&\qquad = \langle \nabla_\omega F(\theta_t,\omega_t)-d_t, \omega^*_t - \omega_t \rangle  + \langle \nabla_\omega F(\theta_t,\omega_t)-d_t, \omega_t - \tilde{\omega}_{t+1} \rangle  \\
&\qquad \leq  \frac{1}{\mu}\| \nabla_\omega F(\theta_t,\omega_t)-d_t\|^2 + \frac{\mu}{4} \|\omega^*_t - \omega_t \|^2  + \frac{1}{\mu} \| \nabla_\omega F(\theta_t,\omega_t)-d_t\|^2 + \frac{\mu}{4} \| \omega_t - \tilde{\omega}_{t+1} \|^2 \\
&\qquad =  \frac{2}{\mu}\| \nabla_\omega F(\theta_t,\omega_t)-d_t\|^2 + \frac{\mu}{4} \|\omega^*_t - \omega_t \|^2 + \frac{\mu}{4} \|\tilde{\omega}_{t+1}-\omega_t \|^2, 
\end{aligned}
\end{align}
where the inequality is by Young's inequality $\langle x,y \rangle \leq  (\lambda/2)\cdot \|x\|^2 + (2\lambda)^{-1}\|y\|^2$ with setting $\lambda = 2/\mu$. Therefore, plugging  \eqref{eq:dual_al_2} into \eqref{eq:dual_al_1}, then multiplying both sides by $2\eta \nu_t$, and rearranging the terms, we obtain 
\begin{align}
\begin{aligned}\label{eq:dual_al_3}
&2\nu_t \langle \tilde{w}_{t+1} - \omega_t,\omega_t - \omega^*_t \rangle \\
&\qquad \leq -\frac{\nu_t \eta \mu}{2} \| \omega_t - \omega^*_t \|^2 - \frac{2\nu_t -\nu_t \eta\mu}{2}\|\tilde{\omega}_{t+1}-\omega_t \|^2  + \frac{4 \eta\nu_t}{\mu} \| \nabla_\omega F(\theta_t,\omega_t)-d_t\|^2,    
\end{aligned}
\end{align}
which gives the upper bound of the second term on the right-hand side of \eqref{eq:dual_expand}.

Combining \eqref{eq:dual_expand} and \eqref{eq:dual_al_3}, we have
\begin{align*}
\|\omega_{t+1} - \omega^*_t\|^2 \leq \frac{2-\nu_t \eta \mu}{2}\|\omega_t- \omega^*_t\|^2  - \frac{2\nu_t - \nu_t \eta\mu-2\nu^2_t}{2} \|\tilde{\omega}_{t+1}-\omega_t \|^2 + \frac{4 \eta\nu_t}{\mu}\| \nabla_\omega F(\theta_t,\omega_t)-d_t\|^2.
 \end{align*}
Thus, according to the condition of this lemma that $
\nu_t \leq 1/8$ and $\eta \leq (4L_F)^{-1} \leq (4\mu)^{-1}$ by the fact $L_F \geq \mu > 0$, we have
\begin{align*}
-\frac{2\nu_t - \nu_t \eta\mu-2\nu^2_t}{2} \leq -\frac{3\nu_t}{4},
\end{align*}
which eventually leads to
\begin{align*}
&\|\omega_{t+1} - \omega^*_t\|^2 \leq \Big(1- \frac{\nu_t \eta \mu}{2} \Big)\|\omega_t- \omega^*_t\|^2  - \frac{3\nu_t}{4} \|\tilde{\omega}_{t+1}-\omega_t \|^2 + \frac{4 \eta\nu_t}{\mu}\| \nabla_\omega F(\theta_t,\omega_t)-d_t\|^2,
\end{align*}
which completes the proof. 
\end{proof}

Based on Lemma \ref{lem:omega_contract}, we obtain the following lemma.

\begin{lemma} \label{lem:decomp_omega_opt} Under Assumptions \ref{assump:fea_set}, \ref{assump:lip_grad} and \ref{assump:non_singular}, letting $0< \nu_t \leq 1/8$ and $0 < \eta \leq (4L_F)^{-1}$, with the updating rules shown in Algorithm \ref{alg:single_time}, we have 
\begin{align*}
\|\omega_{t+1} - \omega^*(\theta_{t+1})\|^2 &\leq \Big(1-\frac{\mu\eta\nu_t}{4}\Big) \|\omega_t - \omega^*(\theta_t)\|^2 - \frac{3\nu_t}{4}\|\tilde{\omega}_{t+1} - \omega_t \|^2 \\
&\quad + \frac{75\eta \nu_t}{16\mu} \|d_t - \nabla_\omega F(\theta_t, \omega_t)\|^2 \nonumber  + \frac{75L_\omega^2 \nu_t }{16\mu\eta}  \|\tilde{\theta}_{t+1}-\theta_t\|^2,
\end{align*}
where $\omega^*(\theta_t)=\argmax_{\omega \in \Omega} F(\theta_t, \omega)$ and $\omega^*(\theta_{t+1})=\argmax_{\omega \in \Omega} F(\theta_{t+1}, \omega)$.
\end{lemma}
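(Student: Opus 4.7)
The plan is to reduce this lemma to Lemma \ref{lem:omega_contract}, which already bounds $\|\omega_{t+1} - \omega^*(\theta_t)\|^2$, and then absorb the drift of the optimal dual solution from $\omega^*(\theta_t)$ to $\omega^*(\theta_{t+1})$ using the Lipschitz continuity of the mapping $\omega^*(\cdot)$ from Lemma \ref{lem:omega_lip}. Concretely, I would begin by applying Young's inequality in the form $\|a+b\|^2 \leq (1+\rho)\|a\|^2 + (1+\rho^{-1})\|b\|^2$ with $a = \omega_{t+1} - \omega^*(\theta_t)$ and $b = \omega^*(\theta_t) - \omega^*(\theta_{t+1})$, yielding
\begin{align*}
\|\omega_{t+1} - \omega^*(\theta_{t+1})\|^2 \leq (1+\rho) \|\omega_{t+1} - \omega^*(\theta_t)\|^2 + (1+\rho^{-1}) \|\omega^*(\theta_{t+1}) - \omega^*(\theta_t)\|^2.
\end{align*}

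Next I would plug in the bound from Lemma \ref{lem:omega_contract} for the first term, and use Lemma \ref{lem:omega_lip} together with the update rule $\theta_{t+1} - \theta_t = \nu_t(\tilde{\theta}_{t+1} - \theta_t)$ to bound the second term by $L_\omega^2 \nu_t^2 \|\tilde{\theta}_{t+1} - \theta_t\|^2$. The resulting contraction coefficient on $\|\omega_t - \omega^*(\theta_t)\|^2$ is $(1+\rho)(1 - \mu\eta\nu_t/2)$, and I would choose $\rho = \mu\eta\nu_t/4$ so that this becomes at most $1 - \mu\eta\nu_t/4$, matching the target. With this choice $1+\rho^{-1} \leq 1 + 4/(\mu\eta\nu_t)$, so the drift term $(1+\rho^{-1})L_\omega^2 \nu_t^2 \|\tilde{\theta}_{t+1} - \theta_t\|^2$ produces a coefficient of order $L_\omega^2 \nu_t/(\mu\eta)$; careful bookkeeping, plus the observation that $\mu\eta\nu_t$ is small under the standing step-size conditions, should yield the stated numeric constant $75/(16\mu\eta)$.

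For the remaining terms, $(1+\rho)$ multiplies the negative $-3\nu_t/4\cdot\|\tilde{\omega}_{t+1} - \omega_t\|^2$ (making it at least as negative, hence we may safely upper bound by $-3\nu_t/4$), and multiplies the noise coefficient $4\eta\nu_t/\mu$ in front of $\|d_t - \nabla_\omega F(\theta_t,\omega_t)\|^2$, giving a constant strictly larger than $4$ but bounded by $75/16$ once $\mu\eta\nu_t$ is controlled by the step-size conditions of Algorithm \ref{alg:single_time}. The main obstacle I anticipate is the bookkeeping of constants: the coefficient $75/16$ on both the noise and drift terms is tight, so I would need to pick $\rho$ and exploit the relation $\nu_t \leq 1/8$, $\eta \leq \mu/(4L_F^2)$ sharply enough to absorb all residual $O(\mu\eta\nu_t)$ factors without blowing up the prefactor. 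Once these constants are verified, the lemma follows by collecting terms.
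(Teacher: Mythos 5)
Your proposal is correct and follows essentially the same route as the paper's proof: Young's inequality with $\rho = \mu\eta\nu_t/4$ applied to the decomposition $\omega_{t+1}-\omega^*(\theta_{t+1}) = (\omega_{t+1}-\omega^*(\theta_t)) + (\omega^*(\theta_t)-\omega^*(\theta_{t+1}))$, the Lipschitz continuity of $\omega^*(\cdot)$ from Lemma \ref{lem:omega_lip} combined with the update rule, substitution of Lemma \ref{lem:omega_contract}, and the same coefficient bookkeeping (where the conditions give $\mu\eta\nu_t \leq 1/32$, so the residual factors are absorbed with room to spare and $75/16$ is in fact not tight). Nothing is missing.
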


\begin{proof}
Denoting $\omega^*(\theta_t)$ and $\omega^*(\theta_{t+1})$ as $\omega^*_t$ and $\omega^*_{t+1}$ for abbreviation, we start the proof by decomposing the term $\|\omega_{t+1} - \omega^*_{t+1}\|^2$ as follows
\begin{align*}
\|\omega_{t+1} - \omega^*_{t+1}\|^2 & = \|\omega_{t+1} - \omega^*_t  + \omega^*_t- \omega^*_{t+1}\|^2 \\
& \leq \Big(1+\frac{\mu\eta\nu_t}{4} \Big)\|\omega_{t+1} - \omega^*_t\|^2 + \Big (1+\frac{4}{\mu\eta\nu_t}\Big) \|\omega^*_t - \omega^*_{t+1}\|^2 \nonumber \\ 
& \leq \Big(1+\frac{\mu\eta\nu_t}{4} \Big)\|\omega_{t+1} - \omega^*_t\|^2 + \Big (1+\frac{4}{\mu\eta\nu_t}\Big)L_\omega^2 \|\theta_{t+1}-\theta_t\|^2,     
\end{align*}
where the first inequality is by Young's inequality $ \|x+y\|^2 \leq (1+\lambda) \|x\|^2 + (1+\lambda^{-1}) \|y\|^2$ with setting $\lambda = \mu\eta\nu_t/4$, and the second inequality is due to the Lipschitz continuity property of $\omega^*(\theta)$ as shown in Lemma \ref{lem:omega_lip}. In addition, plugging the updating rule $\theta_{t+1} = \theta_t + \nu_t (\tilde{\theta}_{t+1} - \theta_t)$ into the above inequality, we obtain
\begin{align}
\|\omega_{t+1} - \omega^*_{t+1}\|^2 \leq \Big(1+\frac{\mu\eta\nu_t}{4} \Big)\|\omega_{t+1} - \omega^*_t\|^2 + \Big (1+\frac{4}{\mu\eta\nu_t}\Big)L_\omega^2 \nu_t^2 \|\tilde{\theta}_{t+1}-\theta_t\|^2. \label{eq:decomp_omega_opt}
\end{align}
Furthermore, according to Lemma \ref{lem:omega_contract}, we have
\begin{align}
\begin{aligned}\label{eq:omega_contra}
&\|\omega_{t+1} - \omega^*_t\|^2 \\
&\qquad \leq \Big(1- \frac{\nu_t \eta \mu}{2} \Big)\|\omega_t- \omega^*_t\|^2  - \frac{3\nu_t}{4} \|\tilde{\omega}_{t+1}-\omega_t \|^2 + \frac{4 \eta\nu_t}{\mu}\| \nabla_\omega F(\theta_t,\omega_t)-d_t\|^2. 
\end{aligned}
\end{align}
Therefore, plugging \eqref{eq:omega_contra} into \eqref{eq:decomp_omega_opt}, we obtain the following inequality 
\begin{align}
\begin{aligned}\label{eq:decomp_omega_al}
&\|\omega_{t+1} - \omega^*_{t+1}\|^2 \\
&\qquad  \leq \Big(1+\frac{\mu\eta\nu_t}{4} \Big)  \Big(1-\frac{\nu_t \eta \mu}{2} \Big)\|\omega_t- \omega^*_t\|^2  - \Big(1+\frac{\mu\eta\nu_t}{4} \Big) \frac{3\nu_t}{4} \|\tilde{\omega}_{t+1}-\omega_t \|^2   \\
&\qquad  \quad  +  \frac{4 \eta\nu_t}{\mu}  \Big(1+\frac{\mu\eta\nu_t}{4} \Big)\| \nabla_\omega F(\theta_t,\omega_t)-d_t\|^2  + \Big(1+\frac{4}{\mu\eta\nu_t}\Big) L_\omega^2  \nu_t^2 \|\tilde{\theta}_{t+1}-\theta_t\|^2. 
\end{aligned}
\end{align}
Now we simplify the coefficients in \eqref{eq:decomp_omega_al}. According to the conditions $0 < \eta \leq (4L_F)^{-1}$, $0 < \nu_t \leq 1/8$ and due to $L_F\geq \mu>0$, we have
\begin{align*}
\eta \leq \frac{1}{4L_F}\leq \frac{1}{4\mu},\quad \text{ and } \ \eta\nu_t \leq \frac{1}{32\mu},  
\end{align*}
which yield
\begin{align}
\begin{aligned}\label{eq:decomp_omega_cond}
&\Big(1+ \frac{\mu\eta\nu_t}{4} \Big) \Big (1-\frac{\mu\eta\nu_t}{2} \Big) = 1-\frac{\mu\eta\nu_t}{2} + \frac{\mu\eta\nu_t}{4}-\frac{\mu^2\eta^2\nu_t^2}{4} \leq 1-\frac{\mu\eta\nu_t}{4},  \\ 
&-\Big(1+\frac{\mu\eta\nu_t}{4} \Big) \frac{3\nu_t}{4}\leq  -\frac{3\nu_t}{4}, \quad  \frac{4 \eta\nu_t}{\mu}  \Big(1+\frac{\mu\eta\nu_t}{4} \Big)= \frac{4 \eta\nu_t}{\mu} +\eta^2\nu_t^2 < \frac{75 \eta\nu_t}{16\mu}, \\
& \text{ and }  \Big(1+\frac{4}{\mu\eta\nu_t}\Big) L_\omega^2  \nu_t^2 \leq \frac{129}{32}\frac{L_\omega^2   \nu_t^2}{\mu\eta\nu_t} < \frac{75 L_\omega^2  \nu_t}{16\mu\eta}.
\end{aligned}
\end{align}
Combining \eqref{eq:decomp_omega_cond} and \eqref{eq:decomp_omega_al}, we eventually obtain
\begin{align*}
\|\omega_{t+1} - \omega^*_{t+1}\|^2 &\leq \Big(1-\frac{\mu\eta\nu_t}{4}\Big) \|\omega_t - \omega^*_t\|^2 - \frac{3\nu_t}{4}\|\tilde{\omega}_{t+1} - \omega_t \|^2 \\
&\quad + \frac{75\eta \nu_t}{16\mu} \|d_t - \nabla_\omega F(\theta_t, \omega_t)\|^2 \nonumber  + \frac{75L_\omega^2 \nu_t }{16\mu\eta}  \|\tilde{\theta}_{t+1}-\theta_t\|^2,
\end{align*}
which completes the proof.
\end{proof}

\begin{lemma} \label{lem:grad_var_bound} Under Assumptions \ref{assump:fea_set}, \ref{assump:lip_grad}, and \ref{assump:bounded_var}, letting $0< \nu_t \leq (8\alpha)^{-1}$, with the updating rules shown in Algorithm \ref{alg:single_time}, we have 
\begin{align}
\begin{aligned}
\EE \|\nabla_\theta F(\theta_{t+1},\omega_{t+1})-p_{t+1}\|^2 &\leq  (1- \alpha\nu_t)  \EE \|\nabla_\theta F(\theta_t,\omega_t)- p_t\|^2 \\
&\quad +  \frac{9\nu_t L_F^2}{8\alpha} \EE (\|\tilde{\theta}_{t+1} - \theta_t \|^2 +\|\tilde{\omega}_{t+1}-\omega_t\|^2)  + \alpha^2\nu_t^2 \sigma^2,\label{eq:grad_var_bound_1}
\end{aligned}
\end{align}
\vspace{-0.5cm}
\begin{align}
\begin{aligned}
\EE \|\nabla_\omega F(\theta_{t+1},\omega_{t+1})-d_{t+1}\|^2 &\leq   (1-\alpha\nu_t)  \EE \|\nabla_\omega F(\theta_t,\omega_t)- d_t\|^2 \\
&\quad + \frac{9\nu_t L_F^2}{8\alpha} \EE (\|\tilde{\theta}_{t+1} - \theta_t \|^2 +\|\tilde{\omega}_{t+1}-\omega_t\|^2)  + \alpha^2 \nu_t^2 \sigma^2.\label{eq:grad_var_bound_2}
\end{aligned}
\end{align}    
\end{lemma}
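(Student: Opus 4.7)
The plan is to bound the primal-side error $\Delta^p_{t+1} := p_{t+1} - \nabla_\theta F(\theta_{t+1}, \omega_{t+1})$ by first writing a clean recursion that separates the contractive, drift, and noise contributions. Starting from the momentum update in Line 5 of Algorithm \ref{alg:single_time}, I would subtract $\nabla_\theta F(\theta_{t+1}, \omega_{t+1})$ from both sides and add and subtract $(1-\alpha\nu_t)\nabla_\theta F(\theta_t, \omega_t)$, producing the decomposition
\begin{align*}
\Delta^p_{t+1} &= (1-\alpha\nu_t)\Delta^p_t + (1-\alpha\nu_t)\bigl[\nabla_\theta F(\theta_t, \omega_t) - \nabla_\theta F(\theta_{t+1}, \omega_{t+1})\bigr] \\
&\quad + \alpha\nu_t\bigl[\nabla_\theta f(\theta_{t+1}, \omega_{t+1}; \xi_{t+1}) - \nabla_\theta F(\theta_{t+1}, \omega_{t+1})\bigr].
\end{align*}

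Next I would take the expectation in squared norm and exploit the fact that $\theta_{t+1}, \omega_{t+1}$ are measurable with respect to $\sigma(\xi_0,\ldots,\xi_t)$ while $\xi_{t+1}$ is drawn independently from $\mathcal{D}$. Hence the bracketed stochastic gradient deviation in the third term has zero conditional mean, so the cross terms with the first two (deterministic-given-$\mathcal{F}_t$) terms vanish under conditional expectation, yielding
\begin{align*}
\EE\|\Delta^p_{t+1}\|^2 = (1-\alpha\nu_t)^2 \EE\bigl\|\Delta^p_t + \nabla_\theta F(\theta_t, \omega_t) - \nabla_\theta F(\theta_{t+1}, \omega_{t+1})\bigr\|^2 + \alpha^2\nu_t^2 \EE\|\nabla_\theta f - \nabla_\theta F\|^2.
\end{align*}
Assumption \ref{assump:bounded_var} bounds the last expectation by $\sigma^2$, giving the $\alpha^2\nu_t^2\sigma^2$ term.

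For the remaining squared term I would apply Young's inequality $\|X+Y\|^2 \leq (1+\lambda)\|X\|^2 + (1+\lambda^{-1})\|Y\|^2$ with the specific choice $\lambda = \alpha\nu_t$. Under the standing condition $\nu_t \leq 1/(8\alpha)$, the computation $(1-\alpha\nu_t)^2(1+\alpha\nu_t) = (1-\alpha\nu_t)(1-\alpha^2\nu_t^2) \leq 1-\alpha\nu_t$ produces the contraction factor on $\EE\|\Delta^p_t\|^2$, while $(1-\alpha\nu_t)^2(1+1/(\alpha\nu_t)) \leq 1/(\alpha\nu_t) + 1 \leq 9/(8\alpha\nu_t)$ absorbs the cross constant into the $9/8$ factor appearing in the claim. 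Combining this with the Lipschitz smoothness (Assumption \ref{assump:lip_grad}) gives
\begin{align*}
\|\nabla_\theta F(\theta_t, \omega_t) - \nabla_\theta F(\theta_{t+1}, \omega_{t+1})\|^2 \leq L_F^2 \nu_t^2 \bigl(\|\tilde\theta_{t+1}-\theta_t\|^2 + \|\tilde\omega_{t+1}-\omega_t\|^2\bigr),
\end{align*}
where I use the updating rules $\theta_{t+1} - \theta_t = \nu_t(\tilde\theta_{t+1}-\theta_t)$ and $\omega_{t+1}-\omega_t = \nu_t(\tilde\omega_{t+1}-\omega_t)$ to introduce the $\nu_t^2$ factor. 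Multiplying the two constants gives the coefficient $9\nu_t L_F^2/(8\alpha)$ in \eqref{eq:grad_var_bound_1}.

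The dual-side bound \eqref{eq:grad_var_bound_2} follows by the identical argument applied to $\Delta^d_t := d_t - \nabla_\omega F(\theta_t, \omega_t)$ and the symmetric Line 6 of Algorithm \ref{alg:single_time}. I do not anticipate a real obstacle in this lemma; the only delicate point is matching the constants: picking $\lambda = \alpha\nu_t$ (rather than, say, $\lambda = \alpha\nu_t/(1-\alpha\nu_t)$) together with the bound $\nu_t \leq 1/(8\alpha)$ is what produces exactly the $9/8$ factor, and it is important to tie the gradient drift to $\|\tilde\theta_{t+1}-\theta_t\|^2$ and $\|\tilde\omega_{t+1}-\omega_t\|^2$ rather than to $\|\theta_{t+1}-\theta_t\|^2$ and $\|\omega_{t+1}-\omega_t\|^2$ so that the resulting term is compatible with the negative squared-displacement terms produced by Lemma \ref{lem:primal_J_bound} and Lemma \ref{lem:decomp_omega_opt} in the main telescoping argument.
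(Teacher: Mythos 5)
Your proposal is correct and follows essentially the same route as the paper's proof: the same decomposition of $p_{t+1}-\nabla_\theta F(\theta_{t+1},\omega_{t+1})$ into a contractive term, a gradient-drift term, and a zero-mean noise term, the same use of the conditional-expectation argument to kill the cross terms, Young's inequality with $\lambda=\alpha\nu_t$, and the identical constant bookkeeping $(1-\alpha\nu_t)^2(1+\tfrac{1}{\alpha\nu_t})\leq \tfrac{9}{8\alpha\nu_t}$ and $(1-\alpha\nu_t)^2(1+\alpha\nu_t)\leq 1-\alpha\nu_t$ under $\nu_t\leq (8\alpha)^{-1}$. The only cosmetic difference is that you split the drift term before squaring rather than after, which changes nothing.
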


\begin{proof} We first show the detailed proof for the inequality \eqref{eq:grad_var_bound_1} in this lemma. Then, the proof of the second inequality \eqref{eq:grad_var_bound_2} is very similar to the proof of \eqref{eq:grad_var_bound_1} , for which we only present a proof sketch.  We start our proof by decomposing the term $\nabla_\theta F(\theta_{t+1},\omega_{t+1})-p_{t+1}$ as follows
\begin{align*}
&\nabla_\theta F(\theta_{t+1},\omega_{t+1})-p_{t+1}\\ 
&\quad = \nabla_\theta F(\theta_{t+1},\omega_{t+1})-(1-\alpha\nu_t) p_t - \alpha\nu_t \nabla_\theta f(\theta_{t+1},\omega_{t+1};\xi_{t+1})\\
&\quad = (1-\alpha \nu_t) [\nabla_\theta F(\theta_{t+1},\omega_{t+1})- p_t] + \alpha \nu_t [\nabla_\theta F(\theta_{t+1},\omega_{t+1}) - \nabla_\theta f(\theta_{t+1},\omega_{t+1};\xi_{t+1})],
\end{align*}
where we use the updating rule $p_{t+1} = (1-\alpha\nu_t) p_t + \alpha\nu_t \nabla_\theta f(\theta_{t+1},\omega_{t+1};\xi_{t+1})$ shown in Algorithm \ref{alg:single_time}. Thus, we have
\begin{align}
\begin{aligned} \label{eq:grad_var_decomp}
&\EE \|\nabla_\theta F(\theta_{t+1},\omega_{t+1})-p_{t+1}\|^2\\ 
& = (1-\alpha \nu_t)^2 \EE \|\nabla_\theta F(\theta_{t+1},\omega_{t+1})- p_t\|^2 + \alpha ^2\nu_t^2 \EE \|\nabla_\theta F(\theta_{t+1},\omega_{t+1}) - \nabla_\theta f(\theta_{t+1},\omega_{t+1};\xi_{t+1})\|^2 \\
& \quad + 2(1-\alpha \nu_t)\alpha \nu_t \EE \langle \nabla_\theta F(\theta_{t+1},\omega_{t+1})- p_t, \nabla_\theta F(\theta_{t+1},\omega_{t+1}) - \nabla_\theta f(\theta_{t+1},\omega_{t+1};\xi_{t+1})\rangle \\
&= (1-\alpha \nu_t)^2 \EE \|\nabla_\theta F(\theta_{t+1},\omega_{t+1})- p_t\|^2 + \alpha ^2\nu_t^2 \EE \|\nabla_\theta F(\theta_{t+1},\omega_{t+1}) - \nabla_\theta f(\theta_{t+1},\omega_{t+1};\xi_{t+1})\|^2, 
\end{aligned}
\end{align}
where the last equality is by chain rule of expectation, i.e.,
\begin{align*}
&\EE \langle \nabla_\theta F(\theta_{t+1},\omega_{t+1})- p_t, \nabla_\theta F(\theta_{t+1},\omega_{t+1}) - \nabla_\theta f(\theta_{t+1},\omega_{t+1};\xi_{t+1})\rangle\\
&\quad =  \EE \{\EE_{\xi_{t+1}}  \langle \nabla_\theta F(\theta_{t+1},\omega_{t+1})- p_t, \nabla_\theta F(\theta_{t+1},\omega_{t+1}) - \nabla_\theta f(\theta_{t+1},\omega_{t+1};\xi_{t+1})\rangle \}  =  0,
\end{align*}
since $\EE_{\xi_{t+1}} [\nabla_\theta f(\theta_{t+1},\omega_{t+1};\xi_{t+1})] = \nabla_\theta F(\theta_{t+1},\omega_{t+1})$. 

Next, we bound the term $(1-\alpha \nu_t)^2 \EE \|\nabla_\theta F(\theta_{t+1},\omega_{t+1})- p_t\|^2$ in \eqref{eq:grad_var_decomp} in the following way
\begin{align}
\begin{aligned} \label{eq:grad_var_decomp_t1}
&(1-\alpha \nu_t)^2 \EE \|\nabla_\theta F(\theta_{t+1},\omega_{t+1})- p_t\|^2\\
&\qquad = (1-\alpha \nu_t)^2 \EE \|\nabla_\theta F(\theta_{t+1},\omega_{t+1}) - \nabla_\theta F(\theta_t,\omega_t) + \nabla_\theta F(\theta_t,\omega_t)- p_t\|^2 \\
&\qquad \leq (1-\alpha \nu_t)^2 \Big (1+\frac{1}{\alpha \nu_t} \Big) \EE \|\nabla_\theta F(\theta_{t+1},\omega_{t+1}) - \nabla_\theta F(\theta_t,\omega_t)\|^2 \\
&\qquad \quad + (1-\alpha \nu_t)^2 (1+\alpha \nu_t)  \EE \|\nabla_\theta F(\theta_t,\omega_t)- p_t\|^2 \\
&\qquad \leq \frac{9}{8\alpha \nu_t} \EE \|\nabla_\theta F(\theta_{t+1},\omega_{t+1}) - \nabla_\theta F(\theta_t,\omega_t)\|^2  + (1-\alpha \nu_t)  \EE \|\nabla_\theta F(\theta_t,\omega_t)- p_t\|^2,
\end{aligned}
\end{align}
where the first inequality is by Young's inequality $\|x+y\|^2 \leq (1+\lambda) \|x\|^2 + (1+\lambda^{-1}) \|y\|^2$ with setting $\lambda = \alpha \nu_t$, the second inequality is due to the condition $0< \nu_t \leq (8\alpha)^{-1}$ and then
\begin{align*}
&(1-\alpha \nu_t)^2 \Big(1+\frac{1}{\alpha \nu_t}\Big) \leq 1 + \frac{1}{\alpha \nu_t} \leq  \frac{9}{8\alpha \nu_t} ,\\
& \text{ and }  (1-\alpha \nu_t)^2 (1+ \alpha \nu_t) = 1-\alpha \nu_t -\alpha^2\nu_t^2 + \alpha^3 \nu_t^3  \leq 1-\alpha \nu_t.    
\end{align*}
By the Lipschitz continuity of $\nabla_\theta F(\theta, \omega)$ in Assumption \ref{assump:lip_grad} and the updating rules that $\theta_{t+1} = \theta_t + \nu_t (\tilde{\theta}_t - \theta_t)$ and $\omega_{t+1} = \omega_t + \nu_t (\tilde{\omega}_t - \omega_t)$, we further have
\begin{align}
\begin{aligned}\label{eq:grad_var_decomp_t1_tmp}
\frac{9}{8\alpha \nu_t} \EE \|\nabla_\theta F(\theta_{t+1},\omega_{t+1}) - \nabla_\theta F(\theta_t,\omega_t)\|^2 &\leq \frac{9L_F^2}{8\alpha \nu_t}  (\|\theta_{t+1} - \theta_t \|^2 +\|\omega_{t+1}-\omega_t\|^2)\\
& \leq \frac{9L_F^2 \nu_t}{8\alpha } (\|\tilde{\theta}_{t+1} - \theta_t \|^2 +\|\tilde{\omega}_{t+1}-\omega_t\|^2).
\end{aligned}
\end{align}
Therefore, combining \eqref{eq:grad_var_decomp_t1} and \eqref{eq:grad_var_decomp_t1_tmp}, we obtain
\begin{align}
\begin{aligned} \label{eq:grad_var_decomp_t1_end}
(1-\alpha \nu_t)^2 \EE \|\nabla_\theta F(\theta_{t+1},\omega_{t+1})- p_t\|^2 & \leq \frac{9\nu_t L_F^2}{8\alpha }  \|\tilde{\theta}_{t+1} - \theta_t \|^2   + \frac{9\nu_t L_F^2}{8\alpha } \|\tilde{\omega}_{t+1}-\omega_t\|^2  \\
&\quad + (1-\alpha \nu_t)  \EE \|\nabla_\theta F(\theta_t,\omega_t)- p_t\|^2.
\end{aligned}
\end{align}
On the other hand, due to the bounded variance assumption in Assumption \ref{assump:bounded_var}, for the last term in \eqref{eq:grad_var_decomp}, we have
\begin{align}\label{eq:grad_var_decomp_t2_end}
\alpha^2 \nu_t^2 \EE \|\nabla_\theta F(\theta_{t+1},\omega_{t+1}) - \nabla_\theta f(\theta_{t+1},\omega_{t+1};\xi_{t+1})\|^2 \leq \alpha^2 \nu_t^2 \sigma^2.
\end{align}
Thus, combining \eqref{eq:grad_var_decomp}, \eqref{eq:grad_var_decomp_t1_end} and \eqref{eq:grad_var_decomp_t2_end},  we get
\begin{align*}
\EE \|\nabla_\theta F(\theta_{t+1},\omega_{t+1})-p_{t+1}\|^2 &\leq  (1- \alpha\nu_t )  \EE \|\nabla_\theta F(\theta_t,\omega_t)- p_t\|^2 \\
&\quad +  \frac{9\nu_t L_F^2}{8\alpha} \EE (\|\tilde{\theta}_{t+1} - \theta_t \|^2 +\|\tilde{\omega}_{t+1}-\omega_t\|^2)  + \alpha^2 \nu_t^2 \sigma^2.
\end{align*}
Then, we apply the above analysis for proving \eqref{eq:grad_var_bound_1} to similarly prove the second inequality \eqref{eq:grad_var_bound_2} of this lemma. We have the following decomposition
\begin{align}
\begin{aligned} \label{eq:grad_var_decomp_dual}
\EE \|\nabla_\omega F(\theta_{t+1},\omega_{t+1})-d_{t+1}\|^2
&= (1-\alpha\nu_t)^2 \EE \|\nabla_\omega F(\theta_{t+1},\omega_{t+1})- d_t\|^2 \\
&\quad + \alpha^2\nu_t^2 \EE \|\nabla_\omega F(\theta_{t+1},\omega_{t+1}) - \nabla_\omega f(\theta_{t+1},\omega_{t+1};\xi_{t+1})\|^2, 
\end{aligned}
\end{align}
We bound the first term on the right-hand side of  \eqref{eq:grad_var_decomp_dual} as
\begin{align}
\begin{aligned} \label{eq:grad_var_decomp_t1_dual}
(1-\alpha\nu_t)^2 \EE \|\nabla_\omega F(\theta_{t+1},\omega_{t+1})- d_t\|^2 & \leq \frac{9\nu_t L_F^2}{8\alpha}  \|\tilde{\theta}_{t+1} - \theta_t \|^2   + \frac{9\nu_t L_F^2}{8\alpha} \|\tilde{\omega}_{t+1}-\omega_t\|^2  \\
&\quad + (1-\alpha\nu_t)  \EE \|\nabla_\omega F(\theta_t,\omega_t)- d_t\|^2,
\end{aligned}
\end{align}
with the condition $0< \nu_t \leq (8\alpha)^{-1}$.
Then, we bound the last term of \eqref{eq:grad_var_decomp_dual} as
\begin{align}\label{eq:grad_var_decomp_t2_dual}
\alpha^2\nu_t^2 \EE \|\nabla_\omega F(\theta_{t+1},\omega_{t+1}) - \nabla_\omega f(\theta_{t+1},\omega_{t+1};\xi_{t+1})\|^2 \leq \alpha^2 \nu_t^2 \sigma^2.
\end{align}
Thus, combining \eqref{eq:grad_var_decomp_dual}, \eqref{eq:grad_var_decomp_t1_dual} and \eqref{eq:grad_var_decomp_t2_dual},  we get
\begin{align*}
\EE \|\nabla_\omega F(\theta_{t+1},\omega_{t+1})-d_{t+1}\|^2 &\leq   (1-\alpha \nu_t)  \EE \|\nabla_\omega F(\theta_t,\omega_t)- d_t\|^2 \\
&\quad + \frac{9\nu_t L_F^2}{8\alpha} \EE (\|\tilde{\theta}_{t+1} - \theta_t \|^2 +\|\tilde{\omega}_{t+1}-\omega_t\|^2)  + \alpha^2\nu_t^2 \sigma^2.
\end{align*}
The proof is completed.
\end{proof}

\subsection{Proof of Theorem \ref{thm:single_time}} \label{sec:detail_proof_single_time}


\begin{proof} We assume that the step size is of the form $\nu_t = a/(t+b)^{1/2}$ where $a=1/16$. We interpret the parameter settings in Theorem \ref{thm:single_time} as follows:
\begin{align*}
\eta \leq \frac{\mu}{4L_F^2} \leq \frac{1}{4L_F}, \quad \text{ and } \nu_t \leq \frac{a}{b^{1/2}} \leq \min \Big \{ \frac{1}{27},\ \  \frac{\mu}{16\gamma L_F^2} \Big \},
\end{align*}
with $L_F \geq \mu > 0$. Thus, with the parameter settings as above, we can apply Lemmas \ref{lem:primal_J_bound}, \ref{lem:decomp_omega_opt}, and \ref{lem:grad_var_bound} in the following proof of Algorithm \ref{alg:single_time}. Then, we proceed to the main proof.

By Lemma \ref{lem:primal_J_bound}, with taking expectation, we have 
\begin{align}
\begin{aligned}\label{eq:thm_bound_1}
\EE[J(\theta_{t+1})-J(\theta_t)]  & \leq -  \frac{3\nu_t}{4\gamma} \EE\|\tilde{\theta}_{t+1} - \theta_t \|^2  + 2 L_F^2 \gamma \nu_t \EE\|\omega_t - \omega^*(\theta_t)\|^2 \\
&\quad + 4\nu_t \gamma \EE \|\nabla_\theta F(\theta_t, \omega_t) - p_t\|^2.
\end{aligned}
\end{align}
In this inequality, the left-hand side will be a telescoping sum if we take a summation from $t =0$ to $T-1$, and moving the first term on the right-hand side to the left can result in a upper bound for $\EE\| \tilde{\theta}_{t+1} - \theta_t\|^2$. To guarantee its convergence, we still need to understand the upper bounds of the remaining terms in \eqref{eq:thm_bound_1}, namely, $\EE \|\omega_t - \omega^*(\theta_t)\|^2$ and $\EE\|p_t - \nabla_\theta F(\theta_t, \omega_t)\|^2$. 

Then, we establish an inequality whose right-hand side indicates a contraction of the term $\EE \|\omega_{t+1} - \omega^*(\theta_{t+1})\|^2$ plus some noise terms as well as a deduction of the term $\EE\|\tilde{\omega}_{t+1} - \omega_t\|^2$ such that this term can be eliminated in the end. According to Lemma \ref{lem:decomp_omega_opt}, taking expectation on both sides, the inequality is in the form of \begin{align*}
\EE\|\omega_{t+1} - \omega^*(\theta_{t+1})\|^2 &\leq \Big(1-\frac{\mu\eta\nu_t}{4}\Big) \EE\|\omega_t - \omega^*(\theta_t)\|^2 - \frac{3\nu_t}{4}\EE \|\tilde{\omega}_{t+1} - \omega_t \|^2 \\
&\quad + \frac{75\eta \nu_t}{16\mu} \EE\|d_t - \nabla F(\theta_t, \omega_t)\|^2 \nonumber  + \frac{75L_\omega^2 \nu_t }{16\mu\eta}  \EE\|\tilde{\theta}_{t+1}-\theta_t\|^2.
\end{align*}
Multiplying both sides of the above inequality by $10L_F^2 \gamma /(\mu\eta)$, we obtain
\begin{align*}
\frac{10L_F^2 \gamma}{\mu\eta}\EE\|\omega_{t+1} - \omega^*(\theta_{t+1})\|^2 & \leq \frac{10L_F^2 \gamma}{\mu\eta}\Big(1-\frac{\mu\eta\nu_t}{4}\Big) \EE\|\omega_t - \omega^*(\theta_t)\|^2 - \frac{15L_F^2 \gamma  \nu_t}{2\mu\eta} \EE\|\tilde{\omega}_{t+1} - \omega_t \|^2\\
& \quad + \frac{375 L_F^2 \gamma \nu_t}{8\mu^2} \EE \|d_t - \nabla F(\theta_t, \omega_t)\|^2  + \frac{375 L_F^2 L_\omega^2 \gamma \nu_t }{8\mu^2\eta^2}  \EE \|\tilde{\theta}_{t+1}-\theta_t\|^2.
\end{align*}
Then, by rearranging the terms, we have
\begin{align}
\begin{aligned}\label{eq:thm_bound_2}
&\frac{10L_F^2 \gamma}{\mu\eta} \big( \EE \|\omega_{t+1} - \omega^*(\theta_{t+1})\|^2 - \EE \|\omega_t - \omega^*(\theta_t)\|^2 \big) \\
&\qquad\leq -\frac{5L_F^2 \gamma \nu_t}{2} \EE \|\omega_t - \omega^*(\theta_t)\|^2 - \frac{15L_F^2 \gamma  \nu_t}{2\mu\eta} \EE\|\tilde{\omega}_{t+1} - \omega_t \|^2\\
&\qquad \quad  + \frac{375 L_F^2 \gamma \nu_t}{8\mu^2} \EE \|d_t - \nabla F(\theta_t, \omega_t)\|^2  + \frac{375 L_F^2 L_\omega^2 \gamma \nu_t }{8\mu^2\eta^2}  \EE \|\tilde{\theta}_{t+1}-\theta_t\|^2.
\end{aligned}
\end{align}
We define 
\begin{align*}
P_t := J(\theta_t) - J^*  +  \frac{10L_F^2 \gamma}{\mu\eta} \|\omega_t - \omega^*(\theta_t)\|^2 , \quad  \forall t\geq 0,
\end{align*}
where $J^*>-\infty$ denotes the minimal value such that $J(\theta) \geq J^*, \forall \theta \in \Theta$ according to Assumption \eqref{assump:exist_sol}. Then, summing up both sides of the two inequalities \eqref{eq:thm_bound_1} and \eqref{eq:thm_bound_2}, we have
\begin{align*}
\EE[P_{t+1}-P_t] &\leq- \Big ( \frac{3\nu_t}{4\gamma} - \frac{375 L_F^2 L_\omega^2 \gamma \nu_t }{8\mu^2\eta^2} \Big) \EE\|\tilde{\theta}_{t+1} - \theta_t\|^2 - \frac{15L_F^2 \gamma  \nu_t}{2\mu\eta} \EE\|\tilde{\omega}_{t+1} - \omega_t \|^2 \\
&\quad + \frac{375 L_F^2 \gamma \nu_t}{8\mu^2} \EE \|d_t - \nabla_\omega F(\theta_t, \omega_t)\|^2  + 4\nu_t \gamma \EE \|p_t - \nabla_\theta F(\theta_t, \omega_t) \|^2     -\frac{ L_F^2 \gamma \nu_t}{2}\EE \|\omega_t - \omega^*(\theta_t)\|^2 .
\end{align*}
According to the conditions that $\eta \geq 9 L_F^2 \gamma /\mu^2 $ such that $\eta^2 \geq 81 L_F^4 \gamma^2 / \mu^4$ and by Lemma \ref{lem:omega_lip} that $L_\omega = L_F/\mu$, then we have 
\begin{align*}
- \Big ( \frac{3\nu_t}{4\gamma} - \frac{225 \nu_t L_F^2 L_\omega^2 }{32\mu^2 \eta}\Big) \leq -\frac{\nu_t}{8\gamma},
\end{align*}
which leads to 
\begin{align}
\begin{aligned}\label{eq:thm_bound_3}
\hspace*{-0.3cm}\EE[P_{t+1}-P_t] &\leq- \frac{\nu_t}{8\gamma} \EE\|\tilde{\theta}_{t+1} - \theta_t\|^2 - \frac{15L_F^2 \gamma  \nu_t}{2\mu\eta} \EE\|\tilde{\omega}_{t+1} - \omega_t \|^2  - \frac{ L_F^2 \gamma \nu_t}{2}\EE \|\omega_t - \omega^*(\theta_t)\|^2 \\
&\quad + \frac{375 L_F^2 \gamma \nu_t}{8\mu^2} \EE \|d_t - \nabla_\omega F(\theta_t, \omega_t)\|^2  +4\nu_t \gamma \EE \|p_t - \nabla_\theta F(\theta_t, \omega_t) \|^2.
\end{aligned}    
\end{align}

The inequality \eqref{eq:thm_bound_3} shows that we can bound the terms $\EE\|\tilde{\theta}_{t+1} - \theta_t\|^2$ and $\EE\|\omega_t - \omega^*(\theta_t)\|^2$ by moving them from the right-hand side to the left, while the term $\EE[P_{t+1} - P_t]$ becomes a telescoping sum if we taking summation from $t=0$ to $T-1$. In view of the terms $\EE \|d_t - \nabla_\omega F(\theta_t, \omega_t)\|^2$ and $\EE \|p_t - \nabla_\theta F(\theta_t, \omega_t) \|^2$ on the right-hand side of \eqref{eq:thm_bound_3}, we expect to find contraction for the two terms such that the convergence can be guaranteed. By the result of Lemma \ref{lem:grad_var_bound}, and setting 
\begin{align*}
\alpha = 3,    
\end{align*}
 we can have the contraction of the two terms plus some noise terms, which are 
\begin{align}
\begin{aligned}\label{eq:thm_bound_4}
\EE \|\nabla_\theta F(\theta_{t+1},\omega_{t+1})-p_{t+1}\|^2 &\leq  (1- 3\nu_t)  \EE \|\nabla_\theta F(\theta_t,\omega_t)- p_t\|^2 \\
&\quad +  \frac{3\nu_t L_F^2}{8} \EE (\|\tilde{\theta}_{t+1} - \theta_t \|^2 +\|\tilde{\omega}_{t+1}-\omega_t\|^2)  + 9\nu_t^2 \sigma^2,
\end{aligned}
\end{align} 
\vspace{-0.7cm}
\begin{align}
\begin{aligned}\label{eq:thm_bound_5}
\EE \|\nabla_\omega F(\theta_{t+1},\omega_{t+1})-d_{t+1}\|^2 &\leq   (1-3\nu_t)  \EE \|\nabla_\omega F(\theta_t,\omega_t)- d_t\|^2 \\
&\quad + \frac{3\nu_t L_F^2}{8} \EE (\|\tilde{\theta}_{t+1} - \theta_t \|^2 +\|\tilde{\omega}_{t+1}-\omega_t\|^2)  + 9 \nu_t^2 \sigma^2.
\end{aligned}
\end{align}
Multiplying both sides of \eqref{eq:thm_bound_4} and \eqref{eq:thm_bound_5} by $2\gamma/(\mu\eta)$ and combining them with \eqref{eq:thm_bound_3}, by defining the Lyapunov function as 
\begin{align*}
Q_t :=& P_t + \frac{2\gamma}{\mu\eta} \|\nabla_\theta F(\theta_{t},\omega_{t})-p_{t}\|^2 + \frac{2\gamma}{\mu\eta} \|\nabla_\omega F(\theta_{t},\omega_{t})-d_{t}\|^2, \quad \forall t\geq 0,
\end{align*}
we have
\begin{align*}
&\EE[Q_{t+1}-Q_t] \\
&\quad \leq-\Big ( \frac{\nu_t}{8\gamma} -\frac{3\gamma\nu_t L_F^2}{2\mu\eta} \Big )  \EE\|\tilde{\theta}_{t+1} - \theta_t\|^2 - \frac{L_F^2 \gamma \nu_t}{2}\EE \|\omega_t - \omega^*(\theta_t)\|^2- \frac{6L_F^2 \gamma  \nu_t}{\mu\eta}\EE \|\tilde{\omega}_{t+1} - \omega_t\|^2 + \frac{36\sigma^2 \nu_t^2\gamma}{\mu\eta}  \\
&\quad \quad - \Big( \frac{12\nu_t\gamma}{\mu\eta}-4\nu_t \gamma \Big) \EE \|\nabla_\theta F(\theta_t,\omega_t)- p_t\|^2 - \Big ( \frac{12\nu_t\gamma}{\mu\eta}-\frac{375 L_F^2 \gamma \nu_t}{8\mu^2} \Big) \EE \|\nabla_\omega F(\theta_t,\omega_t)- d_t\|^2,
\end{align*}
where the coefficient of the term $\EE \|\tilde{\omega}_{t+1} - \omega_t\|^2$ is by direct computation. According to the conditions of this theorem that $ \eta \leq \mu/(4L_F^2)$ and $\eta \geq 9 L_F^2 \gamma/ \mu^{2}$ with $L_F \geq \mu > 0$, we can simplify the coefficients of the last inequality by
\begin{align*}
\hspace*{-0.1cm}-\Big ( \frac{\nu_t}{8\gamma} -\frac{3\gamma\nu_t L_F^2}{2\mu\eta} \Big ) \leq - \frac{\nu_t}{16\gamma}, ~ - \Big( \frac{12\nu_t\gamma}{\mu\eta}-4\gamma\nu_t \Big) \leq -\frac{4\nu_t\gamma}{\mu\eta}, ~ \text{ and } - \Big ( \frac{12\nu_t\gamma}{\mu\eta}-\frac{375 L_F^2 \gamma \nu_t}{8\mu^2} \Big) < 0.
\end{align*}
Then, we have
\begin{align}
\begin{aligned}\label{eq:thm_bound_al}
\EE[Q_{t+1}-Q_t] &\leq- \frac{\nu_t}{16\gamma}  \EE\|\tilde{\theta}_{t+1} - \theta_t\|^2 - \frac{\gamma \nu_t L_F^2}{2}\EE \|\omega_t - \omega^*(\theta_t)\|^2  \\
& \quad  + \frac{36\sigma^2 \nu_t^2\gamma}{\mu\eta}   - \frac{4\nu_t\gamma}{\mu\eta} \EE \|\nabla_\theta F(\theta_t,\omega_t)- p_t\|^2,   
\end{aligned}
\end{align}
where we drop the term $\EE \|\tilde{\omega}_{t+1} - \omega_t\|^2$ due to its negative coefficient. Note that $J(\theta_t) - J^* \geq 0$ as shown above. Therefore, we have $Q_t \geq 0$ for all $t \geq 0$.

Now we are ready to prove the convergence of Algorithm \ref{alg:single_time} with the inequality \eqref{eq:thm_bound_al}. Taking summation on both sides of \eqref{eq:thm_bound_al} over $t=0,\ldots, T-1$ and rearranging the terms lead to 
\begin{align*}
&\sum_{t=0}^{T-1} \frac{\nu_t \gamma}{16} \left(\frac{1}{\gamma^2}\EE \|\tilde{\theta}_{t+1} - \theta_t\|^2 + \frac{64}{\mu\eta} \EE \|p_t - \nabla_\theta F(\theta_t, \omega_t)\|^2 + 8 L_F^2 \EE \|\omega_t - \omega^*(\theta_t)\|^2 \right)\\
&\qquad \leq \frac{36  \sigma^2 \gamma \sum_{t=0}^{T-1}\nu_t^2}{\mu\eta} + \EE [Q_0 - Q_T] \leq \frac{36  \sigma^2 \gamma \sum_{t=0}^{T-1}\nu_t^2}{\mu\eta} + Q_0,
\end{align*}
where we use the fact that $Q_T \geq 0$. Letting $\{\nu_t\}_{t\geq 0}$ be a non-increasing sequence, we know $\nu_t \geq \nu_T$ for any $0 \leq  t \leq T$.  Since we have $ 1/(\mu \eta) \geq 1$ due to the conditions for the values of $\eta$ and $\gamma$, we obtain
\begin{align*}
&\frac{\nu_T \gamma}{16} \sum_{t=0}^{T-1}  \left(\frac{1}{\gamma^2}\EE \|\tilde{\theta}_{t+1} - \theta_t\|^2 + \EE \|p_t - \nabla_\theta F(\theta_t, \omega_t)\|^2 + L_F^2 \EE \|\omega_t - \omega^*(\theta_t)\|^2 \right)\\
&\qquad \leq \sum_{t=0}^{T-1} \frac{\nu_t \gamma}{16} \left(\frac{1}{\gamma^2}\EE \|\tilde{\theta}_{t+1} - \theta_t\|^2 + \frac{64}{\mu\eta} \EE \|p_t - \nabla_\theta F(\theta_t, \omega_t)\|^2 + 8 L_F^2 \EE \|\omega_t - \omega^*(\theta_t)\|^2 \right)\\
&\qquad  \leq \frac{36  \sigma^2 \gamma \sum_{t=0}^{T-1}\nu_t^2}{\mu\eta} + Q_0.
\end{align*}
Multiplying both sides by $16/(T\nu_T \gamma)$ yields
\begin{align}
\begin{aligned}\label{eq:last_bound_al}
&\frac{1}{T} \sum_{t=0}^{T-1}\left (\frac{1}{\gamma^2}\EE \|\tilde{\theta}_{t+1} - \theta_t\|^2 + \EE \|p_t - \nabla_\theta F(\theta_t, \omega_t)\|^2 + L_F^2\EE \|\omega_t - \omega^*(\theta_t)\|^2 \right) \\
&\qquad \leq \frac{36 \times 16   \sigma^2 }{\mu\eta T \nu_T  } \sum_{t=0}^{T-1} \nu_t^2 + \frac{16Q_0}{\gamma \nu_T T}. 
\end{aligned}
\end{align}
According to the setting of the step size that $\nu_t = a/(t+b)^{1/2}$ with $a = 1/16$ and $b \geq  \max \{ (\gamma L_F^2/\mu)^2, 3 \}$, we can bound the two terms on the right-hand side of \eqref{eq:last_bound_al} in the following way
\begin{align}
\begin{aligned}
\label{eq:last_bound_al_1}
\frac{16Q_0}{\gamma \nu_T T}  + \frac{36 \times 16   \sigma^2 }{\mu\eta T \nu_T  } \sum_{t=0}^{T-1} \nu_t^2  &\leq \frac{256 Q_0 (T + b)^{1/2}}{\gamma  T} +   \frac{36\sigma^2}{\mu\eta T } \int_{0}^T \frac{1}{t+b} \mathrm{d} t \\
&\leq \frac{256 Q_0 (T + b)^{1/2}}{\gamma  T} +   \frac{36 \sigma^2}{\mu\eta T }  (T+b)^{1/2} \log (T+b)\\
&\leq \Big( \frac{256 Q_0 }{\gamma  } +   \frac{36 \sigma^2}{\mu\eta }   \Big ) \Big ( \frac{1}{\sqrt{T}} + \frac{\sqrt{b}}{T} \Big ) \log (T+b),
\end{aligned}
\end{align}
where the last inequality is due to $\sqrt{x+y}\leq \sqrt{x} + \sqrt{y}$ for $x,y \geq 0$. 

Finally, we combine \eqref{eq:last_bound_al} and \eqref{eq:last_bound_al_1} and obtain
\begin{align*}
&\frac{1}{T} \sum_{t=0}^{T-1} \left (\frac{1}{\gamma^2}\EE \|\tilde{\theta}_{t+1} - \theta_t\|^2 + \EE \|p_t - \nabla_\theta F(\theta_t, \omega_t)\|^2 + L_F^2 \EE \|\omega_t - \omega^*(\theta_t)\|^2 \right) \\
&\qquad \leq \frac{\overline{C}_1  \log (T+b) }{\sqrt{T}} + \frac{\overline{C}_2  \log (T+b)}{T},
\end{align*}
where $\overline{C}_1 = 256 Q_0/\gamma +  36 \sigma^2/(\mu\eta)$ and $\overline{C}_2 = \overline{C}_1 \sqrt{b}$. 

Moreover, by Jensen's inequality, there is
\begin{align*}
&\frac{1}{T} \sum_{t=0}^{T-1} \EE \left (\frac{1}{\gamma}\|\tilde{\theta}_{t+1} - \theta_t\| + \|p_t - \nabla_\theta F(\theta_t, \omega_t)\| + L_F  \|\omega_t - \omega^*(\theta_t)\| \right ) \\
&\qquad \leq  \left [ \frac{3}{T} \sum_{t=0}^{T-1} \EE \left(\frac{1}{\gamma^2}\|\tilde{\theta}_{t+1} - \theta_t\|^2 +  \|p_t - \nabla_\theta F(\theta_t, \omega_t)\|^2 + L_F^2  \|\omega_t - \omega^*(\theta_t)\|^2 \right ) \right]^{1/2}.
\end{align*}
Thus, we eventually obtain
\begin{align*}
&\frac{1}{T} \sum_{t=0}^{T-1} \EE \left (\frac{1}{\gamma} \|\tilde{\theta}_{t+1} - \theta_t\| + \|p_t - \nabla_\theta F(\theta_t, \omega_t)\| + L_F \|\omega_t - \omega^*(\theta_t)\| \right) \\
&\qquad \leq \frac{C_1  \log (T+b) }{T^{1/4}} + \frac{C_2 \log (T+b) }{\sqrt{T}}  = \widetilde{O}\left( \frac{1}{T^{1/4}} \right), 
\end{align*}
where $C_1 = [768 Q_0/\gamma +  108 \sigma^2/(\mu\eta)]^{1/2}$ and $C_2 = C_1 b^{1/4}$. This completes the proof.
\end{proof}

\subsection{Proof of Theorem \ref{thm:single_time_fix}}\label{sec:detail_proof_single_time_fix}

\begin{proof} Our analysis in Section \ref{sec:detail_proof_single_time} is for non-increasing step size $\nu_t$ satisfying the conditions in Theorem \ref{thm:single_time} which are the same as in this theorem. Thus, the proof before \eqref{eq:last_bound_al} can be adapted here for a fixed step size. Thus, we start our proof directly from \eqref{eq:last_bound_al} in Section \ref{sec:detail_proof_single_time} with replacing $\nu_t$ by the fixed step size $\nu$, which can be rewritten as
\begin{align}
\begin{aligned}
&\frac{1}{T} \sum_{t=0}^{T-1}\left (\frac{1}{\gamma^2}\EE \|\tilde{\theta}_{t+1} - \theta_t\|^2 + \EE \|p_t - \nabla_\theta F(\theta_t, \omega_t)\|^2 + L_F^2\EE \|\omega_t - \omega^*(\theta_t)\|^2 \right)  \leq \frac{36 \times 16   \sigma^2 }{\mu\eta }  \nu + \frac{16Q_0}{\gamma \nu T}.
\end{aligned}
\end{align}
According to the setting of the step size that $\nu = 1/[16(T+b)^{1/2}]$, we can bound the right-hand side as follows
\begin{align}
\begin{aligned}
\frac{36 \times 16   \sigma^2 }{\mu\eta }  \nu + \frac{16Q_0}{\gamma \nu T}  &\leq \frac{36\sigma^2}{\mu\eta (T+b)^{1/2}} + \frac{256 Q_0 (T + b)^{1/2}}{\gamma  T} \\
&\leq \frac{256 Q_0 }{\gamma  T^{1/2}} +  \frac{36\sigma^2}{\mu\eta (T+b)^{1/2}} + \frac{256 Q_0 b^{1/2}}{\gamma  T},
\end{aligned}
\end{align}
where the last inequality is due to $\sqrt{x+y}\leq \sqrt{x} + \sqrt{y}$ for $x,y \geq 0$. This inequality shows that 
\begin{align*}
&\frac{1}{T} \sum_{t=0}^{T-1} \left (\frac{1}{\gamma^2}\EE \|\tilde{\theta}_{t+1} - \theta_t\|^2 + \EE \|p_t - \nabla_\theta F(\theta_t, \omega_t)\|^2 + L_F^2 \EE \|\omega_t - \omega^*(\theta_t)\|^2 \right) \\
&\qquad \leq \frac{256 Q_0 }{\gamma  T^{1/2}} +   \frac{36\sigma^2}{\mu\eta (T+b)^{1/2}} + \frac{256 Q_0 b^{1/2}}{\gamma  T}.
\end{align*}
Furthermore, by Jensen's inequality, we have
\begin{align*}
&\frac{1}{T} \sum_{t=0}^{T-1} \EE \left (\frac{1}{\gamma}\|\tilde{\theta}_{t+1} - \theta_t\| + \|p_t - \nabla_\theta F(\theta_t, \omega_t)\| + L_F  \|\omega_t - \omega^*(\theta_t)\| \right ) \\
&\qquad \leq  \left [ \frac{3}{T} \sum_{t=0}^{T-1} \EE \left(\frac{1}{\gamma^2}\|\tilde{\theta}_{t+1} - \theta_t\|^2 +  \|p_t - \nabla_\theta F(\theta_t, \omega_t)\|^2 + L_F^2  \|\omega_t - \omega^*(\theta_t)\|^2 \right ) \right]^{1/2}.
\end{align*}
Thus, we eventually obtain
\begin{align*}
\frac{1}{T} \sum_{t=0}^{T-1} \EE \left (\frac{1}{\gamma} \|\tilde{\theta}_{t+1} - \theta_t\| + \|p_t - \nabla_\theta F(\theta_t, \omega_t)\| + L_F \|\omega_t - \omega^*(\theta_t)\| \right)  \leq O\left( \frac{1}{T^{1/4}} \right), 
\end{align*}
which leads to $T_{\varepsilon} \geq O(\varepsilon^{-4})$ sample complexity to achieve an $\varepsilon$ error. This completes the proof.
\end{proof}

\section{Proofs for Algorithm \ref{alg:variance_reduce}} \label{sec:proof_variance_reduce}

First, we provide three lemmas without proofs, i.e., Lemma \ref{lem:vr_primal_J_bound}, Lemma \ref{lem:vr_omega_contract}, and Lemma \ref{lem:vr_decomp_omega_opt}, which are modified a little from Lemmas \ref{lem:primal_J_bound}, \ref{lem:omega_contract}, and \ref{lem:decomp_omega_opt} respectively. Specifically, the first three lemmas are only associated with the updating rules of $\theta_{t+1}$ and $\omega_{t+1}$ which are the same in both Algorithm \ref{alg:single_time} and \ref{alg:variance_reduce}. The difference lies in the Lipschitz gradient assumptions that are used in the lemmas, where we replace Assumption \ref{assump:lip_grad} with Assumption \ref{assump:lip_sto_grad}. The proofs of the following three lemmas follows exactly the proofs of Lemmas \ref{lem:primal_J_bound}, \ref{lem:omega_contract}, and \ref{lem:decomp_omega_opt}. 

After the first three lemmas, we then establish Lemma \ref{lem:vr_grad_var_bound} with a detailed proof, which is related to the gradient approximation with variance reduction.

\begin{lemma} \label{lem:vr_primal_J_bound} Under Assumptions \ref{assump:fea_set}, \ref{assump:bounded_var}, and \ref{assump:lip_sto_grad}, letting $0 < \gamma \nu_t \leq \mu/(16L_f^2)$ and $\nu_t \leq 1 $, with the updating rules shown in Algorithm \ref{alg:variance_reduce}, we have 
\begin{align*}
J(\theta_{t+1})-J(\theta_t)  & \leq -  \frac{3\nu_t}{4\gamma} \|\tilde{\theta}_{t+1} - \theta_t \|^2  + 2L_f^2 \gamma \nu_t \|\omega_t - \omega(\theta_t)^*\|^2 + 4\gamma\nu_t  \|\nabla_\theta F(\theta_t, \omega_t) - p_t\|^2,
\end{align*}
where $J(\theta)=\max_{\omega\in \Omega}F(\theta, \omega)$ and $\omega^*(\theta):=\argmax_{\omega \in \Omega} F(\theta, \omega)$.
\end{lemma}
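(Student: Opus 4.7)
The plan is to mirror the proof of Lemma \ref{lem:primal_J_bound} essentially verbatim, with the single modification that the Lipschitz constant $L_F$ is everywhere replaced by $L_f$. The justification is that Assumption \ref{assump:lip_sto_grad} already provides $\|\nabla F(\theta,\omega)-\nabla F(\theta',\omega')\|\le L_f\|(\theta,\omega)-(\theta',\omega')\|$ via Jensen's inequality, so $F$ itself is $L_f$-gradient Lipschitz, and by Lemma \ref{lem:grad_lip_J} the marginal function $J$ has gradient Lipschitz constant $L_J = L_f + L_f^2/\mu \le 2L_f^2/\mu$. Since the update rules for $\theta$ in Algorithm \ref{alg:variance_reduce} are syntactically identical to those in Algorithm \ref{alg:single_time}, every step of the original proof transfers.

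Concretely, I would first invoke Lemma \ref{lem:lip_ineq} on $J$ to obtain $J(\theta_{t+1}) - J(\theta_t) \le \nu_t \langle \nabla J(\theta_t), \tilde\theta_{t+1}-\theta_t\rangle + (L_J\nu_t^2/2)\|\tilde\theta_{t+1}-\theta_t\|^2$, using the convex combination update $\theta_{t+1} = \theta_t + \nu_t(\tilde\theta_{t+1}-\theta_t)$ and the feasibility $\theta_{t+1}\in\Theta$ (by induction from $\theta_0\in\Theta$ and $\nu_t\le 1$). Then I would decompose
\[
\langle \nabla J(\theta_t), \tilde\theta_{t+1}-\theta_t\rangle = \langle p_t, \tilde\theta_{t+1}-\theta_t\rangle + \langle \nabla_\theta F(\theta_t,\omega_t)-p_t, \tilde\theta_{t+1}-\theta_t\rangle + \langle \nabla J(\theta_t)-\nabla_\theta F(\theta_t,\omega_t), \tilde\theta_{t+1}-\theta_t\rangle,
\]
handle the first term via the variational inequality for $\tilde\theta_{t+1} = \mathcal{P}_\Theta(\theta_t - \gamma p_t)$ applied at $\theta = \theta_t$, giving $\langle p_t, \tilde\theta_{t+1}-\theta_t\rangle \le -\gamma^{-1}\|\tilde\theta_{t+1}-\theta_t\|^2$, and handle the remaining two inner products via Young's inequality with parameters $\lambda=8\gamma$ and $\lambda=4\gamma$ respectively. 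The middle term contributes a $4\gamma\nu_t\|\nabla_\theta F(\theta_t,\omega_t)-p_t\|^2 + (\nu_t/16\gamma)\|\tilde\theta_{t+1}-\theta_t\|^2$ bound, while the last term uses the $L_f$-gradient Lipschitzness of $F$ along with $\nabla J(\theta_t) = \nabla_\theta F(\theta_t,\omega^*(\theta_t))$ to give $2L_f^2\gamma\nu_t\|\omega_t-\omega^*(\theta_t)\|^2 + (\nu_t/8\gamma)\|\tilde\theta_{t+1}-\theta_t\|^2$.

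Collecting all coefficients of $\|\tilde\theta_{t+1}-\theta_t\|^2$ yields $-(\nu_t/\gamma - \nu_t/8\gamma - \nu_t/16\gamma - L_J\nu_t^2/2)$, and to bound this by $-3\nu_t/(4\gamma)$ it suffices that $L_J \nu_t \gamma \le 1/8$. Since $L_J \le 2L_f^2/\mu$, the stated condition $\gamma\nu_t \le \mu/(16L_f^2)$ precisely secures this, giving the claimed bound
\[
J(\theta_{t+1})-J(\theta_t) \le -\frac{3\nu_t}{4\gamma}\|\tilde\theta_{t+1}-\theta_t\|^2 + 2L_f^2\gamma\nu_t\|\omega_t-\omega^*(\theta_t)\|^2 + 4\gamma\nu_t\|\nabla_\theta F(\theta_t,\omega_t)-p_t\|^2.
\]
There is no real obstacle: the only thing to verify carefully is that Assumption \ref{assump:lip_sto_grad} does imply the population-level gradient Lipschitz property (which is immediate from Jensen), so that Lemma \ref{lem:grad_lip_J} applies with constant $L_f$ rather than $L_F$. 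Everything else is a direct transcription of the earlier argument.
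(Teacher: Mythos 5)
Your proposal is correct and matches the paper exactly: the paper's own proof of this lemma is simply the statement that it follows the proof of Lemma \ref{lem:primal_J_bound} verbatim with $L_F$ replaced by $L_f$, which is precisely the transcription you carry out (including the correct observation that Assumption \ref{assump:lip_sto_grad} yields the population-level Lipschitz gradient via Jensen's inequality, so Lemma \ref{lem:grad_lip_J} applies with $L_J = L_f + L_f^2/\mu \le 2L_f^2/\mu$ and the condition $\gamma\nu_t \le \mu/(16L_f^2)$ secures $L_J\nu_t\gamma \le 1/8$). No gaps.
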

\begin{proof} The proof of this lemma follows the proof of Lemma \ref{lem:primal_J_bound}.
\end{proof}

\begin{lemma}\label{lem:vr_omega_contract}
Under Assumptions \ref{assump:fea_set}, \ref{assump:non_singular} and \ref{assump:lip_sto_grad}, letting $0< \nu_t \leq 1/8$ and $0 < \eta \leq (4L_f)^{-1}$, with the updating rules shown in Algorithm \ref{alg:variance_reduce}, we have 
\begin{align*}
\hspace*{-0.15cm}\|\omega_{t+1} - \omega^*(\theta_t)\|^2 \leq \Big(1- \frac{\nu_t \eta \mu}{2} \Big)\|\omega_t-  \omega^*(\theta_t)\|^2  - \frac{3\nu_t}{4} \|\tilde{\omega}_{t+1}-\omega_t \|^2 + \frac{4 \eta\nu_t}{\mu}\| \nabla_\omega F(\theta_t,\omega_t)-d_t\|^2,
\end{align*}
where $\omega^*(\theta_t)=\argmax_{\omega \in \Omega} F(\theta_t, \omega)$.
\end{lemma}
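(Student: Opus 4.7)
The plan is to observe that this lemma is essentially a restatement of Lemma \ref{lem:omega_contract} in the context of Algorithm \ref{alg:variance_reduce}. Since the updating rules for $\tilde{\omega}_{t+1}$ and $\omega_{t+1}$ in Algorithm \ref{alg:variance_reduce} (Line 4) are syntactically identical to those of Algorithm \ref{alg:single_time} (Line 4) --- namely $\tilde{\omega}_{t+1} = \mathcal{P}_\Omega(\omega_t + \eta d_t)$ and $\omega_{t+1} = \omega_t + \nu_t(\tilde{\omega}_{t+1} - \omega_t)$ --- the entire argument only depends on three ingredients: the projection step for $\omega$, the strong concavity of $F(\theta_t, \cdot)$, and the Lipschitz continuity of $\nabla_\omega F(\theta_t, \cdot)$. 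Hence the proof is structurally identical to that of Lemma \ref{lem:omega_contract}, with $L_F$ replaced by $L_f$ wherever it appears.

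First I would note that Assumption \ref{assump:lip_sto_grad} implies $\|\nabla F(\theta,\omega) - \nabla F(\theta',\omega')\| \leq L_f \|(\theta,\omega) - (\theta',\omega')\|$ (by Jensen's inequality applied to the Lipschitz condition on stochastic gradients, as remarked after Assumption \ref{assump:lip_sto_grad}), so $F(\theta_t, \cdot)$ is $L_f$-smooth. This is the only substitution needed, and the condition $\eta \leq 1/(4L_f)$ in this lemma exactly mirrors the condition $\eta \leq 1/(4L_F)$ in Lemma \ref{lem:omega_contract}.

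Next I would follow the four standard steps from the proof of Lemma \ref{lem:omega_contract} verbatim: (i) expand $\|\omega_{t+1} - \omega^*(\theta_t)\|^2$ using $\omega_{t+1} = \omega_t + \nu_t(\tilde{\omega}_{t+1} - \omega_t)$ to isolate the inner-product cross term $2\nu_t \langle \omega_t - \omega^*(\theta_t), \tilde{\omega}_{t+1} - \omega_t\rangle$; (ii) use the optimality condition for $\tilde{\omega}_{t+1} = \mathcal{P}_\Omega(\omega_t + \eta d_t)$ via Lemma \ref{lem:var_ineq} to get the variational inequality $\langle -d_t + \eta^{-1}(\tilde{\omega}_{t+1} - \omega_t), \omega - \tilde{\omega}_{t+1}\rangle \geq 0$ for all $\omega \in \Omega$; (iii) combine this with $\mu$-strong concavity of $F(\theta_t, \cdot)$ (Lemma \ref{lem:sc_ineq}) and the $L_f$-smooth descent inequality $-\tfrac{1}{2\eta}\|\tilde{\omega}_{t+1} - \omega_t\|^2 \leq F(\theta_t, \tilde{\omega}_{t+1}) - F(\theta_t, \omega_t) - \langle \nabla_\omega F(\theta_t,\omega_t), \tilde{\omega}_{t+1} - \omega_t\rangle$ (which requires $\eta \leq 1/L_f$, ensured by $\eta \leq 1/(4L_f)$); (iv) plug $\omega = \omega^*(\theta_t)$, use $F(\theta_t, \omega^*(\theta_t)) \geq F(\theta_t, \tilde{\omega}_{t+1})$, and apply Young's inequality $\langle x,y\rangle \leq (\lambda/2)\|x\|^2 + (2\lambda)^{-1}\|y\|^2$ with $\lambda = 2/\mu$ to control the $\langle \nabla_\omega F(\theta_t,\omega_t) - d_t, \omega^*(\theta_t) - \tilde{\omega}_{t+1}\rangle$ term.

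Assembling the resulting inequality and simplifying the coefficient of $\|\tilde{\omega}_{t+1} - \omega_t\|^2$ using $\nu_t \leq 1/8$ and $\eta \mu \leq 1/4$ (which holds since $\eta \leq 1/(4L_f) \leq 1/(4\mu)$) yields the claimed bound. There is no real obstacle here --- the only thing to double-check is that every invocation of the Lipschitz gradient property in the original proof of Lemma \ref{lem:omega_contract} uses only the deterministic Lipschitz continuity of $\nabla F$, which we have under Assumption \ref{assump:lip_sto_grad} with constant $L_f$; no appeal to Assumption \ref{assump:lip_grad} is needed.
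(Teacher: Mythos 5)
Your proposal is correct and matches the paper's approach exactly: the paper's own proof of this lemma is simply the one-line remark that it follows the proof of Lemma \ref{lem:omega_contract}, with $L_F$ replaced by $L_f$ via Assumption \ref{assump:lip_sto_grad}, which is precisely the substitution you identify and justify. Your four-step outline reproduces the structure of the proof of Lemma \ref{lem:omega_contract} faithfully, including the variational inequality, the use of $\eta \leq 1/L_f$ for the descent inequality, and the Young's inequality choice $\lambda = 2/\mu$.
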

\begin{proof} The proof of this lemma follows the proof of Lemma \ref{lem:omega_contract}.
\end{proof}

\begin{lemma} \label{lem:vr_decomp_omega_opt} Under Assumptions \ref{assump:fea_set}, \ref{assump:non_singular} and \ref{assump:lip_sto_grad}, letting $0< \nu_t \leq 1/8$ and $0 < \eta \leq (4L_f)^{-1}$, with the updating rules shown in Algorithm \ref{alg:variance_reduce}, we have 
\begin{align*}
\|\omega_{t+1} - \omega^*(\theta_{t+1})\|^2 &\leq \Big(1-\frac{\mu\eta\nu_t}{4}\Big) \|\omega_t - \omega^*(\theta_t)\|^2 - \frac{3\nu_t}{4}\|\tilde{\omega}_{t+1} - \omega_t \|^2 \\
&\quad + \frac{75\eta \nu_t}{16\mu} \|d_t - \nabla_\omega F(\theta_t, \omega_t)\|^2 \nonumber  + \frac{75L_\omega^2 \nu_t }{16\mu\eta}  \|\tilde{\theta}_{t+1}-\theta_t\|^2,
\end{align*}
where $\omega^*(\theta_t)=\argmax_{\omega \in \Omega} F(\theta_t, \omega)$ and $\omega^*(\theta_{t+1})=\argmax_{\omega \in \Omega} F(\theta_{t+1}, \omega)$.
\end{lemma}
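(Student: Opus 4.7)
The statement parallels Lemma~\ref{lem:decomp_omega_opt} exactly, only with Assumption~\ref{assump:lip_grad} swapped for Assumption~\ref{assump:lip_sto_grad} and Lemma~\ref{lem:omega_contract} replaced by Lemma~\ref{lem:vr_omega_contract}. My plan is therefore to mimic the proof of Lemma~\ref{lem:decomp_omega_opt} with these substitutions, since the updating rules for $\theta_{t+1}$ and $\omega_{t+1}$ in Algorithm~\ref{alg:variance_reduce} are identical in form to those in Algorithm~\ref{alg:single_time}.

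Write $\omega_t^* := \omega^*(\theta_t)$ and $\omega_{t+1}^* := \omega^*(\theta_{t+1})$. The first step is to split
\[
\|\omega_{t+1}-\omega_{t+1}^*\|^2 = \|\omega_{t+1}-\omega_t^* + \omega_t^* - \omega_{t+1}^*\|^2
\]
and apply Young's inequality $\|x+y\|^2\leq (1+\lambda)\|x\|^2+(1+\lambda^{-1})\|y\|^2$ with $\lambda=\mu\eta\nu_t/4$. Then Lemma~\ref{lem:omega_lip} (applicable because Assumption~\ref{assump:lip_sto_grad} implies Assumption~\ref{assump:lip_grad} with $L_F=L_f$, so $\omega^*$ is still $L_\omega$-Lipschitz) bounds the second piece by $L_\omega^2\|\theta_{t+1}-\theta_t\|^2$, and the update $\theta_{t+1}=\theta_t+\nu_t(\tilde\theta_{t+1}-\theta_t)$ converts this to $L_\omega^2\nu_t^2\|\tilde\theta_{t+1}-\theta_t\|^2$.

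Next I would plug in the contraction bound for $\|\omega_{t+1}-\omega_t^*\|^2$ supplied by Lemma~\ref{lem:vr_omega_contract}, which is valid under the hypotheses of the present lemma (these match those of Lemma~\ref{lem:vr_omega_contract}). That yields
\begin{align*}
\|\omega_{t+1}-\omega_{t+1}^*\|^2 &\leq \Bigl(1+\tfrac{\mu\eta\nu_t}{4}\Bigr)\Bigl(1-\tfrac{\mu\eta\nu_t}{2}\Bigr)\|\omega_t-\omega_t^*\|^2 -\Bigl(1+\tfrac{\mu\eta\nu_t}{4}\Bigr)\tfrac{3\nu_t}{4}\|\tilde\omega_{t+1}-\omega_t\|^2 \\
&\quad + \tfrac{4\eta\nu_t}{\mu}\Bigl(1+\tfrac{\mu\eta\nu_t}{4}\Bigr)\|\nabla_\omega F(\theta_t,\omega_t)-d_t\|^2 + \Bigl(1+\tfrac{4}{\mu\eta\nu_t}\Bigr)L_\omega^2\nu_t^2\|\tilde\theta_{t+1}-\theta_t\|^2.
\end{align*}

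The final step is purely algebraic: using $\eta\leq 1/(4L_f)\leq 1/(4\mu)$ and $\nu_t\leq 1/8$ so that $\mu\eta\nu_t\leq 1/32$, I simplify each coefficient exactly as in \eqref{eq:decomp_omega_cond}: $(1+\mu\eta\nu_t/4)(1-\mu\eta\nu_t/2)\leq 1-\mu\eta\nu_t/4$, $-(1+\mu\eta\nu_t/4)\cdot 3\nu_t/4\leq -3\nu_t/4$, $(4\eta\nu_t/\mu)(1+\mu\eta\nu_t/4)\leq 75\eta\nu_t/(16\mu)$, and $(1+4/(\mu\eta\nu_t))L_\omega^2\nu_t^2\leq 75L_\omega^2\nu_t/(16\mu\eta)$, delivering the claimed inequality. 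There is no real obstacle; the only care needed is verifying that every bound invoked from Lemmas~\ref{lem:omega_lip} and \ref{lem:vr_omega_contract} still applies under Assumption~\ref{assump:lip_sto_grad} (with $L_F$ replaced by $L_f$), which holds because Assumption~\ref{assump:lip_sto_grad} gives population-gradient Lipschitzness with constant $L_f$ via Jensen's inequality as noted in the text.
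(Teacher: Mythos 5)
Your proposal is correct and is exactly the paper's own argument: the paper proves Lemma~\ref{lem:vr_decomp_omega_opt} by repeating the proof of Lemma~\ref{lem:decomp_omega_opt} verbatim (the same Young's-inequality split with $\lambda=\mu\eta\nu_t/4$, Lemma~\ref{lem:omega_lip} with $L_\omega=L_f/\mu$, the contraction from Lemma~\ref{lem:vr_omega_contract}, and the coefficient simplifications of \eqref{eq:decomp_omega_cond}), with $L_F$ replaced by $L_f$ justified exactly as you note via Assumption~\ref{assump:lip_sto_grad}.
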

\begin{proof} The proof of this lemma follows the proof of Lemma \ref{lem:decomp_omega_opt}.
\end{proof}

\begin{lemma} \label{lem:vr_grad_var_bound}  Under Assumptions \ref{assump:fea_set}, \ref{assump:bounded_var}, and \ref{assump:lip_sto_grad}, letting $0< \nu_t \leq  \alpha^{-1/2}$, with the updating rules shown in Algorithm \ref{alg:variance_reduce}, the following inequalities hold 
\begin{align}
\begin{aligned} \label{eq:vr_grad_var_bound_1}
\EE \|\nabla_\theta F(\theta_{t+1},\omega_{t+1})-p_{t+1}\|^2 &\leq   (1- \alpha \nu^2_t) \EE\|\nabla_\theta F(\theta_t,\omega_t)-p_t\|^2 +  2 \alpha^2 \nu_t^4 \sigma^2 \\
&\quad  +  2 L_f^2 \nu_t^2 \EE[\|\tilde{\theta}_{t+1}-\theta_t\|^2 + \|\tilde{\omega}_{t+1}-\omega_t\|^2],
\end{aligned}
\end{align}
\begin{align}
\begin{aligned}\label{eq:vr_grad_var_bound_2}
\EE \|\nabla_\omega F(\theta_{t+1},\omega_{t+1})-d_{t+1}\|^2 &\leq   (1- \alpha \nu^2_t) \EE\|\nabla_\omega F(\theta_t,\omega_t)-d_t\|^2 +  2 \alpha^2 \nu_t^4 \sigma^2 \\
&\quad  +  2  L_f^2 \nu_t^2 \EE[\|\tilde{\theta}_{t+1}-\theta_t\|^2 + \|\tilde{\omega}_{t+1}-\omega_t\|^2].
\end{aligned}
\end{align}  

\end{lemma}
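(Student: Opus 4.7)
The plan is to exploit the STORM-style variance-reduced update in Algorithm \ref{alg:variance_reduce} by decomposing the error $\nabla_\theta F(\theta_{t+1}, \omega_{t+1}) - p_{t+1}$ into a contractive piece (inheriting the previous error with factor $1-c_t$) plus a conditionally mean-zero noise piece, and then bound each piece separately. Setting $c_t := \alpha \nu_t^2$ and adding and subtracting $(1-c_t)\nabla_\theta F(\theta_t,\omega_t)$, the update rule in Line 5 rearranges as
\begin{equation*}
\nabla_\theta F(\theta_{t+1},\omega_{t+1}) - p_{t+1} \;=\; (1-c_t)\bigl[\nabla_\theta F(\theta_t,\omega_t) - p_t\bigr] + A_t,
\end{equation*}
where I define $A_t := [\nabla_\theta F(\theta_{t+1},\omega_{t+1}) - \nabla_\theta f(\theta_{t+1},\omega_{t+1};\xi_{t+1})] - (1-c_t)[\nabla_\theta F(\theta_t,\omega_t) - \nabla_\theta f(\theta_t,\omega_t;\xi_{t+1})]$. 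Since $(\theta_{t+1},\omega_{t+1})$ is $\mathcal{F}_t$-measurable (the updates in Lines 3--4 use only $p_t,d_t$) while $\xi_{t+1}$ is independent of $\mathcal{F}_t$, I have $\EE[A_t\mid \mathcal{F}_t]=0$.

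Next I would take squared norms and total expectation. The tower property kills the cross term $2(1-c_t)\EE\langle \nabla_\theta F(\theta_t,\omega_t) - p_t,\,A_t\rangle=0$, leaving
\begin{equation*}
\EE\|\nabla_\theta F(\theta_{t+1},\omega_{t+1}) - p_{t+1}\|^2 \;=\; (1-c_t)^2\, \EE\|\nabla_\theta F(\theta_t,\omega_t) - p_t\|^2 + \EE\|A_t\|^2.
\end{equation*}
To bound $\EE\|A_t\|^2$, I would rewrite $A_t = c_t V_t + (1-c_t) W_t$ with $V_t := \nabla_\theta F(\theta_{t+1},\omega_{t+1}) - \nabla_\theta f(\theta_{t+1},\omega_{t+1};\xi_{t+1})$ (satisfying $\EE\|V_t\|^2\le \sigma^2$ by Assumption \ref{assump:bounded_var}) and $W_t$ equal to the conditional centering of the Lipschitz-in-parameter difference $\nabla_\theta f(\theta_t,\omega_t;\xi_{t+1}) - \nabla_\theta f(\theta_{t+1},\omega_{t+1};\xi_{t+1})$. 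Using $\|a+b\|^2\le 2\|a\|^2+2\|b\|^2$, the variance-reduction identity $\EE\|X-\EE[X\mid\mathcal{F}_t]\|^2\le \EE\|X\|^2$, Assumption \ref{assump:lip_sto_grad}, and the updating identities $\theta_{t+1}-\theta_t = \nu_t(\tilde{\theta}_{t+1}-\theta_t)$ and $\omega_{t+1}-\omega_t=\nu_t(\tilde\omega_{t+1}-\omega_t)$, I obtain
\begin{equation*}
\EE\|A_t\|^2 \;\le\; 2c_t^2\sigma^2 + 2(1-c_t)^2 L_f^2 \nu_t^2 \,\EE\bigl[\|\tilde{\theta}_{t+1}-\theta_t\|^2 + \|\tilde\omega_{t+1}-\omega_t\|^2\bigr].
\end{equation*}

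The condition $\nu_t\le \alpha^{-1/2}$ gives $c_t=\alpha\nu_t^2\in[0,1]$, hence $(1-c_t)^2\le 1-c_t = 1-\alpha\nu_t^2$ in the contraction factor and $(1-c_t)^2\le 1$ in the noise factor. Substituting $c_t=\alpha\nu_t^2$ and collecting terms yields \eqref{eq:vr_grad_var_bound_1}. The bound \eqref{eq:vr_grad_var_bound_2} for $d_{t+1}$ follows by the exact same argument with $(\nabla_\theta,p)$ replaced by $(\nabla_\omega,d)$, since Assumptions \ref{assump:bounded_var} and \ref{assump:lip_sto_grad} treat both coordinates symmetrically. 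The main obstacle is simply spotting the correct decomposition in the first step so that the cross term vanishes under conditional expectation; this is the whole point of the momentum variance-reduction trick of \citet{cutkosky2019momentum}, and once the decomposition is in place the rest is routine application of Young's inequality and Lipschitz smoothness.
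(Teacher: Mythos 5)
Your proposal is correct and follows essentially the same route as the paper's proof: the same decomposition of $\nabla_\theta F(\theta_{t+1},\omega_{t+1})-p_{t+1}$ into a $(1-\alpha\nu_t^2)$-contracted previous error plus a conditionally mean-zero term, the same vanishing cross term via the tower property, and the same bound on the noise via $\|a+b\|^2\le 2\|a\|^2+2\|b\|^2$, the bounded-variance and Lipschitz-stochastic-gradient assumptions, and $(1-\alpha\nu_t^2)^2\le 1-\alpha\nu_t^2$. Your explicit use of the centering inequality $\EE\|X-\EE[X\mid\mathcal{F}_t]\|^2\le\EE\|X\|^2$ for the $W_t$ piece is exactly the step the paper applies implicitly.
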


\begin{proof} In this proof, we first show the proof for the inequality \eqref{eq:vr_grad_var_bound_1} in detail. Then, we apply a similar analysis for the second inequality \eqref{eq:vr_grad_var_bound_2}, for which we only present a proof sketch. 

We start the proof by decomposing the term $\nabla_\theta F(\theta_{t+1},\omega_{t+1})-p_{t+1}$ on the left-hand side of \eqref{eq:vr_grad_var_bound_1} as follows:
\begin{align*}
&\nabla_\theta F(\theta_{t+1},\omega_{t+1})-p_{t+1}\\ 
&\qquad= \nabla_\theta F(\theta_{t+1},\omega_{t+1})-  (1- \alpha \nu^2_t) (p_t-\nabla_\theta f(\theta_t,\omega_t; \xi_{t+1})) -  \nabla_\theta f(\theta_{t+1},\omega_{t+1}; \xi_{t+1})\\
&\qquad= \alpha \nu_t^2 [\nabla_\theta F(\theta_{t+1},\omega_{t+1}) - \nabla_\theta f(\theta_{t+1},\omega_{t+1}; \xi_{t+1})] +  (1- \alpha \nu^2_t) [\nabla_\theta F(\theta_t,\omega_t)-p_t] \\
&\qquad \quad + (1-\alpha \nu_t^2)\{[\nabla_\theta F(\theta_{t+1},\omega_{t+1}) - \nabla_\theta F(\theta_t,\omega_t) ] - [\nabla_\theta f(\theta_{t+1},\omega_{t+1};\xi_{t+1}) - \nabla_\theta f(\theta_t,\omega_t;\xi_{t+1}) ]  \},
\end{align*}
where we apply the updating rule for $p_{t+1}$ in Algorithm \ref{alg:variance_reduce}, which is
\begin{align*}
p_{t+1} = (1- \alpha \nu^2_t) (p_t-\nabla_\theta f(\theta_t,\omega_t; \xi_{t+1})) +  \nabla_\theta f(\theta_{t+1},\omega_{t+1}; \xi_{t+1}).
\end{align*}
Therefore, the expectation of its norm can be decomposed as
\begin{align}
\begin{aligned}\label{eq:vr_grad_var_bound_al}
&\hspace*{-0.2cm}\EE \|\nabla_\theta F(\theta_{t+1},\omega_{t+1})-p_{t+1}\|^2\\ 
&\hspace*{-0.2cm} =  (1- \alpha \nu^2_t)^2\EE\|\nabla_\theta F(\theta_t,\omega_t)-p_t\|^2 + \underbrace{\EE \big \| \alpha \nu_t^2 [\nabla_\theta F(\theta_{t+1},\omega_{t+1}) - \nabla_\theta f(\theta_{t+1},\omega_{t+1}; \xi_{t+1})] \cdots }_{\textcircled{1} \cdots} \\
&\hspace*{-0.2cm}~~\underbrace{  + (1-\alpha \nu_t^2)\{[\nabla_\theta F(\theta_{t+1},\omega_{t+1}) - \nabla_\theta F(\theta_t,\omega_t) ] - [\nabla_\theta f(\theta_{t+1},\omega_{t+1};\xi_{t+1}) - \nabla_\theta f(\theta_t,\omega_t;\xi_{t+1}) ] \} \big \|^2}_{\cdots \textcircled{1}} \\
&\hspace*{-0.2cm}~~ + \underbrace{ 2 \EE \big \langle (1- \alpha \nu^2_t) [\nabla_\theta F(\theta_t,\omega_t)-p_t ], \ \   \alpha \nu_t^2 [\nabla_\theta F(\theta_{t+1},\omega_{t+1}) - \nabla_\theta f(\theta_{t+1},\omega_{t+1}; \xi_{t+1})] \cdots}_{\textcircled{2} \cdots}\\
&\hspace*{-0.2cm}~~ \underbrace{ + (1-\alpha \nu_t^2)\{[\nabla_\theta F(\theta_{t+1},\omega_{t+1}) -  \nabla_\theta F(\theta_t,\omega_t) ] - [\nabla_\theta f(\theta_{t+1},\omega_{t+1};\xi_{t+1}) - \nabla_\theta f(\theta_t,\omega_t;\xi_{t+1})]  \} \big \rangle}_{\cdots \textcircled{2} },
\end{aligned}
\end{align}
Next, we bound the terms \textcircled{1} and \textcircled{2} respectively. We first consider to bound the term \textcircled{2}. According to the chain rule of expectation, term \textcircled{2} can be equivalently written as $\textcircled{2} = \EE\big [\EE_{\xi_{t+1}} \big\langle \textcircled{3}, \textcircled{4} \big \rangle \big]$, where we replace the long terms in the inner product with \textcircled{3} and \textcircled{4} for abbreviation. Thus, due to 
\begin{align*}
&\EE_{\xi_{t+1}} \nabla_\theta f(\theta_{t+1},\omega_{t+1}; \xi_{t+1}) = \nabla_\theta F(\theta_{t+1},\omega_{t+1}),
\end{align*}
as well as
\begin{align*}
\EE_{\xi_{t+1}}[\nabla_\theta f(\theta_{t+1},\omega_{t+1};\xi_{t+1}) - \nabla_\theta f(\theta_t,\omega_t;\xi_{t+1})] = \nabla_\theta F(\theta_{t+1},\omega_{t+1}) -  \nabla_\theta F(\theta_t,\omega_t) ,
\end{align*}
we have 
\begin{align} \label{eq:vr_grad_var_prod}
\textcircled{2} = \EE\big[\EE_{\xi_{t+1}} \big\langle \textcircled{3},\  \textcircled{4} \big\rangle \big] = \EE\big[ \big\langle \textcircled{3},\  0  \big\rangle \big] = 0. 
\end{align}
Then, we bound the term \textcircled{1} in \eqref{eq:vr_grad_var_bound_al}. By the inequality $\|x+y\|^2 \leq 2\|x\|^2 + 2\|y\|^2$, we can decompose the term \textcircled{1} as
\begin{align*}
 \textcircled{1} &\leq 2 \alpha^2 \nu_t^4 \EE  \| \nabla_\theta F(\theta_{t+1},\omega_{t+1}) - \nabla_\theta f(\theta_{t+1},\omega_{t+1}; \xi_{t+1}) \|^2 \\
&\quad +  2(1-\alpha \nu_t^2)^2 \EE \|\nabla_\theta f(\theta_{t+1},\omega_{t+1};\xi_{t+1}) - \nabla_\theta f(\theta_t,\omega_t;\xi_{t+1})  \|^2 \\
&\leq 2 \alpha^2 \nu_t^4 \sigma^2 +  2(1-\alpha \nu_t^2)^2 \EE \|\nabla_\theta f(\theta_{t+1},\omega_{t+1};\xi_{t+1}) - \nabla_\theta f(\theta_t,\omega_t;\xi_{t+1})  \|^2,
\end{align*}
where the second inequality is due to the bounded variance assumption in Assumption \ref{assump:bounded_var}. Additionally, we bound the term $\EE\|\nabla_\theta f(\theta_{t+1},\omega_{t+1};\xi_{t+1}) - \nabla_\theta f(\theta_t,\omega_t;\xi_{t+1})\|$ above as
\begin{align*}
&\EE\|\nabla_\theta f(\theta_{t+1},\omega_{t+1};\xi_{t+1}) - \nabla_\theta f(\theta_t,\omega_t;\xi_{t+1})\| \\
&\qquad \leq L_f^2 \EE[ \|\theta_{t+1}-\theta_t\|^2 + \|\omega_{t+1}-\omega_t\|^2] \\
&\qquad \leq L_f^2 \nu_t^2 \EE[\|\tilde{\theta}_{t+1}-\theta_t\|^2 + \|\tilde{\omega}_{t+1}-\omega_t\|^2],
\end{align*}
where the first inequality is by the Lipschitz continuity of the stochastic gradient in Assumption \ref{assump:lip_sto_grad} and the second inequality is by the updating rules $\theta_{t+1} = \theta_t + \nu_t (\tilde{\theta}_{t+1} - \theta_t)$ and $\omega_{t+1} = \omega_t + \nu_t (\tilde{\omega}_{t+1} - \omega_t)$ in Algorithm \ref{alg:variance_reduce}. Thus, we can bound \textcircled{1} by
\begin{align} \label{eq:vr_grad_var_norm}
 \textcircled{1} \leq 2 \alpha^2 \nu_t^4 \sigma^2 +  2(1-\alpha \nu_t^2)^2 L_f^2 \nu_t^2 \EE[\|\tilde{\theta}_{t+1}-\theta_t\|^2 + \|\tilde{\omega}_{t+1}-\omega_t\|^2].
\end{align}
Now combining the inequalities \eqref{eq:vr_grad_var_bound_al}, \eqref{eq:vr_grad_var_prod} and \eqref{eq:vr_grad_var_norm} and due to $(1-\alpha \nu_t^2) \leq 1$, then we obtain
\begin{align*}
\EE \|\nabla_\theta F(\theta_{t+1},\omega_{t+1})-p_{t+1}\|^2 &\leq   (1- \alpha \nu^2_t)\EE\|\nabla_\theta F(\theta_t,\omega_t)-p_t\|^2 +  2 \alpha^2 \nu_t^4 \sigma^2 \\
&\quad  +  2 L_f^2 \nu_t^2 \EE[\|\tilde{\theta}_{t+1}-\theta_t\|^2 + \|\tilde{\omega}_{t+1}-\omega_t\|^2],
\end{align*}
which completes the proof of the first inequality \eqref{eq:vr_grad_var_bound_1} in this lemma.

We apply a similar analysis as above to prove the second inequality \eqref{eq:grad_var_bound_2} in this lemma. We also have a similar decomposition
\begin{align*}
&\nabla_\omega F(\theta_{t+1},\omega_{t+1})-d_{t+1}\\ 
&\qquad = \nabla_\omega F(\theta_{t+1},\omega_{t+1})-  (1- \alpha \nu^2_t) (d_t-\nabla_\omega f(\theta_t,\omega_t; \xi_{t+1})) -  \nabla_\omega f(\theta_{t+1},\omega_{t+1}; \xi_{t+1})\\
&\qquad = \alpha \nu_t^2 [\nabla_\omega F(\theta_{t+1},\omega_{t+1}) - \nabla_\omega f(\theta_{t+1},\omega_{t+1}; \xi_{t+1})] +  (1- \alpha \nu^2_t) [\nabla_\omega F(\theta_t,\omega_t)-d_t] \\
&\qquad \quad + (1-\alpha \nu_t^2)\{[\nabla_\omega F(\theta_{t+1},\omega_{t+1}) - \nabla_\omega F(\theta_t,\omega_t) ] - [\nabla_\omega f(\theta_{t+1},\omega_{t+1};\xi_{t+1}) - \nabla_\omega f(\theta_t,\omega_t;\xi_{t+1}) ]  \},
\end{align*}
where we apply the updating rule for $d_{t+1}$ in Algorithm \ref{alg:variance_reduce}, which is
\begin{align*}
d_{t+1} = (1- \alpha \nu^2_t) (d_t-\nabla_\omega f(\theta_t,\omega_t; \xi_{t+1})) +  \nabla_\omega f(\theta_{t+1},\omega_{t+1}; \xi_{t+1}).
\end{align*}
Therefore, the expectation of its norm can be decomposed as
\begin{align*}
&\EE \|\nabla_\omega F(\theta_{t+1},\omega_{t+1})-d_{t+1}\|^2\\ 
&\qquad =  (1- \alpha \nu^2_t)^2\EE\|\nabla_\omega F(\theta_t,\omega_t)-d_t\|^2 + \EE \big \| \alpha \nu_t^2 [\nabla_\omega F(\theta_{t+1},\omega_{t+1}) - \nabla_\omega f(\theta_{t+1},\omega_{t+1}; \xi_{t+1})]\\
&\qquad\quad  + (1-\alpha \nu_t^2)\{[\nabla_\omega F(\theta_{t+1},\omega_{t+1}) - \nabla_\omega F(\theta_t,\omega_t) ] - [\nabla_\omega f(\theta_{t+1},\omega_{t+1};\xi_{t+1}) - \nabla_\omega f(\theta_t,\omega_t;\xi_{t+1}) ] \} \big \|^2  \\
&\qquad\quad + 2 \EE \big \langle (1- \alpha \nu^2_t) [\nabla_\omega F(\theta_t,\omega_t)-d_t ], \ \   \alpha \nu_t^2 [\nabla_\omega F(\theta_{t+1},\omega_{t+1}) - \nabla_\omega f(\theta_{t+1},\omega_{t+1}; \xi_{t+1})] \\
&\qquad\quad + (1-\alpha \nu_t^2)\{[\nabla_\omega F(\theta_{t+1},\omega_{t+1}) -  \nabla_\omega F(\theta_t,\omega_t) ] - [\nabla_\omega f(\theta_{t+1},\omega_{t+1};\xi_{t+1}) - \nabla_\omega f(\theta_t,\omega_t;\xi_{t+1})]  \} \big \rangle.
\end{align*}
By proving the similarly bounds as \eqref{eq:vr_grad_var_prod} and \eqref{eq:vr_grad_var_norm} and applying them to the above inequality, we eventually have
\begin{align*}
\EE \|\nabla_\omega F(\theta_{t+1},\omega_{t+1})-d_{t+1}\|^2 &\leq   (1- \alpha \nu^2_t)\EE\|\nabla_\omega F(\theta_t,\omega_t)-d_t\|^2 +  2 \alpha^2 \nu_t^4 \sigma^2 \\
&\quad  +  2 L_f^2 \nu_t^2 \EE[\|\tilde{\theta}_{t+1}-\theta_t\|^2 + \|\tilde{\omega}_{t+1}-\omega_t\|^2].
\end{align*}
This completes the proof.
\end{proof}

\subsection{Proof of Theorem \ref{thm:variance_reduce}} \label{sec:detail_proof_variance_reduce}


\begin{proof} We assume that the step size is of the form $\nu_t = a/(t+b)^{1/3}$ where $a = 1/3$. We interpret the parameter settings in Theorem \ref{thm:variance_reduce} as follows:
\begin{align}
\eta \leq \frac{\mu}{6L_f^2} < \frac{1}{4L_f}, \quad \text{ and } \nu_t \leq \frac{a}{b^{1/3}} \leq \min \Big \{ \frac{1}{19},\ \  \frac{\mu}{18\gamma L_f^2} \Big \}, \label{eq:vr_nu_cond}
\end{align}
with $L_f \geq \mu > 0$. Thus, we can apply Lemmas \ref{lem:vr_primal_J_bound}, \ref{lem:vr_decomp_omega_opt}, and \ref{lem:vr_grad_var_bound} in the following proof of Algorithm \ref{alg:variance_reduce}. Then, we proceed to the main proof.

By Lemma \ref{lem:vr_primal_J_bound}, taking expectation on both sides, we have 
\begin{align}
\begin{aligned}\label{eq:vr_thm_bound_1}
\EE J(\theta_{t+1})- \EE J(\theta_t)  & \leq -  \frac{3\nu_t}{4\gamma} \EE\|\tilde{\theta}_{t+1} - \theta_t \|^2  + 2L_f^2 \gamma \nu_t \EE\|\omega_t - \omega^*(\theta_t)\|^2 \\
&\quad + 4\nu_t \gamma \EE \|\nabla_\theta F(\theta_t, \omega_t) - p_t\|^2,
\end{aligned}
\end{align}
where we can have the upper bound of $\EE\| \tilde{\theta}_{t+1} - \theta_t\|^2$ by moving it to the left-hand side.  The above inequality indicates that we need to further establish upper bounds of the terms $\EE \|\omega_t - \omega^*(\theta_t)\|^2$ and $\EE\|p_t - \nabla_\theta F(\theta_t, \omega_t)\|^2$.

According to Lemma \ref{lem:vr_decomp_omega_opt}, we have the following inequality providing upper bounds for the term  $\EE \|\omega_{t+1} - \omega^*(\theta_{t+1})\|^2$, which is
\begin{align*}
\EE\|\omega_{t+1} - \omega^*(\theta_{t+1})\|^2 &\leq \Big(1-\frac{\mu\eta\nu_t}{4}\Big) \EE \|\omega_t - \omega^*(\theta_t)\|^2 - \frac{3\nu_t}{4}\EE\|\tilde{\omega}_{t+1} - \omega_t \|^2 \\
&\quad + \frac{75\eta \nu_t}{16\mu} \EE\|d_t - \nabla F(\theta_t, \omega_t)\|^2 \nonumber  + \frac{75L_\omega^2 \nu_t }{16\mu\eta} \EE \|\tilde{\theta}_{t+1}-\theta_t\|^2.
\end{align*}
which shows a contraction of $\EE \|\omega_{t+1} - \omega^*(\theta_{t+1})\|^2$ plus some noise terms as well as a deduction of the term $\EE\|\tilde{\omega}_{t+1} - \omega_t\|^2$ such that this term appearing in any later inequalities can be eliminated in the end. 

We further apply similar derivation for \eqref{eq:thm_bound_2} in the proof of Theorem \ref{thm:single_time} to the above inequality. Multiplying both sides of the above inequality by $10L_f^2 \gamma /(\mu\eta)$ and rearranging terms, we obtain
\begin{align}
\begin{aligned}\label{eq:vr_thm_bound_2}
&\frac{10L_f^2 \gamma}{\mu\eta} \big( \EE \|\omega_{t+1} - \omega^*(\theta_{t+1})\|^2 - \EE \|\omega_t - \omega^*(\theta_t)\|^2 \big) \\
&\qquad\leq -\frac{5L_f^2 \gamma \nu_t}{2} \EE \|\omega_t - \omega^*(\theta_t)\|^2 - \frac{15L_f^2  \nu_t \gamma }{2\mu\eta} \EE\|\tilde{\omega}_{t+1} - \omega_t \|^2\\
&\qquad \quad  + \frac{375 L_f^2  \nu_t \gamma }{8\mu^2} \EE \|d_t - \nabla F(\theta_t, \omega_t)\|^2  + \frac{375 L_f^2 L_\omega^2 \nu_t \gamma }{8\mu^2\eta^2}  \EE \|\tilde{\theta}_{t+1}-\theta_t\|^2.
\end{aligned}
\end{align}
Moreover, we define 
\begin{align*}
R_t := J(\theta_t) - J^* +  \frac{10L_f^2 \gamma}{\mu\eta} \|\omega_t - \omega^*(\theta_t)\|^2 , \quad  \forall t\geq 0,
\end{align*}
such that $R_t \geq 0$ with $J^*>-\infty$ being the minimal value of $J(\theta)$. Then, summing both sides of the two inequalities \eqref{eq:vr_thm_bound_1} and \eqref{eq:vr_thm_bound_2}, we have
\begin{align*}
&\EE[R_{t+1}-R_t] \\
&\qquad \leq- \Big ( \frac{3\nu_t}{4\gamma} - \frac{375 \nu_t L_f^2 L_\omega^2 \gamma}{8\mu^2 \eta^2}\Big) \EE\|\tilde{\theta}_{t+1} - \theta_t\|^2 - \frac{15 L_f^2 \nu_t \gamma}{2\mu\eta} \EE\|\tilde{\omega}_{t+1} - \omega_t \|^2 -\frac{L_f^2 \gamma \nu_t}{2} \EE \|\omega_t - \omega^*(\theta_t)\|^2 \\
&\qquad \quad + \frac{375 \nu_t L_f^2 \gamma}{8\mu^2} \EE \|d_t - \nabla_\omega F(\theta_t, \omega_t)\|^2  + 4  \nu_t \gamma \EE \|p_t - \nabla_\theta F(\theta_t, \omega_t) \|^2.
\end{align*}
According to the conditions that $\eta \geq 9 L_f^2  \gamma/\mu^2$ and $L_\omega = L_f/\mu$ by Lemma \ref{lem:omega_lip}, we have
\begin{align*}
- \Big ( \frac{3\nu_t}{4\gamma} - \frac{375 \nu_t L_f^2 L_\omega^2 }{8\nu^2 \eta}\Big) \leq -\frac{\nu_t}{8\gamma},
\end{align*}
such that 
\begin{align}
\begin{aligned}\label{eq:vr_thm_bound_3}
\EE[R_{t+1}-R_t] &\leq- \frac{\nu_t}{8\gamma} \EE\|\tilde{\theta}_{t+1} - \theta_t\|^2 - \frac{15 L_f^2 \nu_t \gamma}{2\mu\eta} \EE\|\tilde{\omega}_{t+1} - \omega_t \|^2 -\frac{L_f^2 \gamma \nu_t}{2} \EE \|\omega_t - \omega^*(\theta_t)\|^2 \\
&\quad+ \frac{375 L_f^2 \nu_t \gamma}{8\mu^2} \EE \|d_t - \nabla_\omega F(\theta_t, \omega_t)\|^2  + 4 \gamma \nu_t  \EE \|p_t - \nabla_\theta F(\theta_t, \omega_t) \|^2.
\end{aligned}    
\end{align}
The inequality \eqref{eq:vr_thm_bound_3} shows that we need to further bound $\EE \|d_t - \nabla_\omega F(\theta_t, \omega_t)\|^2$ and $\EE \|p_t - \nabla_\theta F(\theta_t, \omega_t) \|^2$ on the right-hand side. By the result of Lemma \ref{lem:vr_grad_var_bound}, we can have contraction of the two terms plus some noise terms, which are 
\begin{align}
\begin{aligned} \label{eq:vr_thm_bound_4}
\EE \|\nabla_\theta F(\theta_{t+1},\omega_{t+1})-p_{t+1}\|^2 &\leq   (1- \alpha \nu^2_t) \EE\|\nabla_\theta F(\theta_t,\omega_t)-p_t\|^2 +  2 \alpha^2 \nu_t^4 \sigma^2 \\
&\quad  +  2 L_f^2 \nu_t^2 \EE[\|\tilde{\theta}_{t+1}-\theta_t\|^2 + \|\tilde{\omega}_{t+1}-\omega_t\|^2],
\end{aligned}
\end{align}
\vspace{-0.5cm}
\begin{align}
\begin{aligned}\label{eq:vr_thm_bound_5}
\EE \|\nabla_\omega F(\theta_{t+1},\omega_{t+1})-d_{t+1}\|^2 &\leq   (1- \alpha \nu^2_t) \EE\|\nabla_\omega F(\theta_t,\omega_t)-d_t\|^2 +  2 \alpha^2 \nu_t^4 \sigma^2 \\
&\quad  +  2  L_f^2 \nu_t^2 \EE[\|\tilde{\theta}_{t+1}-\theta_t\|^2 + \|\tilde{\omega}_{t+1}-\omega_t\|^2].
\end{aligned}
\end{align}  
Multiplying both sides of \eqref{eq:vr_thm_bound_4} by $\nu_t^{-1}$ and also subtracting $ \nu_{t-1}^{-1}\EE \|\nabla_\theta F(\theta_t,\omega_t)-p_t\|^2$ from both sides, we have
\begin{align*}
&\frac{1}{\nu_t}\EE \|\nabla_\theta F(\theta_{t+1},\omega_{t+1})-p_{t+1}\|^2 - \frac{1}{\nu_{t-1}}\EE \|\nabla_\theta F(\theta_t,\omega_t)-p_t\|^2\\ 
&\qquad \leq   \Big(\frac{1}{\nu}_t- \alpha \nu_t-\frac{1}{\nu_{t-1}} \Big)\EE \|\nabla_\theta F(\theta_t,\omega_t)-p_t\|^2
+ 2\alpha^2 \nu_t^3 \sigma^2 + 2 L_f^2 \nu_t  \EE[\|\tilde{\theta}_{t+1} - \theta_t  \|^2 + \|\tilde{\omega}_{t+1}-\omega_t  \|^2].    
\end{align*}
Here we need to simplify the coefficient $\nu_t^{-1} - \nu_{t-1}^{-1}$ shown as below
\begin{align} 
\begin{aligned}\label{eq:inverse_minus}
\frac{1}{\nu_t} - \frac{1}{\nu_{t-1}} =& \frac{1}{a} (b+t)^{1/3} - \frac{1}{a} (b+t-1)^{1/3}\\
\leq& \frac{1}{3a(t+b-1)^{2/3}}  =  \frac{2^{2/3}}{3a [2(t+b-1)]^{2/3}}     \\
\leq& \frac{2^{2/3}}{3a^3}\frac{a^2}{(b+t)^{2/3}} = \frac{2^{2/3}}{3a^3} \nu_t^2,
\end{aligned}
\end{align}
where the first inequality is due to $(x+y)^{1/3}- x^{1/3} \leq yx^{-2/3} / 3$ and the second inequality is by the condition $b\geq 1$ when we let $t \geq 1$. In addition, if we set $ \nu_{-1} = \nu_0$, namely, $t=0$, the above inequality will also hold. 

Thus, by setting $\alpha = 6$,
we let 
\begin{align*}
\frac{1}{\nu_t} - \frac{1}{\nu_{t-1}} - \alpha \nu_t \leq -6 \nu_t + \frac{2^{2/3} \nu_t^2}{3a^3} \leq -\frac{21}{4} \nu_t,
\end{align*}
which requires 
\begin{align*}
\nu_t\leq \frac{18 a^3}{2^{11/3}}, \quad \text{ and } 6 \nu_t^2 \leq 1,
\end{align*} 
that can be guaranteed by the conditions
\begin{align*}
a = \frac{1}{3},\quad  \text{ and } \ \nu_t\leq \frac{1}{19},
\end{align*}
as shown in \eqref{eq:vr_nu_cond}. Hence, we have
\begin{align}
\begin{aligned} \label{eq:vr_thm_bound_6_al}
&\frac{1}{\nu_t}\EE \|\nabla_\theta F(\theta_{t+1},\omega_{t+1})-p_{t+1}\|^2 - \frac{1}{\nu_{t-1}}\EE \|\nabla_\theta F(\theta_t,\omega_t)-p_t\|^2  \\ 
&\qquad \leq   - \frac{21}{4}\nu_t \EE \|\nabla_\theta F(\theta_t,\omega_t)-p_t\|^2
+ 72 \nu_t^3 \sigma^2 +   2\nu_t L_f^2 \EE[\|\tilde{\theta}_{t+1} - \theta_t  \|^2 + \|\tilde{\omega}_{t+1}-\omega_t  \|^2].
\end{aligned}
\end{align}
In a similar way, by \eqref{eq:vr_thm_bound_5}, we also have
\begin{align}
\begin{aligned} \label{eq:vr_thm_bound_7_al}
&\frac{1}{\nu_t}\EE \|\nabla_\omega F(\theta_{t+1},\omega_{t+1})-d_{t+1}\|^2 - \frac{1}{\nu_{t-1}}\EE \|\nabla_\omega F(\theta_t,\omega_t)-d_t\|^2  \\ 
&\qquad \leq   - \frac{21}{4}  \nu_t \EE \|\nabla_\omega F(\theta_t,\omega_t)-d_t\|^2
+ 72 \nu_t^3 \sigma^2 + 2 \nu_t L_f^2 \EE[\|\tilde{\theta}_{t+1} - \theta_t  \|^2 + \|\tilde{\omega}_{t+1}-\omega_t  \|^2].
\end{aligned}
\end{align}
Multiplying both sides of \eqref{eq:vr_thm_bound_6_al} and \eqref{eq:vr_thm_bound_7_al} by $2\gamma/(\mu\eta)$, we have the following two inequalities 
\begin{align}
\begin{aligned} \label{eq:vr_thm_bound_6}
&\frac{2\gamma}{\mu\eta\nu_t}\EE \|\nabla_\theta F(\theta_{t+1},\omega_{t+1})-p_{t+1}\|^2 - \frac{2\gamma}{\mu\eta\nu_{t-1}}\EE \|\nabla_\theta F(\theta_t,\omega_t)-p_t\|^2  \\ 
&\qquad \leq   - \frac{21\gamma \nu_t}{2\mu\eta} \EE \|\nabla_\theta F(\theta_t,\omega_t)-p_t\|^2
+ \frac{144  \nu_t^3 \sigma^2\gamma}{\mu\eta} +   \frac{4\nu_t L_f^2\gamma}{\mu\eta} \EE[\|\tilde{\theta}_{t+1} - \theta_t  \|^2 + \|\tilde{\omega}_{t+1}-\omega_t  \|^2],
\end{aligned}
\end{align}
\vspace{-0.5cm}
\begin{align}
\begin{aligned} \label{eq:vr_thm_bound_7}
&\frac{2\gamma}{\mu\eta\nu_t}\EE \|\nabla_\omega F(\theta_{t+1},\omega_{t+1})-d_{t+1}\|^2 - \frac{2\gamma}{\mu\eta\nu_{t-1}}\EE \|\nabla_\omega F(\theta_t,\omega_t)-d_t\|^2  \\ 
&\qquad\leq   - \frac{21\gamma \nu_t}{2\mu\eta}\EE \|\nabla_\omega F(\theta_t,\omega_t)-d_t\|^2
+ \frac{144 \nu_t^3 \sigma^2\gamma}{\mu\eta}  + \frac{4\nu_t L_f^2\gamma}{\mu\eta} \EE[\|\tilde{\theta}_{t+1} - \theta_t  \|^2 + \|\tilde{\omega}_{t+1}-\omega_t  \|^2].
\end{aligned}
\end{align}
Then, we define the Lyapunov function as
\begin{align*}
S_t := R_t + \frac{2\gamma}{\mu\eta\nu_{t-1}} \|\nabla_\omega F(\theta_t,\omega_t)-d_t\|^2 + \frac{2\gamma}{\mu\eta\nu_{t-1}} \|\nabla_\theta F(\theta_t,\omega_t)-p_t\|^2 , \quad \forall t \geq 0,
\end{align*}
where we have $S_t \geq 0$ since $R_t\geq 0$. Summing up \eqref{eq:vr_thm_bound_3}, \eqref{eq:vr_thm_bound_6}, and \eqref{eq:vr_thm_bound_7}, we have the following inequality
\begin{align*}
&\EE[S_{t+1}-S_t] \\
&\qquad \leq-\Big( \frac{\nu_t}{8\gamma} - \frac{L_f^2 \nu_t\gamma}{\mu\eta}\Big) \EE\|\tilde{\theta}_{t+1} - \theta_t\|^2  - \frac{\gamma \nu_t L_f^2 }{2}\EE \|\omega_t - \omega^*(\theta_t)\|^2 + \frac{36\times 8 \nu_t^3 \sigma^2\gamma}{\mu\eta} \\
&\qquad\quad-\Big(\frac{21\gamma \nu_t}{2\mu\eta}-  \frac{375 \nu_t L_f^2 \gamma}{8\mu^2} \Big) \EE \|d_t - \nabla_\omega F(\theta_t, \omega_t)\|^2  - \Big(\frac{21\gamma\nu_t}{2\mu\eta} - 4 \gamma \nu_t \Big)  \EE \|p_t - \nabla_\theta F(\theta_t, \omega_t) \|^2.  
\end{align*}
where the term $\EE \|\tilde{\omega}_{t+1} - \omega_t\|^2$ has been dropped due to its negative coefficient. We simplify the coefficients according to the following inequalities
\begin{align*}
&-\Big( \frac{\nu_t}{8\gamma} - \frac{8L_f^2 \nu_t\gamma}{\mu\eta}\Big) \leq -\frac{\nu_t}{16\gamma}, ~ -\Big(\frac{21\gamma\nu_t}{2\mu\eta}-  \frac{375 \nu_t L_f^2 \gamma}{8\mu^2} \Big) \leq 0, ~ \text{ and }
- \Big(\frac{21\gamma\nu_t}{2\mu\eta} - 4 \gamma \nu_t \Big) \leq -\frac{\gamma\nu_t}{\mu\eta},
\end{align*}
which are guaranteed by the conditions of this theorem that $ \eta \leq \mu /(6L_f^2)$ and $\eta \geq 9 \gamma L_f^2 /\mu^2 $ with $L_f\geq \mu > 0$. Therefore, we have
\begin{align}
\begin{aligned}\label{eq:vr_thm_bound_al_1}
&\EE[S_{t+1}-S_t] \leq-\frac{\nu_t}{16\gamma} \EE\|\tilde{\theta}_{t+1} - \theta_t\|^2  - \frac{\gamma \nu_t L_f^2}{2}\EE \|\omega_t - \omega^*(\theta_t)\|^2\\
&\qquad \qquad \quad \qquad -\frac{\gamma\nu_t}{\mu\eta}\EE \|p_t - \nabla_\theta F(\theta_t, \omega_t) \|^2+ \frac{36\times 8 \nu_t^3 \sigma^2\gamma}{\mu\eta}.
\end{aligned}
\end{align}
We are in position to prove the convergence of Algorithm \ref{alg:variance_reduce} with the inequality \eqref{eq:vr_thm_bound_al_1}. Taking summation on both sides of \eqref{eq:vr_thm_bound_al_1} from $t=0$ to $T-1$ and rearranging the terms lead to 
\begin{align*}
&\sum_{t=0}^{T-1} \frac{\nu_t \gamma}{16} \left(\frac{1}{\gamma^2} \EE \|\tilde{\theta}_{t+1} - \theta_t\|^2 + \frac{16}{\eta \nu_t}  \EE \|\nabla_\theta F(\theta_t,\omega_t)-p_t\|^2 + 8L_f^2 \EE \|\omega_t - \omega^*(\theta_t)\|^2 \right)\\
&\qquad \leq  \frac{288\sigma^2 \gamma\sum_{t=0}^{T-1}\nu_t^3 }{\mu\eta } + \EE[S_0-S_T]\leq  \frac{288\sigma^2 \gamma\sum_{t=0}^{T-1}\nu_t^3 }{\mu\eta } + S_0,
\end{align*}
where we use the fact that $S_T \geq 0$ in the last inequality. Letting $\{\nu_t\}_{t\geq 0}$ be a non-increasing sequence, we know $\nu_t \geq \nu_T$ for any $0 \leq  t \leq T$.  Since we have $(\eta \nu_t)^{-1}\geq 1$ by the settings of $\eta$, $\gamma$ and $\nu_t$ shown above, we obtain
\begin{align*}
&\frac{\nu_T \gamma}{16} \sum_{t=0}^{T-1}  \left(\frac{1}{\gamma^2}\EE \|\tilde{\theta}_{t+1} - \theta_t\|^2 + \EE \|p_t - \nabla_\theta F(\theta_t, \omega_t)\|^2 + L_F^2 \EE \|\omega_t - \omega^*(\theta_t)\|^2 \right)\\
&\qquad \leq \sum_{t=0}^{T-1} \frac{\nu_t \gamma}{16} \left(\frac{1}{\gamma^2} \EE \|\tilde{\theta}_{t+1} - \theta_t\|^2 + \frac{16}{\eta \nu_t}  \EE \|\nabla_\theta F(\theta_t,\omega_t)-p_t\|^2 + 8L_f^2 \EE \|\omega_t - \omega^*(\theta_t)\|^2  \right)\\
&\qquad  \leq \frac{288\sigma^2 \gamma\sum_{t=0}^{T-1}\nu_t^3 }{\mu\eta } + S_0.
\end{align*}
Multiplying both sides by $16/(T\nu_T \gamma)$ yields
\begin{align}
\begin{aligned}\label{eq:vr_thm_bound_al_3}
&\frac{1}{T} \sum_{t=0}^{T-1}\left (\frac{1}{\gamma^2} \EE \|\tilde{\theta}_{t+1} - \theta_t\|^2 + \frac{16}{\eta \nu_t}  \EE \|\nabla_\theta F(\theta_t,\omega_t)-p_t\|^2 + 8L_f^2 \EE \|\omega_t - \omega^*(\theta_t)\|^2 \right) \\
&\qquad \leq \frac{288 \times 16   \sigma^2 }{\mu\eta T \nu_T  } \sum_{t=0}^{T-1} \nu_t^3 + \frac{16S_0}{\gamma \nu_T T}. 
\end{aligned}
\end{align}
According to the setting of the step size that $\nu_t = a/(t+b)^{1/3}$ with $a = 1/3$ and $b \geq  \max \{ (8\gamma L_f^2/\mu)^3, 256 \}$, we can bound the two terms on the right-hand side of \eqref{eq:vr_thm_bound_al_3} in the following way
\begin{align}
\begin{aligned}
\label{eq:vr_thm_bound_al_4}
\frac{16S_0}{\gamma \nu_T T} + \frac{288 \times 16   \sigma^2 }{\mu\eta T \nu_T  } \sum_{t=0}^{T-1} \nu_t^3  &\leq \frac{50 S_0 (T + b)^{1/3}}{\gamma  T} +   \frac{512\sigma^2}{\mu\eta T } \int_{0}^T \frac{1}{t+b} \mathrm{d} t \\
&\leq \frac{50 S_0 (T + b)^{1/3}}{\gamma  T} +   \frac{512 \sigma^2}{\mu\eta T }  (T+b)^{1/3} \log(T+b)\\
&\leq \Big( \frac{50 S_0 }{\gamma  } +   \frac{512 \sigma^2}{\mu\eta }  \Big ) \Big ( \frac{1}{T^{2/3}} + \frac{b^{1/3}}{T} \Big ) \log(T+b) ,
\end{aligned}
\end{align}
where the last inequality is due to $(x+y)^{1/3}\leq x^{1/3} + y^{1/3}$ for $x,y \geq 0$.

Finally, we combine \eqref{eq:vr_thm_bound_al_3} and \eqref{eq:vr_thm_bound_al_4} and obtain
\begin{align*}
&\frac{1}{T} \sum_{t=0}^{T-1} \Big [\frac{1}{\gamma^2} \EE \|\nabla_\theta F(\theta_t,\omega_t)-p_t\|^2 + \EE \|\tilde{\theta}_{t+1} - \theta_t\|^2  + L_F^2 \EE \|\omega_t - \omega^*(\theta_t)\|^2 \Big ] \\
&\qquad \leq \frac{\overline{C}_1  \log (T+b) }{T^{2/3}} + \frac{\overline{C}_2  \log (T+b)}{T},
\end{align*}
where $\overline{C}_1 = 50 S_0/\gamma +  512 \sigma^2/(\mu\eta)$ and $\overline{C}_2 = \overline{C}_1 b^{1/3}$. 
Moreover, by Jensen's inequality, there is
\begin{align*}
&\frac{1}{T} \sum_{t=0}^{T-1} \EE \left (\frac{1}{\gamma} \|\tilde{\theta}_{t+1} - \theta_t\| +  \|p_t - \nabla_\theta F(\theta_t, \omega_t)\| + L_f  \|\omega_t - \omega^*(\theta_t)\| \right ) \\
&\qquad \leq  \left [ \frac{3}{T} \sum_{t=0}^{T-1} \EE \left(\frac{1}{\gamma^2} \|\tilde{\theta}_{t+1} - \theta_t\|^2 +  \|p_t - \nabla_\theta F(\theta_t, \omega_t)\|^2 + L_f^2  \|\omega_t - \omega^*(\theta_t)\|^2  \right ) \right]^{1/2}.
\end{align*}
Thus, we eventually obtain
\begin{align*}
&\frac{1}{T} \sum_{t=0}^{T-1} \EE \left (\frac{1}{\gamma} \|\tilde{\theta}_{t+1} - \theta_t\| +  \|p_t - \nabla_\theta F(\theta_t, \omega_t)\| + L_f  \|\omega_t - \omega^*(\theta_t)\| \right) \\
&\qquad \leq \frac{C_1  \log (T+b) }{T^{1/3}} + \frac{C_2 \log (T+b) }{\sqrt{T}} = \widetilde{O}\left( \frac{1}{T^{1/3}} \right), 
\end{align*}
where $C_1 = [150 S_0/\gamma +  1536 \sigma^2/(\mu\eta)]^{1/2}$ and $C_2 = C_1 b^{1/6}$. This completes the proof.
\end{proof} 

\subsection{Proof of Theorem \ref{thm:variance_reduce_fix}}\label{sec:detail_proof_variance_reduce_fix}

\begin{proof} The analysis in Section \ref{sec:detail_proof_variance_reduce} is for non-increasing step size $\nu_t$ satisfying the conditions in Theorem \ref{thm:variance_reduce} which are the same as in this theorem. Then, the proof before \eqref{eq:vr_thm_bound_al_3} can be adapted here for a fixed step size. The barrier may be that the analysis from \eqref{eq:inverse_minus} to \eqref{eq:vr_thm_bound_7_al} is for the decaying step size setting. In this proof, we modify this part by letting $\nu_t = \nu, \forall t\geq 0$, and 
\begin{align*}
\frac{1}{\nu_t} - \frac{1}{\nu_{t-1}} - \alpha \nu_t = \frac{1}{\nu} - \frac{1}{\nu} - \alpha \nu  = -6 \nu < -\frac{21}{4} \nu,
\end{align*} 
where we set $\alpha = 6$. This shows that we can still use the proof in Section \ref{sec:detail_proof_variance_reduce}. Thus, we start our proof from \eqref{eq:vr_thm_bound_al_4} with replacing $\nu_t$ by the fixed step size $\nu$, which can be rewritten as
\begin{align}
\begin{aligned}
&\frac{1}{T} \sum_{t=0}^{T-1}\left (\frac{1}{\gamma^2} \EE \|\tilde{\theta}_{t+1} - \theta_t\|^2 + \frac{16}{\eta \nu_t}  \EE \|\nabla_\theta F(\theta_t,\omega_t)-p_t\|^2 + 8L_f^2 \EE \|\omega_t - \omega^*(\theta_t)\|^2 \right) \\
&\qquad \leq \frac{288 \times 16   \sigma^2 }{\mu\eta }  \nu^2 + \frac{16S_0}{\gamma \nu T}. 
\end{aligned}
\end{align}
Furthermore, we bound the right-hand side of the above inequality as
\begin{align}
\begin{aligned}
\frac{288 \times 16   \sigma^2 }{\mu\eta }  \nu^2 + \frac{16S_0}{\gamma \nu T}  &\leq \frac{512\sigma^2}{\mu\eta (T+b)^{2/3} }   +  \frac{50 S_0 (T + b)^{1/3}}{\gamma  T}\\
&\qquad \leq \frac{50 S_0}{\gamma  T^{2/3}} + \frac{512\sigma^2}{\mu\eta (T+b)^{2/3} } +  \frac{50 S_0  b^{1/3}}{\gamma  T},
\end{aligned}
\end{align}
where we use the setting of the step size $\nu=1/[3(T+b)^{1/3}]$, and the last inequality is due to $(x+y)^{1/3}\leq x^{1/3} + y^{1/3}$ for $x,y \geq 0$. Then, we obtain
\begin{align*}
&\frac{1}{T} \sum_{t=0}^{T-1} \Big [\frac{1}{\gamma^2} \EE \|\nabla_\theta F(\theta_t,\omega_t)-p_t\|^2 + \EE \|\tilde{\theta}_{t+1} - \theta_t\|^2  + L_F^2 \EE \|\omega_t - \omega^*(\theta_t)\|^2 \Big ] \\
&\qquad \leq  \frac{50 S_0}{\gamma  T^{2/3}} + \frac{512\sigma^2}{\mu\eta (T+b)^{2/3} } +  \frac{50 S_0  b^{1/3}}{\gamma  T}.
\end{align*}
Due to Jensen's inequality, we have
\begin{align*}
&\frac{1}{T} \sum_{t=0}^{T-1} \EE \left (\frac{1}{\gamma} \|\tilde{\theta}_{t+1} - \theta_t\| +  \|p_t - \nabla_\theta F(\theta_t, \omega_t)\| + L_f  \|\omega_t - \omega^*(\theta_t)\| \right ) \\
&\qquad \leq  \left [ \frac{3}{T} \sum_{t=0}^{T-1} \EE \left(\frac{1}{\gamma^2} \|\tilde{\theta}_{t+1} - \theta_t\|^2 +  \|p_t - \nabla_\theta F(\theta_t, \omega_t)\|^2 + L_f^2  \|\omega_t - \omega^*(\theta_t)\|^2  \right ) \right]^{1/2}.
\end{align*}
Thus, we eventually obtain
\begin{align*}
&\frac{1}{T} \sum_{t=0}^{T-1} \EE \left (\frac{1}{\gamma} \|\tilde{\theta}_{t+1} - \theta_t\| +  \|p_t - \nabla_\theta F(\theta_t, \omega_t)\| + L_f  \|\omega_t - \omega^*(\theta_t)\| \right) \leq O \left ( \frac{1}{T^{1/3}}  \right), 
\end{align*}
which leads to $T_{\varepsilon} \geq O(\varepsilon^{-3})$ sample complexity to achieve an $\varepsilon$ error.  This completes the proof.
\end{proof}

\end{document}